\documentclass{article}

\usepackage{microtype}
\usepackage{graphicx}
\usepackage{subfigure}
\usepackage{booktabs} %

\usepackage{hyperref}
\usepackage{dsfont}
\usepackage{xfrac}
\usepackage{nicefrac}

\usepackage[accepted]{icml2025}

\usepackage{amsmath}
\usepackage{amssymb}
\usepackage{mathtools}
\usepackage{amsthm}

\usepackage[capitalize,noabbrev]{cleveref}

\theoremstyle{plain}
\newtheorem{theorem}{Theorem}[section]
\newtheorem{proposition}[theorem]{Proposition}
\newtheorem{lemma}[theorem]{Lemma}
\newtheorem{corollary}[theorem]{Corollary}
\theoremstyle{definition}
\newtheorem{definition}[theorem]{Definition}
\newtheorem{assume}[theorem]{Assumption}
\newtheorem{example}[theorem]{Example}
\theoremstyle{remark}

\usepackage{wrapfig}

\DeclareMathOperator{\spn}{Span}
\renewcommand{\vec}[1]{\boldsymbol{\mathbf{#1}}}

\DeclareMathOperator{\nearoptd}{\eta}

\DeclareMathOperator{\diag}{diag}

\newcommand{\RR}{\mathbb{R}}
\newcommand{\T}{\top}

\newcommand{\Sq}{\mathbb{H}_1}

\newcommand{\norm}[1]{\left\lVert#1\right\rVert}

\newcommand{\flr}[1]{\left\lfloor #1 \right\rfloor}
\newcommand{\cel}[1]{\left\lceil #1 \right\rceil}
\newcommand{\istar}[2]{\mathcal{#1}^*_{#2}}

\newcommand{\volume}[1]{\mathrm{Vol}(#1)}
\newcommand{\area}[1]{\mathrm{Area}(#1)}
\newcommand{\len}[1]{\mathrm{len}(#1)}

\newcommand{\kappaone}{3^{\lceil \log_3{\sqrt{m}\alpha}\rceil }}
\newcommand{\kappatwo}{\frac{\sqrt{m}}{\sqrt{1 - \dist^2{(\Vec{A}, \hat{\Vec{A}})}}}}
\newcommand{\betacon}{1 + \frac{d-m}{2m-1}}

\newcommand{\heighttwo}{m + m\cel{\log_3{\frac{2 \sqrt{m}\kappa_2}{\kappa_2 - 1}}}}

\newcommand{\nuprimetwo}{\max\{\nu_{\mathcal{A}}, l_g\}}
\newcommand{\heightfour}{-\flr{\log_{\rho_{\mathcal{A}}}(\nuprimetwo \rho_{\mathcal{A}}^{-\tilde{h}_2}) - \log_{\rho_{\mathcal{A}}}(\nu_{\mathcal{A}})}}

\DeclareMathOperator{\dist}{dist}

\DeclareMathOperator*{\argmax}{arg\,max}
\DeclareMathOperator*{\argmin}{arg\,min}

\usepackage[inline, final, nomargin]{fixme}
\fxsetup{theme=color}
\FXRegisterAuthor{at}{atenv}{\color{red}AT}
\FXRegisterAuthor{rs}{rsenv}{\color{blue}RS}
\FXRegisterAuthor{rv}{rvenv}{\color{orange}RV}
\usepackage{hyperref}       %

\usepackage[disable,textsize=tiny]{todonotes}
\usepackage{amsmath}

\icmltitlerunning{Adaptive Partitioning Schemes for Optimistic Optimization}

\begin{document}

\twocolumn[
\icmltitle{Adaptive Partitioning Schemes for Optimistic Optimization}

\begin{icmlauthorlist}
\icmlauthor{Raja Sunkara}{yyy}
\icmlauthor{Ardhendu Tripathy}{comp}
\end{icmlauthorlist}

\icmlaffiliation{yyy}{Missouri University of Science {\&} Technology, Rolla MO, US}
\icmlaffiliation{comp}{OpsCanvas, Alexandria VA, US}

\icmlcorrespondingauthor{Raja Sunkara}{rs5cq@mst.edu}
\icmlcorrespondingauthor{Ardhendu Tripathy}{ardhendu@opscanvas.com}

\icmlkeywords{Machine Learning, ICML}

\vskip 0.3in
]

\printAffiliationsAndNotice{}  %

\begin{abstract}
Applications such as engineering design often require us to optimize a black-box function, i.e., a system whose inner processing is not analytically known and whose gradients are not available. Practitioners often have a fixed budget for the number of function evaluations and the performance of an optimization algorithm is measured by its simple regret. In this paper, we study the class of ``Optimistic Optimization'' algorithms for black-box optimization that use a partitioning scheme for the domain. We develop algorithms that learn a good partitioning scheme and use flexible surrogate models such as neural networks in the optimization procedure. For multi-index functions on an $m$-dimensional subspace within $d$ dimensions, our algorithm attains $\tilde{O}(n^{-\beta / d})$ regret, where $\beta = 1 + \frac{d-m}{2m-1}$, as opposed to $\tilde{O}(n^{-1/d})$ for SequOOL, a state-of-the-art optimistic optimization algorithm. We use our approach to improve the quality of Activation-aware Weight Quantization (AWQ) of the OPT-1.3B model, achieving $\sim10$\% improvement in performance relative to the best possible unquantized model.

\end{abstract}
\section{INTRODUCTION AND MOTIVATION}
Optimization of black-box functions is often carried out by a class of algorithms called ``Optimistic Optimization'' algorithms \citep{munos2014book}. These algorithms are preferred due to their mild assumptions; however, they require a partition scheme to be provided for the search space. The quality of a partitioning scheme depends on the function being optimized \citep{grill2015black}. If information on the function is lacking, then a default partitioning scheme, i.e., axis-aligned rectangles, is used \citep{jones1993lipschitzian, bartlett2019simple}. But this default choice might not be a good choice for the function. Additionally, an axis-aligned rectangle scheme limits the application to low-dimensional search spaces (as the number of rectangles grows as $k^{dn}$, where $k$ is the number of splits at each height, $d$ is the dimension and $n$ is the number of splits needed in each dimension), or it may fail to exploit low-dimensional structures in high-dimensional spaces.
Thus, there is a need to develop partitioning schemes that can adapt to the low dimensional structure that may be present in a black-box function.

Global optimization is impossible without restricting the class of functions: a degenerate function that takes the value $1$ at particular location and zero everywhere else can only be optimized by sampling every point in the domain. The class of functions is typically restricted by assumptions on its ``smoothness''.
One of the first global optimization algorithms with provable convergence was obtained for the class of Lipschitz functions. The DiRect algorithm \citep{jones1993lipschitzian,jones2021direct} is a well-known algorithm that can optimize Lipschitz functions without knowing the Lipschitz constant. %
A different class of functions studied in the Bayesian Optimization literature is that of the Gaussian Process (GP) prior for the unknown function~$f$. Combining the observed samples with the prior mean and covariance kernel, a posterior distribution for $f$ is obtained and used to guide the sampling strategy, see e.g.\ GP-UCB \citep{srinivas2012information}. Since its sampling strategy required maximizing the Upper Confidence Bound obtained from the posterior, its could feasibly be applied only on low-dimensional $\mathcal{X}$. Later works \citep{shekhar2018gaussian,salgia2021domain} incorporated domain partitioning ideas to reduce the computational cost. GP-UCB can also be applied if $f$ belongs to a Reproducing Kernel Hilbert Space corresponding to the covariance kernel.
However, these methods require the kernel associated with $f$ as an input.

Unlike the above methods, which assume a global characteristic for $f$, the ``Optimistic Optimization'' class of algorithms (HOO \citep{bubeck2011x}, SOO \citep{munos2011optimistic}, SequOOL \citep{bartlett2019simple}) just assume a local smoothness condition around the global maximizer. This local smoothness ensures that the function does not rapidly decrease around its maximum value.
These algorithms require as input a hierarchical partitioning of $\mathcal{X}$ that is well-behaved with respect to a semi-metric on $\mathcal{X}$. Although the semi-metric is not needed to be known, the performance of these algorithms can be heavily influenced by the choice of the partitioning scheme. Absent any additional information, the default partitioning is the axis-aligned rectangles from DiRect, leading to the shortcomings described in the beginning.
We propose to develop an algorithm that adaptively builds a partitioning scheme as new samples are collected. Additionally, it refines partitions using SequOOL in the low dimensional subspace that accounts for most of the variation in the function.

We sumarize the main contributions of our paper below.
\begin{itemize}
\setlength\itemsep{0em}
    \item We demonstrate the benefit of using a learned partitioning scheme for existing derivative-free optimization algorithms such as SequOOL.
    \item When the function is a low-dimensional multi-index function we theoretically prove improved regret bounds shown in Table~\ref{tab:regret}.
    \item Empirically, we demonstrate the improvement in optimization error for several benchmark functions including Rastrigin (multi-modal), Branin (multiple minima), and Sharp Ridge (non-differentiable).
    \item  We pose the quantization of Large Language Model (LLM) as a high-dimensional black-box optimization problem and obtain an improved perplexity value.
\end{itemize}

\begin{table}[hb]
    \centering
    \begin{tabular}{|cccc|}
    \hline
         &  SOO&  SequOOL& Our Method\\
         $\eta=0$&  $\rho^{\sqrt{n}}$&  $\rho^{\tilde{\Omega}(n)}$& $\rho^{\beta\tilde{\Omega}(n)}$\\[0.25em]
         $\eta > 0$&  $\tilde{O}(n^{-1/\eta})$&  $\tilde{O}(n^{-1/\eta})$& $\tilde{O}(n^{- \beta/\eta})$\\ \hline %
    \end{tabular}
    \caption{Regret bounds on a budget of $n$ evaluations for a $m$-dimensional multi-index function in $d$ dimensions. $\beta = 1+  \frac{d-m}{2m-1}$ and $\rho, \eta$ are parameters for the default partitioning scheme. \label{tab:regret}}
\end{table}

\paragraph{Related works}
Here we summarize the methods we have compared in our experiments. Perhaps the closest related work is Random Embedding Simultaneous Optimistic Optimization (RESOO)~\citep{qian2016scaling}, which scales SOO to high-dimensional optimization problems by applying SOO in a random low-dimensional search space. The simple regret of RESOO depends only on the effective dimension of the problem, rather than the full dimension of the solution space. REMBO (Random EMbedding Bayesian Optimization)~\citep{wang2016bayesian} uses a random projection matrix to create a lower-dimensional embedding for high-dimensional optimization problems. It then applies Bayesian optimization on this low-dimensional space, allowing it to efficiently search for optima in the reduced space. HeSBO~\cite{nayebi2019framework} uses hashing-enhanced embeding subspaces. ALEBO (Adaptive Linear Embedding Bayesian Optimization)~\cite{letham2020re} builds upon and improves the original REMBO by proposing a new linear-embedding method. However, these algorithms requires an lower-bound to the low-dimensional subspace dimension, which is difficult to obtain in real-world problems. Additionally, the Bayesian Optimization algorithms can be computationally expensive for large budgets.

Latent Action Monte Carlo Tree Search (LA-MCTS) \cite{wang2020learning} recursively partitions the high-dimensional search space into regions with high/low function values using nonlinear decision boundaries. %
It serves as a meta-algorithm by using existing black-box optimizers (e.g., BO, TuRBO~\cite{eriksson2019scalable}) as its local model. LA-MCTS creates sub-regions by partitioning the variable ranges without reducing dimensionality. Each sub-region retains the full dimensionality of the original problem space. While LA-MCTS has shown good empirical performance, it encounters challenges in high-dimensional spaces. %
Voronoi Optimistic Optimization (VOO) \cite{kim2020monte}  is a method for optimizing functions in high-dimensional spaces. Instead of using fixed partitions, VOO creates flexible sections called Voronoi cells. Each cell contains points closest to a specific evaluated point. As new points are evaluated, the cells automatically adjust based on the best results. %

Evolutionary algorithms such as CMA-ES \cite{hansen2023cmaevolutionstrategytutorial} and simulated annealing \cite{xiang2013generalized} are other popular approaches for black-box optimization. The CMA-ES technique performs well in finding the best solutions in high-dimensional optimization problems. %
However, these methods do not have convergence guarantees~\citep{loshchilov2016cma, nomura2021warm}.

\section{PROBLEM SETUP \& ADAPTIVE PARTITIONING SCHEMES}

\label{sec:problem}%
We consider the problem of optimizing a function $f \colon \mathcal{X} \mapsto \RR$ using only its evaluations at appropriately chosen points in its domain $\mathcal{X}$, which is assumed to be a closed and compact set. Given a budget of $n$ evaluations, at each $t \in \{1, 2, \ldots, n\}$ the algorithm queries a point $\vec{x}_t \in \mathcal{X}$ and observes a real number $y_t = f(\vec{x}_t)$.
After exhausting its budget, the algorithm returns the estimated maximizer $\hat{\vec{x}}(n)$. The optimization error is quantified by the simple regret $r_n$, defined as
\begin{align*}
    r_n &\triangleq f^* - f(\hat{\vec{x}}(n)),\text{where } f^* \triangleq \sup_{\vec{x} \in \mathcal{X}} f(\vec{x}), \\
    &\text{and } \vec{x}^* \in \mathcal{X} \text{ such that } f^* = f(\vec{x}^*).
\end{align*}

Our focus is on the class of \textit{optimistic optimization} algorithms \citep{munos2011optimistic, bubeck2011x, valko2013stochastic, munos2014book, grill2015black, bartlett2019simple}. These algorithms require as input a hierarchical partitioning of the domain~$\mathcal{X}$ for their search procedure. %
\begin{definition}
Partitioning scheme \citep{bartlett2019simple}.
Let $\mathcal{P}$ denote a tree representation of the domain $\mathcal{X}$.
All the cells at depth~$h$ form a partition of $\mathcal{X}$ and are denoted as $\mathcal{P}_h$.
We index the cells at depth $h$ with an additional index~$i$, i.e., $\mathcal{P}_{h, i}$ is a cell in the tree at depth~$h$. %
A cell $\mathcal{P}_{h,i}$ is partitioned to obtain one or more cells of $\mathcal{P}_{h+1}$.
$\mathcal{P}^*_{h}$ denotes a cell at depth~$h$ containing a maximizer~$\vec{x}^*$ of $f$ and $\vec{x}_{h, i}$ denotes a representative location within the $\mathcal{P}_{h, i}$ cell.
\end{definition}

A common choice for the domain is obtained when we have interval constraints on each of its components. In this case, without loss of generality, we can consider $\mathcal{X} = [-1, 1]^d$. And a default choice for the partitioning scheme that is often used in practice is the axis-aligned trisection scheme \citep{jones1993lipschitzian}. We use $\Sq^m$ to denote the unit hypercube in $m$-dimensional space. i.e., $[-1,1]^m$ domain.
\begin{definition} \label{def:all_partitions}
The default partitioning scheme $\mathcal{P} = \{{\mathcal{P}_{h,i}} : h, i \in \mathbb{N}_0\}$ of $\mathcal{X} = [-1, 1]^d$ uses an axis-aligned trisection method in a round-robin manner (see Figure~\ref{fig:partioning_scheme}). At depth $h=0$, there is a single cell $\mathcal{P}_{0,1} = \mathcal{X}$.
Each cell $\mathcal{P}_{h,i}$ at depth $h$ is split into three children cells $\mathcal{P}_{h+1,j}$ at depth $h+1$. The trisection occurs along the $(h \bmod d) + 1$ axis, i.e., the new cells are created by introducing $(d-1)$-dimensional hyperplanes orthogonal to the chosen axis.
For all $h, i$ the representative $\vec{x}_{h, i}$ is the midpoint of the cell $\mathcal{P}_{h, i}$.
\end{definition}

\begin{figure}
    \centering
    \includegraphics[width=0.8\linewidth]{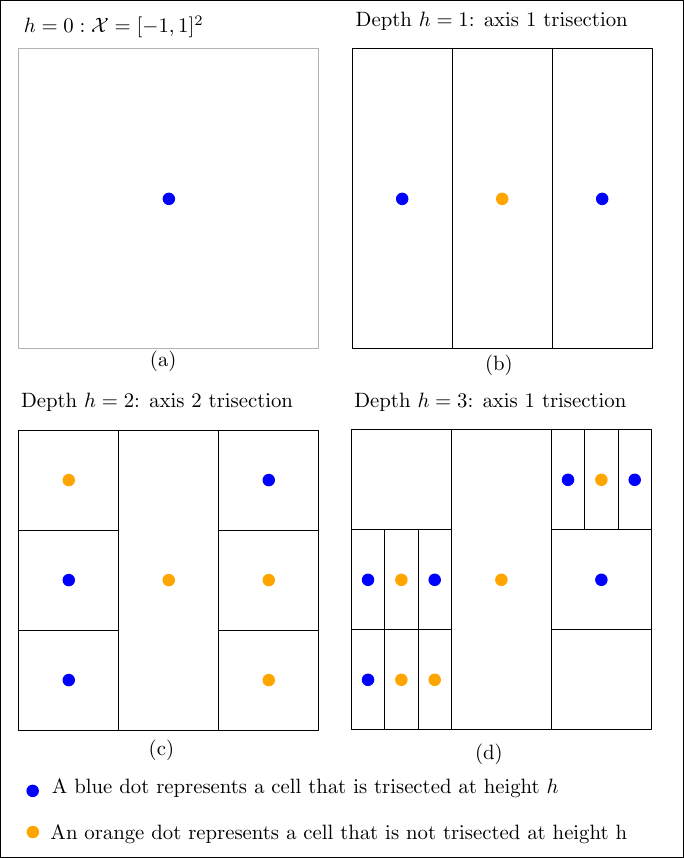}
    \caption{First four depths of the default partitioning $\mathcal{P}$ on $[-1,1]^2$}
    \label{fig:partioning_scheme}
\end{figure}

We consider partitioning schemes that are not aligned with the standard axes. We define such a rotated low-dimensional partitioning scheme using a matrix of orthonormal rows.
\begin{definition} \label{def:A aligned partitioning}
Given $\vec{A} \in \RR^{m \times d}$ such that $\vec{A} \vec{A}^\T = \vec{I}_m$ and a scalar $\alpha > 0$, we establish a partitioning scheme denoted as $\mathcal{A}$.
Let the default partitioning scheme (Definition~\ref{def:all_partitions}) on $[-\alpha, \alpha]^m$ be denoted as $\mathcal{T}$. For any depth~$h$ and index~$i$, the cell $\mathcal{A}_{h,i} \triangleq \{\vec{A}^\T \vec{t} : \vec{t} \in \mathcal{T}_{h, i} \}$ is a cell in the partitioning on the $m$-dimensional projection of $\mathcal{X}$ onto the subspace spanned by rows of $\vec{A}$. %
\end{definition}
Since the projection of $\mathcal{X}$ onto $\vec{A}$ can result in points outside $\mathcal{X}$, the value of $\alpha$ is chosen to ensure that the projection is covered by the $\mathcal{A}$ partitioning scheme. An appropriate value of $\alpha$ is in the following definition.
\begin{definition}
    \label{def:alpha_max}
Let $\vec{c}_j \in \{-1, 1\}^d$ denote the $2^d$ corners, indexed by $j$, of the default axis-aligned $\mathcal{P}$ partitioning scheme. %
Given a matrix $\vec{A}$ %
with $m$ orthonormal rows denoted as $\vec{a}_1, \vec{a}_2, \ldots, \vec{a}_m$,
we define
\begin{equation*}
\begin{split}
   \vec{\alpha}_{\max}
    \triangleq
    \Big[&\max_{1\le j \le 2^d}\vec{a}^\T_1 \vec{c}_j,
    \max_{1\le j \le 2^d}\vec{a}^\T_2 \vec{c}_j,
    \cdots \max_{1\le j \le 2^d}\vec{a}^\T_m \vec{c}_j\Big]^\T
    \end{split}
\end{equation*}
as the extent of the $\mathcal{A}$ partitioning scheme. We choose the largest component of the extent as $\alpha \triangleq \max_{i,j}\vec{a}^\T_i \vec{c}_j$.
\end{definition}

In this paper, we will provably demonstrate the advantage of our method when there is a low-dimensional ridge structure present in the black-box function. We consider the class of multi-index functions \citep{fornasier2012learning}, i.e., there is a matrix $\vec{A} \in \mathbb{R}^{m \times d}$ with orthonormal rows and a Lipschitz function $g$ such that
\begin{equation}\label{eq:multi_index_model}
    f(\vec{x}) = g(\vec{A} \vec{x}).
\end{equation}
Then optimizing over the low-dimensional subspace is sufficient to recover the maximizer. All proofs and omitted details can be found in the supplementary.

\begin{proposition}\label{lemma:a_extent}
    Suppose $f: \RR^d \mapsto \RR$ is a multi-index function with $f(\Vec{x}) = g(\Vec{A}\Vec{x})$. %
    If $\Vec{x}^* \in \mathcal{X} = [-1,1]^d$ is an optimizer and $\alpha$ is chosen as per definition~\ref{def:alpha_max}, then there is a $\Vec{z}^* \in \RR^m$  such that $f(\vec{A}^\T \Vec{z}^*) = f(\Vec{x}^*)$ and $\norm{\Vec{z}^*}_{\infty} \le \alpha$.
\end{proposition}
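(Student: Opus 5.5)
The natural choice is $\vec{z}^* \triangleq \vec{A}\vec{x}^*$. The plan has two parts: first check that the objective value is preserved when we move to $\vec{A}^\T \vec{z}^*$, and then bound each coordinate of $\vec{z}^*$ by $\alpha$ using the fact that a linear functional on the cube is maximized at a corner.

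\emph{Step 1 (value preservation).} Because $\vec{A}\vec{A}^\T = \vec{I}_m$, we have
\begin{equation*}
f(\vec{A}^\T \vec{z}^*) = g(\vec{A}\vec{A}^\T \vec{A}\vec{x}^*) = g(\vec{A}\vec{x}^*) = f(\vec{x}^*).
\end{equation*}
Note that $\vec{A}^\T\vec{z}^*$ is the orthogonal projection of $\vec{x}^*$ onto the row space of $\vec{A}$ and need not lie in $\mathcal{X}$. The statement only claims equality of $f$-values, however, and $f$ is defined on all of $\RR^d$, so this causes no difficulty.

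\emph{Step 2 (coordinate bound).} The $i$-th coordinate of $\vec{z}^*$ is $\vec{a}_i^\T\vec{x}^*$, a linear function of $\vec{x}^*$ over the cube $[-1,1]^d$.
\begin{itemize}
\item The cube is the convex hull of its corners $\vec{c}_j \in \{-1,1\}^d$, so a linear functional attains its maximum there at a vertex. Hence $\vec{a}_i^\T\vec{x}^* \le \max_j \vec{a}_i^\T\vec{c}_j$. Equivalently, this maximum equals $\norm{\vec{a}_i}_1$, attained at $\vec{c}_j = \mathrm{sign}(\vec{a}_i)$.
\item For the lower bound, the corner set is symmetric under $\vec{c}\mapsto-\vec{c}$, which gives $\min_j \vec{a}_i^\T\vec{c}_j = -\max_j \vec{a}_i^\T\vec{c}_j$. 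Therefore $|\vec{a}_i^\T\vec{x}^*| \le \max_j \vec{a}_i^\T\vec{c}_j$.
\end{itemize}

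Taking the maximum over $i$ yields $\norm{\vec{z}^*}_\infty \le \max_{i,j}\vec{a}_i^\T\vec{c}_j = \alpha$, as in Definition~\ref{def:alpha_max}. Nothing here is a real obstacle. The only point that needs care is the lower bound on each coordinate, which requires the symmetry argument rather than just the vertex-maximization fact.
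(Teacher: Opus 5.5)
Your proposal is correct and is essentially the paper's argument: the paper also takes $\vec{z}^* = \vec{A}\vec{x}^*$ and gets value preservation from $\vec{A}\vec{A}^\T = \vec{I}_m$ (you justify this correctly, whereas the paper's sketch loosely calls $\vec{A}^\T\vec{A}$ the identity), and it bounds $\norm{\vec{A}\vec{x}^*}_\infty$ by writing $\vec{x}^*$ as a convex combination of the corners $\vec{c}_j$ and applying the triangle inequality with $\norm{\vec{A}\vec{c}_j}_\infty \le \alpha$. Your coordinatewise vertex-maximization argument rests on the same convex-hull fact, and your symmetry remark $\vec{c}\mapsto -\vec{c}$ spells out the step $|\vec{a}_i^\T \vec{c}_j| \le \alpha$, which the paper uses without comment.
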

\begin{proof}[Proof Sketch]
For an optimizer $\vec{x}^* \in [-1,1]^d$ let $\vec{z}^* = \vec{A} \vec{x}^*$. Then $f(\vec{A}^\T \vec{z}^*) = f(\vec{A}^\T \vec{A} \vec{x}^*)= f(\vec{x}^*)$ since $\vec{A}^\T \vec{A}$ is identity. We then use that $\vec{x}^*$ lies in a convex hull of cube corners and apply the definition of $\alpha$ to show that $\|\vec{z}^*\|_\infty \le \alpha$, see details in Appendix~\ref{apx:lemma:a_extent}.
\end{proof}

The above proposition implies that if we have access to the true subspace matrix $\Vec{A}$, we can compute $\alpha$ and restrict our optimization to the $m$ dimensional space $\alpha \Sq^m$ using the $\mathcal{A}$ partitioning scheme. %
We theoretically characterize the benefit of using the partitioning scheme~$\mathcal{A}$ for the class of multi-index functions, i.e., if $f$ satisfies \eqref{eq:multi_index_model}
then using the partitioning scheme $\mathcal{A}$ can decrease $r_n$ at a faster rate with $n$.
To show this, we use the partitioning scheme assumption made by \citet{grill2015black} and \citet{bartlett2019simple} that states that the suboptimality of any point in the $\mathcal{P}^*_h$ cell improves as depth $h$ increases. The rate of this improvement is characterized by a parameter $\rho \in (0, 1)$.
\begin{assume}\label{asp:sequool}
\cite{bartlett2019simple} For any global optimum $\vec{x}^\star$, there is a $\nu > 0$ and $\rho \in (0, 1)$ such that for all $h \in \mathbb{N}_0$ and all $\vec{x} \in \mathcal{P}^*_h$, we have that $f(\vec{x}) \geq f(\vec{x}^\star) - \nu \rho^h$.
\end{assume}
\citet{bartlett2019simple} define the \textit{near-optimality dimension} which characterizes the difficulty of optimizing a black-box function using a partitioning scheme. %
\begin{definition} \label{def:sequool_near_optimality_dimension}\citep{bartlett2019simple}.
Consider a partitioning $\mathcal{P}$ that satisfies Assumption~\ref{asp:sequool} for some $\nu, \rho$.
For any $C > 1$, the near-optimality dimension $\nearoptd_{\mathcal{P}}(\nu,\rho, C)$ of $f$ with respect to the partitioning $\mathcal{P}$ is defined as
\begin{equation*}
\nearoptd_{\mathcal{P}}(\nu,\rho,C)
\triangleq \inf\{\eta \geq 0 : |\mathcal{N}_{\mathcal{P}}(3\nu\rho^h)| \le C \rho^{-\eta h}\hspace{0.2em} \forall h \geq 0\},
\end{equation*}
where $\mathcal{N}_{\mathcal{P}}(3\nu\rho^h)$ is the set of near-optimal cells $\mathcal{P}_{h,i}$ at depth $h$ for which $\sup_{\vec{x} \in \mathcal{P}_{h,i}} f(\vec{x}) \ge f(\vec{x}^*) - 3\nu\rho^h$.
\end{definition}

Intuitively, a larger $\rho$ implies that the function is only improving slowly near the maximizer, and a larger $\eta$ implies that there are many near-optimal cells which must be ruled out to get the true maximizer. In both cases, we need a larger budget of evaluations to converge.
The following example shows a partitioning scheme~$\mathcal{A}$ with a lower near-optimality dimension than the default partitioning scheme. %

\begin{example}\label{example:1_x}
Consider the function \( f(x_1,x_2) = g(\vec{A}\vec{x}) = 1 - |x_1| \) with \( \vec{A} = [1,0] \) and \( g(z) = 1 - |z| \). Let $\eta_{\mathcal{P}}, \eta_{\mathcal{A}}$ be the near-optimality dimensions for the partitioning schemes $\mathcal{P}, \mathcal{A}$ as defined in Definitions~\ref{def:all_partitions} and \ref{def:A aligned partitioning}, respectively. Then we have that $\eta_{\mathcal{P}}=1$ and $\eta_{\mathcal{A}}=0$.
\end{example}

In the next example, the subspace defined by $\vec{A}$ is not aligned with the canonical axes and we see that $\rho_{\mathcal{A}}$ is smaller than $\rho$ of the default partitioning scheme. %

\begin{example}\label{example:1_x_x}
Consider the function $f(x_1,x_2) = g(\vec{A}\vec{x}) = 1 - |x_1 + x_2|$ with $\Vec{A} =[1,1]$ and $g(k) = 1 - |k| $. Let $\eta_{\mathcal{P}}, \eta_{\mathcal{A}}$ be the near-optimality dimensions for the partitioning schemes $\mathcal{P}, \mathcal{A}$ associated  with $(\nu, \rho), (\nu_{\mathcal{A}}, \rho_{\mathcal{A}})$ respectively. Then $\rho = 1/\sqrt{3}$ and $\rho_{\mathcal{A}} = 1/3$ and $\eta_{\mathcal{P}}= \eta_{\mathcal{A}} = 0$.
\end{example}

In addition to identifying the important subspace spanned by $m$ orthonormal directions, an adaptive partitioning scheme can choose which direction to split at each depth. %

\begin{definition}
\label{defn:direction selection A partition}
Direction selection strategy.
    For a partitioning $\mathcal{A}$ in Definition~\ref{def:A aligned partitioning}, a direction selection strategy~$\tau_h: \mathcal{H} \rightarrow [1:m]$ defined for each height~$h$ takes the history~$\mathcal{H}$ of all past function evaluations till depth~$h-1$ and outputs the index of the direction to be split at depth~$h$.
\end{definition}
In the following example, a direction selection strategy that splits the $x_1$ axis twice as often as the $x_2$ axis results in a lower near-optimality dimension than that of the default partition $\mathcal{P}$ which splits both the axes in equal proportion.
\begin{example} \label{example:1_x_x^2}
The near-optimality dimension of the default partitioning scheme for the function $f(x_1,x_2) = 1 - |x_1| -x_2^2$ is $\eta_{\mathcal{P}} =0.5$.
For the partitioning scheme~$\mathcal{A}$ from Definition~\ref{def:A aligned partitioning} with $\vec{A}=I_2, \alpha=1$ and direction selection strategy $\tau_h = 1$ if $h \mod 3 \neq 0$ and $\tau_h=2$ otherwise,
its near-optimality dimension  $\eta_{\mathcal{A}} =0$. %
\end{example}
Empirically, as we increase the optimization budget, we observe that SequOOL on $\mathcal{A}$ decreases regret at a faster rate than SequOOL on $\mathcal{P}$ for the above example functions.

\section{PROPOSED ALGORITHMS FOR BLACK-BOX OPTIMIZATION}
\label{sec:algorithms}
To utilize an adaptive partitioning scheme, we develop two kinds of algorithms: 1. a two-stage algorithm where the first stage learns an adaptive partitioning scheme and the second stage uses it for optimization, and 2. an interleaved algorithm where learning and optimization happen iteratively.

\paragraph{Two-stage algorithm.}
In the first stage, we use a learning algorithm to obtain $\hat{\vec{A}}$, i.e., the directions used to define the adaptive partitioning scheme $\mathcal{A}$. If $f$ is a multi-index function satisfying \eqref{eq:multi_index_model}, the quality of this estimate is measured by the subspace distance between $\hat{\Vec{A}}$ and the true $\Vec{A}$. The value of $\hat{\alpha}$ chosen in Lemma~\ref{lemma:a_hat_expansion_factor} is such that $f(\hat{\vec{A}}^\T \vec{t}) = f(\vec{x}^\star)$ for some $\vec{t}\in [-\hat{\alpha}, \hat{\alpha}]^m$ and the optimization can find the maximizer in the low-dimensional subspace. %

\begin{algorithm}[t]
\caption{Obtaining directions for an adaptive partitioning scheme}
\label{alg:svd_partitioning}
\begin{algorithmic}[1]
\REQUIRE $T$, oracle for $f$ which is a multi-index function defined using $\vec{A}$ (see \eqref{eq:multi_index_model})
\STATE Sample $f$ at $T$ points chosen as $x^{(i)} \stackrel{iid}{\sim}  \mathcal{N}(\vec 0, \vec{I}_d)$ and fit a single hidden-layer neural network $\hat{y}$
\STATE $\hat{\vec{A}} \leftarrow$ top right singular vectors of the hidden layer weight matrix preserving 95\% variance
\STATE $u \leftarrow$ Upper bound to $\dist{(\vec{A},\hat{\vec{A}})}$ obtained in Lemma~\ref{lem:W_perp_Frobenius} or Theorem~\ref{thm:cs_subspace}
\STATE \textbf{return} $\hat{\vec{A}}$ and $\hat{\alpha}$ (see Equation~\ref{alpha_hat}) used to specify the partitioning scheme $\mathcal{A}$ in Definition~\ref{def:A aligned partitioning}
\end{algorithmic}
\end{algorithm}

Algorithm~\ref{alg:svd_partitioning} learns $\hat{\vec{A}}$ by fitting a single hidden-layer neural network to the function evaluations at random locations in its domain. A single hidden-layer neural network can model the fact that only a subspace of the domain explains all the variation in the function values (see Proposition~\ref{propn:ridge_fun_projection}). In practice, we can choose the value of $m$ to explain a desired percentage (such as 95\%) of the total variation in the SVD step calculating $\hat{\vec{A}}$. Another approach to learn $\hat{\vec{A}}$ is from \citet[Algorithm 2]{fornasier2012learning} which estimates the gradient of the function using finite differences. %
The second stage applies SequOOL to the partitioning scheme $\mathcal{A}$ returned by Algorithm~\ref{alg:svd_partitioning}.

\paragraph{Interleaved learning and optimization.}
Instead of separating the learning and optimization in two distinct stages, an interleaved algorithm updates the neural network fit at regular intervals. The updated approximant is used to specify the direction selection strategy in Definition~\ref{defn:direction selection A partition}.

\begin{algorithm}[t]
\caption{SequOOL on an adaptive partitioning scheme with a direction selection strategy}
\label{alg:adaptive partitioning}
\begin{algorithmic}
\REQUIRE Total number of openings $n$, number of samples $T$ for updating $\hat{f}$, integer $c$ stating how often $\hat{f}$ is updated, number of dimensions $m$, oracle for $f$, direction selection strategy $\tau_h$.
\STATE Setup a partitioning scheme $\hat{\mathcal{A}}$ with $\hat{\vec{A}}$ and $\hat{\alpha}$ obtained by \citep[Algorithm 2]{fornasier2012learning}\;
\STATE Obtain $\hat{f}$ by fitting a neural network on available samples, set $h_{\max} \leftarrow \lfloor n^2 / \left(n \overline{\log}{n} + \frac{Tn}{3c}\right) \rfloor$
\FOR{$h \gets 1$ {\bfseries to} $h_{\max}$}
\STATE $\mathcal{T}_h \gets$ the cell with the largest function sample value at height $h$
\IF{$h \mod c = 0$}
\STATE Obtain evaluations of $f$ at $T$ uniform random locations in $\mathcal{T}^*_h$ and update $\hat{f}$\;
\ENDIF
\STATE Open $\lfloor h_{\max}/h \rfloor$ cells at depth $h$ and trisect them along the direction returned by $\tau_h(\hat{f})$\;
\ENDFOR
\STATE \textbf{return} $\hat{\vec{x}}(n) \gets \arg\max_{\text{all } h, i} f(\vec{x}_{h,i})$
\end{algorithmic}
\end{algorithm}

Algorithm~\ref{alg:adaptive partitioning} uses a parameter $\tau_h$ that takes the updated approximation $\hat{f}$ as input. Our proposed method for $\tau_h$ is the lookahead direction selection strategy, which is detailed in Algorithm~\ref{alg:lookahead} in Appendix section~\ref{sec:algorithms}.

A different value of $h_{\max}$ is used in Algorithm~\ref{alg:adaptive partitioning}, compared to $h_{\max} = \flr{\frac{n^2}{n \overline{\log}{n}}}$ used in SequOOL, to account for the additional evaluations required to update the surrogate model $\hat{f}$ at regular intervals. This adjustment ensures that the total number of evaluations, including both SequOOL openings and model updates, remains within the overall budget. The following proposition guarantees that the chosen $h_{\max}$ ensures the total number of function evaluations does not exceed $3n$.

\begin{proposition}\label{prop:h_max_calcluation}
Let $n, T, c, h_{\max}$ be the parameters defined in Algorithm ~\ref{alg:adaptive partitioning} with $\overline{\log}{n} \triangleq \sum_{t=1}^{n} \frac{1}{t}$. Then the total number of function evaluations taken by the algorithm will not exceed $3n$.
\end{proposition}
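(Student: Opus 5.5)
The plan is to count the evaluations in Algorithm~\ref{alg:adaptive partitioning} in two groups and add them. The first group is the SequOOL openings. The second is the surrogate-model updates. The initial subspace-learning step is treated as part of the separate learning budget, as in SequOOL's accounting. We then show the sum is at most $3n$ using the definition $h_{\max} = \lfloor n^2/(n\overline{\log}n + Tn/(3c))\rfloor$.

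\textbf{Openings.} At depth $h$ the algorithm opens $\lfloor h_{\max}/h\rfloor$ cells. Each opening trisects a cell and creates three children whose representatives must be evaluated, so it costs at most $3$ evaluations. The total opening cost is therefore
\begin{equation*}
3\sum_{h=1}^{h_{\max}} \left\lfloor \frac{h_{\max}}{h}\right\rfloor \le 3 h_{\max}\sum_{h=1}^{h_{\max}}\frac{1}{h} = 3h_{\max}\overline{\log}\,h_{\max} \le 3h_{\max}\overline{\log}\,n .
\end{equation*}
The last step uses $h_{\max}\le n^2/(n\overline{\log}n)=n/\overline{\log}n\le n$, since $\overline{\log}n \ge 1$.

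\textbf{Updates.} The surrogate is refit only when $h \bmod c = 0$, which happens at most $\lfloor h_{\max}/c\rfloor \le h_{\max}/c$ times. Each refit uses $T$ evaluations, so the update cost is at most $Th_{\max}/c$.

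\textbf{Combining.} The total is at most
\begin{equation*}
3h_{\max}\overline{\log}n + \frac{Th_{\max}}{c} = 3h_{\max}\left(\overline{\log}n + \frac{T}{3c}\right).
\end{equation*}
From the definition, $h_{\max}\left(n\overline{\log}n + \frac{Tn}{3c}\right) \le n^2$. Dividing by $n$ gives $h_{\max}\left(\overline{\log}n + \frac{T}{3c}\right)\le n$, so the total is at most $3n$.

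The main obstacle is the bookkeeping convention. The factor $3$ depends on charging three evaluations per opening, which matches how the $3n$ bound is stated. I would make explicit that parent representatives already evaluated are not recharged. I would also make explicit that the initial fit of $\hat f$ from \citep[Algorithm 2]{fornasier2012learning} is excluded from, or separately budgeted outside, this count.
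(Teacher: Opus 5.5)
Your proposal is correct and follows the paper's argument. You bound $\sum_h \lfloor h_{\max}/h\rfloor$ by $h_{\max}\overline{\log}n$ (you use $h_{\max}\le n$, where the paper extends the sum to $n$ terms), bound the refitting cost by $Th_{\max}/c$, and let the definition of $h_{\max}$ cancel the factor $\overline{\log}n + T/(3c)$. The only difference is cosmetic: the paper counts in units of openings, charging $\lfloor Th_{\max}/(3c)\rfloor$ for the refits and showing the total is at most $n$, whereas you count evaluations directly and obtain $3n$.
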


\section{THEORETICAL ANALYSIS}
\label{sec:theoretical}%

We show that for the class of multi-index functions \eqref{eq:multi_index_model}, the partitioning $\hat{\mathcal{A}}$ obtained using the learned $\hat{\Vec{A}}$ can have a lower $\eta$ and a lower $\rho$ compared to that of the default partitioning $\mathcal{P}$.
Our proof proceeds by analyzing the relationships between three partitioning schemes: the default scheme $\mathcal{P}$, the scheme $\mathcal{A}$ based on the true subspace $\Vec{A}$, and the scheme $\hat{\mathcal{A}}$ based on the estimated subspace $\hat{\Vec{A}}$. Our analysis proceeds in two main stages:
We first relate the parameters of the default partitioning scheme $\mathcal{P}$ to those of the scheme $\mathcal{A}$. This includes comparing their SequOOL parameters $(\nu, \rho)$, characterizing their star cells, and bounding the number of near-optimal cells.
We then extend this analysis to the estimated scheme $\hat{\mathcal{A}}$, relating its properties to those of $\mathcal{A}$. This involves quantifying the impact of using an estimated subspace and establishing relationships between the SequOOL parameters of $\hat{\mathcal{A}}$ and $\mathcal{A}$.

We will use the notation of lattices to relate the number of near-optimal cells in two different partitioning schemes.

\begin{definition}\label{def:lattice}
\citep{lattice}
Given $m$ linearly independent vectors $\mathbf{b}_1, \ldots, \mathbf{b}_m \in \mathbb{R}^m$, the lattice generated by them is defined as $L(\mathbf{b}_1, \ldots, \mathbf{b}_m) = \left\{ \sum_{i=1}^{m} a_i \mathbf{b}_i \mid a_i \in \mathbb{Z} \right\}.$
We call $\mathbf{b}_1, \ldots, \mathbf{b}_m$ a basis of the lattice.  We denote lattices formed by the standard basis vectors $\mathbf{e}_1, \mathbf{e}_2, \ldots, \mathbf{e}_m$ as $\Lambda$. Thus, $\Lambda = L(\mathbf{e}_1, \mathbf{e}_2, \ldots, \mathbf{e}_m) = \mathbb{Z}^m$.
The lattice $\Lambda$ scaled by a scalar $\kappa$ is the same as $L(\kappa \vec{e}_1, \kappa \vec{e}_2, \ldots, \kappa \vec{e}_n)$.
\end{definition}

 For a $d$-dimensional vector $\Vec{x} = [x_1, \dots, x_d]^\T,$ we use $\norm{\Vec{x}}_p$ and $\norm{\Vec{x}}_\infty $ to denote its $\ell_p$ and $\ell _{\infty }$ norm respectively. The following lemma bounds the number of cells from a finer partitioning scheme that cover a given region in the domain.
\begin{lemma}\label{lemma:cells_tiling}
Let $\kappa, \kappa' \in \mathbb{R}^+$ with $\kappa > \kappa'$, and let $B(\Vec{x}, r) = \{\Vec{y} \in \mathbb{R}^m : \norm{\Vec{y} - \Vec{x}}_\infty \leq r\}$. Consider a lattice $\Lambda $ as defined in Definition \ref{def:lattice} , scaled by $\kappa'$, and translated by a vector $\Vec{t}$ to form the lattice $\Lambda +\Vec{t}$.
Define the subset of all lattice points that cover $B(\vec{0}, \kappa)$ as $\mathcal{C}(\kappa, \kappa', \vec{t}) \subseteq \Lambda + \Vec{t}$, i.e.,
\begin{align*}
    \mathcal{C}(\kappa, \kappa', \vec{t}) = \{\Vec{c}_i \in \Lambda + \Vec{t} : B(\Vec{c}_i, \kappa') \cap B(\Vec{0}, \kappa) \neq \emptyset  \\
    \text{ and } B(\Vec{0}, \kappa) \subseteq \bigcup_i B(\Vec{c}_i, \kappa')\}.
\end{align*}
Then, the cardinality of $\mathcal{C}$ satisfies:
\begin{equation*}
\left(\frac{\kappa}{\kappa'}\right)^m \leq |\mathcal{C}(\kappa, \kappa', \vec{t})| \leq \left(\frac{\kappa}{\kappa'} + 2\right)^m
\end{equation*}
\end{lemma}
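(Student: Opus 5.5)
The plan is to reduce everything to a one-dimensional count, using the product structure of both the lattice and the $\ell_\infty$ balls. First, one convention. For the stated bounds to hold, the boxes $B(\vec{c}_i,\kappa')$ must be the cells of the cubic tiling generated by the translated lattice. That is, adjacent lattice points are $2\kappa'$ apart, so the boxes have disjoint interiors and exactly tile $\RR^m$. This is also how the lemma will be used, since cells at a fixed depth of the default partitioning are congruent cubes tiling space. If the spacing were literally $\kappa'$, the boxes would overlap and the upper bound would instead read $(2\kappa/\kappa'+3)^m$. So I will state and use the tiling normalization explicitly.

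\textbf{Upper bound.} I would first note that $B(\vec{c},\kappa')\cap B(\vec{0},\kappa)\neq\emptyset$ holds if and only if $\norm{\vec{c}}_\infty\le \kappa+\kappa'$. This is because two axis-aligned cubes meet exactly when they meet in every coordinate. Hence $\mathcal{C}(\kappa,\kappa',\vec{t})$ is the product over $j=1,\dots,m$ of the sets $S_j=\{c\in 2\kappa'\mathbb{Z}+t_j : |c|\le\kappa+\kappa'\}$. Each $S_j$ is an arithmetic progression with step $2\kappa'$ lying in a closed interval of length $2(\kappa+\kappa')$. Therefore
\[
|S_j|\le \flr{\frac{2(\kappa+\kappa')}{2\kappa'}}+1\le \frac{\kappa}{\kappa'}+2 .
\]
Taking the product gives $|\mathcal{C}|\le(\kappa/\kappa'+2)^m$.

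\textbf{Lower bound.} I would use a volume argument. The covering condition $B(\vec{0},\kappa)\subseteq\bigcup_{i}B(\vec{c}_i,\kappa')$ together with subadditivity of Lebesgue measure gives $(2\kappa)^m\le|\mathcal{C}|\,(2\kappa')^m$, hence $|\mathcal{C}|\ge(\kappa/\kappa')^m$. I would also check that the covering actually holds, so that $\mathcal{C}$ is well-defined as stated. Every point of $B(\vec{0},\kappa)$ lies in some tiling cell, and any such cell meets $B(\vec{0},\kappa)$, so that cell's center is in $\mathcal{C}$. Equivalently, one can do the lower bound per coordinate: covering an interval of length $2\kappa$ by intervals of length $2\kappa'$ needs at least $\kappa/\kappa'$ of them.

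\textbf{Main obstacle.} The only delicate step is the boundary bookkeeping in the one-dimensional count. I need to confirm that the floor expression never exceeds $\kappa/\kappa'+2$, whatever the offset $t_j$. Tangential contact at the boundary $|c|=\kappa+\kappa'$ is counted, and this is exactly what the ``$+1$'' absorbs. Everything else is the product decomposition, which is routine.
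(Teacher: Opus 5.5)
Your proof is correct. The normalization you state (lattice spacing $2\kappa'$, so the boxes $B(\vec{c}_i,\kappa')$ tile $\RR^m$) is exactly what the paper's proof also assumes, silently, when it asserts $\norm{\vec{c}_i-\vec{c}_j}_\infty\ge 2\kappa'$ for distinct lattice points. You are also right that with literal spacing $\kappa'$ the stated upper bound can fail.

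Your lower bound is the same volume/subadditivity argument the paper uses. You additionally check that the covering condition actually holds, whereas the paper simply builds it into the definition of $\mathcal{C}$.

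The upper bound is where you take a different route. The paper uses a packing argument:
\begin{itemize}
\item the open boxes $B^\circ(\vec{c}_i,\kappa')$ are pairwise disjoint;
\item every box meeting $B(\vec{0},\kappa)$ lies inside $(\kappa+2\kappa')\Sq^m$;
\item comparing volumes gives $N(2\kappa')^m\le(2\kappa+4\kappa')^m$.
\end{itemize}
The paper writes the right-hand side as $(\kappa+2\kappa')^m$, which is a slip, but its final bound is the one the correct volume gives.

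You instead describe $\mathcal{C}$ exactly, as the product $\prod_j S_j$ coming from the criterion $\norm{\vec{c}}_\infty\le\kappa+\kappa'$, and then count points of an arithmetic progression in each coordinate.

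What each approach buys:
\begin{itemize}
\item Your route gives the slightly sharper bound $(\flr{\kappa/\kappa'}+2)^m$ and makes the boundary-contact bookkeeping transparent.
\item However, it depends on the product (axis-aligned cubic) structure of the lattice.
\item The paper's packing argument uses only that the congruent boxes have disjoint interiors and meet $B(\vec{0},\kappa)$. So it applies unchanged to any non-overlapping arrangement of translated cubes, whether or not they form a lattice.
\end{itemize}
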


We switch from $\mathcal{A}$ to $\mathcal{T}$ to simplify analysis by working directly in the low-dimensional space. Detailed mathematical justification is in Appendix Section~\ref{equ:partioning_schemes}.

The following lemma provides an upper bound on the number of near-optimal cells in scheme $\mathcal{A}$ relative to scheme $\mathcal{P}$. This relationship is fundamental for comparing the near-optimality dimensions of the two schemes, which will be addressed in a subsequent lemma (Lemma~\ref{lemma:near_optimality_dimension_pm_p2}).

\begin{lemma}
\label{lemma:near_optimal_cells_bound}
For the partitioning schemes $\mathcal{P}$ and $\mathcal{T}$ from Definition~\ref{def:sequool_near_optimality_dimension}, let $\mathcal{N}_{\mathcal{P}}(\epsilon)$ be the number of cells $\mathcal{P}_{h,i}$ at depth $h$ for which $\sup_{\vec{x} \in \mathcal{P}_{h,i}} f(\vec{x}) \ge f(\vec{x}^*) - \epsilon$, and similarly for $\mathcal{N}_{\mathcal{T}}(\epsilon)$.
Then
\begin{equation*}
\mathcal{N}_{\mathcal{T}}(\epsilon) \le C \mathcal{N}_{\mathcal{P}}(\epsilon ) \quad \text{where} \quad C = 3^d d^{d-m} (12\sqrt{m})^m.
\end{equation*}
\end{lemma}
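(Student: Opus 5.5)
The plan is to compare the two schemes at the same depth $h$ through their geometry. The key tool is the tiling bound of Lemma~\ref{lemma:cells_tiling}: every near-optimal cell of $\mathcal{T}$ should be charged to near-optimal cells of $\mathcal{P}$ in a way where each $\mathcal{P}$ cell receives at most $C$ charges.

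First I fix the geometry of both schemes. At depth $h$, write $h = dk + r$ with $0 \le r < d$. Then a cell of $\mathcal{P}$ is a box in $[-1,1]^d$ with side $2\cdot 3^{-k}$ along $d-r$ coordinates and $2\cdot 3^{-k-1}$ along the other $r$. Similarly, write $h = mk' + r'$. A cell of $\mathcal{T}$ is a box in $[-\alpha,\alpha]^m$ with sides $2\alpha 3^{-k'}$ or $2\alpha 3^{-k'-1}$. Since $k' \ge k$ roughly scaled by $d/m$, the $\mathcal{T}$ cells are finer in the subspace.

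Next, take a near-optimal $\mathcal{T}$ cell $\mathcal{T}_{h,i}$, with a point $\vec t$ satisfying $g(\vec t) \ge f^* - \epsilon$. By the argument of Proposition~\ref{lemma:a_extent}, if $\vec t$ lies in $\vec A(\mathcal{X})$, there is $\vec x\in\mathcal{X}$ with $\vec A\vec x = \vec t$. Then $f(\vec x) = g(\vec t)$, so the $\mathcal{P}$ cell containing $\vec x$ is near-optimal. This gives a map from near-optimal $\mathcal{T}$ cells to near-optimal $\mathcal{P}$ cells, and it remains to bound the size of each fibre.

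For a $\mathcal{P}$ cell $P$ at depth $h$, its image $\vec A P$ is contained in an $\ell_\infty$ ball in $\RR^m$. Each row $\vec a_j$ is a unit vector, so the radius is at most $\sqrt{d}$ times the half-side $3^{-k}$. The cells of $\mathcal{T}$ at depth $h$ form a translated scaled lattice $\Lambda+\vec t$ with spacing about $\alpha 3^{-k'}$. In the unequal-side case I enlarge to the larger side, which costs a factor $3$ per axis, and this is where a $3^d$-type factor appears. Lemma~\ref{lemma:cells_tiling} then bounds the fibre by $(\kappa/\kappa' + 2)^m$, with $\kappa \approx \sqrt d\, 3^{-k}$ and $\kappa' \approx \alpha 3^{-k'}$.

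Finally I simplify the ratio $\kappa/\kappa'$, using $\alpha \le \sqrt d$ (from $\vec a_j^\T \vec c \le \|\vec a_j\|\|\vec c\| = \sqrt d$), together with $\alpha \ge 1/\sqrt{m}$-type lower bounds and the relation between $k$ and $k'$. Collecting constants should give something like $3^d d^{d-m}(12\sqrt m)^m$. The $d^{d-m}$ and $\sqrt m$ factors come from the mismatch between $d$ and $m$ in the depth-to-side-length conversion and from bounding $\alpha$.

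The main obstacle is this depth bookkeeping. Because $\mathcal{T}$ cells shrink faster per depth than $\mathcal{P}$ cells, at equal depth a single $\mathcal{P}$ cell may cover exponentially many (in $h$) $\mathcal{T}$ cells. To keep the constant independent of $h$, I would try instead to charge each $\mathcal{P}$ cell to one $\mathcal{T}$ cell via $\vec A\vec x^*$-type near-optimal points. Alternatively, I would reinterpret the counts so that near-optimal sets are compared through volumes of the $\epsilon$-superlevel set of $g$. In that case the ratio uses the fact that $\mathcal{P}$ cells along the $d-m$ orthogonal directions multiply the count, giving the $d^{d-m}$ factor. I would try the volume-comparison route first if the direct fibre count fails to be uniform in $h$.
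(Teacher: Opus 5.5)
\textbf{There is a genuine gap: your proposal never supplies the compensating lower bound that the paper's proof needs.}

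Your main argument does not close, as you anticipate. You map each near-optimal $\mathcal{T}$ cell to the $\mathcal{P}$ cell containing a preimage and bound fibres by Lemma~\ref{lemma:cells_tiling}. Those fibres have size of order $3^{k(d-m)}$ (with $h=kd+i$), so no $h$-independent constant comes out. Neither fallback is actually carried out. Charging each $\mathcal{P}$ cell to a single $\mathcal{T}$ cell runs the wrong way: a map $\mathcal{N}_{\mathcal{P}}\to\mathcal{N}_{\mathcal{T}}$ bounds $|\mathcal{N}_{\mathcal{T}}|$ only if it is onto, and nothing in your plan gives that.

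The ingredient the paper adds is a matching lower bound on the other side of the incidence relation. It double counts the set $POpt$ of pairs $(\mathcal{T}_{h,i},\mathcal{P}_{h,j})$ of near-optimal cells with $\mathcal{T}_{h,i}\cap\vec{A}\mathcal{P}_{h,j}\neq\emptyset$, obtaining $|\mathcal{N}_{\mathcal{T}}|\,l\le|POpt|\le|\mathcal{N}_{\mathcal{P}}|\,u$.
\begin{itemize}
\item $u\le(12\sqrt{m})^m3^d3^{k(d-m)}$ is essentially your fibre bound.
\item $l\ge d^{-(d-m)}3^{k(d-m)}$ is argued from the invariance of $f$ along $\ker\vec{A}$. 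A near-optimal point lifts to a $(d-m)$-dimensional near-optimal slab in $\mathcal{X}$, which should meet many depth-$h$ cells of $\mathcal{P}$.
\end{itemize}
The factors $3^{k(d-m)}$ then cancel. Your closing remark about the orthogonal directions ``multiplying the count'' points toward this. However, you never formulate or justify a per-cell lower bound of this kind, and that is where the whole difficulty sits. (Incidentally, your radius $\sqrt{d}\,3^{-k}$ for $\vec{A}\mathcal{P}_{h,j}$ is the correct one. The paper's $\sqrt{m}$ relies on $\|\vec{A}\|_\infty\le\sqrt{m}\|\vec{A}\|_2$, which fails for $\vec{A}=\vec{1}^\T/\sqrt{d}$.)

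\textbf{That lower bound is false in general, so your worry about non-uniformity is well founded.} The paper bounds $l$ using the projection of the whole cube onto $\vec{A}_\perp$. What actually matters is the fibre $\{\vec{x}\in\mathcal{X}:\vec{A}\vec{x}=\vec{t}\}$ over the particular near-optimal $\vec{t}$. That fibre shrinks to a point at a vertex of the zonotope $\vec{A}\mathcal{X}$, and is empty for $\vec{t}\in[-\alpha,\alpha]^m\setminus\vec{A}\mathcal{X}$.

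The stated inequality then fails outright. Take $d=2$, $m=1$, $\vec{A}=\frac{1}{\sqrt{2}}[1,1]$ (so $\alpha=\sqrt{2}$), and $g(t)=1-|t-\sqrt{2}|$. The maximizer of $f$ on $\mathcal{X}$ is the corner $(1,1)$, which even lies in the row space of $\vec{A}$. Set $\epsilon=3^{-h/2}$; any $\epsilon$ of this order behaves the same, and it is the regime $3\nu\rho^h$ with $\rho=1/\sqrt{3}$ in which the lemma is applied.
\begin{itemize}
\item The near-optimal set of $f$ is the triangle $\{x_1+x_2\ge 2-\sqrt{2}\epsilon\}\cap\mathcal{X}$. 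It meets at most two depth-$h$ cells of $\mathcal{P}$.
\item The near-optimal set of $g$ is $[\sqrt{2}-\epsilon,\sqrt{2}]$. It meets at least $\epsilon\,3^h/(2\sqrt{2})=3^{h/2}/(2\sqrt{2})$ depth-$h$ cells of $\mathcal{T}$.
\end{itemize}
So the non-uniformity you identified is intrinsic, and no counting scheme proves the statement as written. An extra hypothesis is needed, for instance a lower bound on the $(d-m)$-dimensional size of the fibres in $\mathcal{X}$ over near-optimal points of $g$.
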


\begin{proof}[Proof Sketch]
Consider the following $POpt$ relation:
\begin{align*}
    POpt &\triangleq \{(\mathcal{T}_{h,i}, \mathcal{P}_{h, j}) : \mathcal{T}_{h,i} \in \mathcal{N}_{\mathcal{T}}(\epsilon), \ \mathcal{P}_{h,j} \in \mathcal{N}_{\mathcal{P}}(\epsilon),\\
   &\quad \quad \quad \quad \quad \quad \quad \mathcal{T}_{h,i} \cap \Vec{A}\mathcal{P}_{h,j} \neq \emptyset \},\\
    \vec{A}\mathcal{P}_{h,j} &\triangleq \{\vec{A}\vec{x} \mid \vec{x} \in \mathcal{P}_{h,j}\}.
\end{align*}

For any $h,i$, consider the cell $\mathcal{T}_{h,i} \in \mathcal{N}_{\mathcal{T}}(\epsilon)$. Let $l$ be the lower bound to the number of elements in $POpt$ that are of the form $(\mathcal{T}_{h,i}, \cdot)$, implying $
|POpt| \geq |\mathcal{N}_{\mathcal{T}}(\epsilon)| \cdot l $. Similarly, for any $h,j$, consider the cell $\mathcal{P}_{h,j} \in \mathcal{N}_{\mathcal{P}}(\epsilon)$.
Let $u$ be the upper bound to the number of elements in $POpt$ that are of the form $(\cdot, \mathcal{P}_{h,j})$. Then, we have: $|\mathcal{N}_{\mathcal{P}}(\epsilon)| \cdot u \geq |POpt|
$.
Combining these two inequalities, we get:
\begin{equation}\label{eq:int_near_optimal_cells}
|\mathcal{N}_{\mathcal{T}}(\epsilon)| \leq |\mathcal{N}_{\mathcal{P}}(\epsilon)| \cdot \frac{u}{l}
\end{equation}
In our proof in Appendix~\ref{proof:near_optimal_cells_bound}, we further refine the upper bound on $u$ to $(12 \sqrt{m})^m 3^d 3^{k(d-m)}$ and the lower bound on $l$ to $(1/d)^{d-m} 3^{k(d-m)}$, leading to the result.
\end{proof}

The above lemma is a key technical result that enables us to relate
the parameters of the partitioning schemes $\mathcal{A}$ and $\mathcal{P}$.
\begin{lemma}
\label{lemma:near_optimality_dimension_pm_p2}
For a function in the multi-index class \eqref{eq:multi_index_model} with known $\vec{A}\in \RR^{m \times d}$ and $m < d$, let $(\nu,\rho, \eta_{\mathcal{P}}, C)$, $(\nu_{\mathcal{A}}, \rho_{\mathcal{A}}, \eta_{\mathcal{A}}, C_{\mathcal{A}})$ be parameters of $\mathcal{P}, \mathcal{A}$. Let $l_f\triangleq f^*-\inf_{\Vec{x}\in \kappa_1 \Sq^d}f(\Vec{x})$.
Then we have that
\begin{align*}
     \nu_{\mathcal{A}} &=  \max\{\nu, l_f\}\rho^{(1-\beta)(m-1)-\tilde{h}_1},\\ \rho_{\mathcal{A}} &= \rho^{\beta},\\
    \nearoptd_{\mathcal{A}}(\nu_{\mathcal{A}}, \rho_{\mathcal{A}}, C_{\mathcal{A}}) &\le \nearoptd_{\mathcal{P}}(\nu, \rho, C)/\beta, \\
   \text{and } C_{\mathcal{A}} &= 3^d d^{d-m} (12\sqrt{m})^m C \rho^{-\nearoptd_{\mathcal{P}} \tilde{h}_3},
\end{align*}
where $\beta \triangleq 1 + \frac{d-m}{2m-1}, \tilde{h}_1 \triangleq d \cel{\log_3{\kappa_1}}$ and
\begin{align}
    \kappa_1 &\triangleq \kappaone, \quad
    \tilde{h}_3 \triangleq -\flr{\log_{\rho}{\frac{\nu_{\mathcal{A}}}{\nu}}} \label{eq:tildeh3}.
\end{align}
\end{lemma}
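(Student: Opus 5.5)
I would prove the three claims in turn, working in the low-dimensional scheme $\mathcal{T}$ on $[-\alpha,\alpha]^m$ rather than $\mathcal{A}$, as the paper suggests. The basic tool is a comparison of cell sizes. A cell of $\mathcal{P}$ at depth $h$ has side lengths $2\cdot 3^{-\lfloor h/d\rfloor}$ or $2\cdot 3^{-\lceil h/d\rceil}$ in $d$ coordinates. A cell of $\mathcal{T}$ at depth $h'$ has sides about $2\alpha\, 3^{-h'/m}$, and $\alpha \le \sqrt{m}\,\alpha$ is absorbed into $\kappa_1=3^{\lceil\log_3\sqrt m\alpha\rceil}$. Since $\mathcal{T}$ splits only $m$ directions, its cells shrink faster in the relevant subspace.

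\textbf{Step 1: the pair $(\nu_{\mathcal{A}},\rho_{\mathcal{A}})$.} Fix $h'$ and the star cell $\mathcal{T}^*_{h'}$, which contains $\vec{z}^*=\vec{A}\vec{x}^*$ by Proposition~\ref{lemma:a_extent}. Take $\vec{t}\in\mathcal{T}^*_{h'}$. I will find a depth $h$ and a point $\vec{x}\in\mathcal{P}^*_h$, possibly after enlarging the domain to $\kappa_1\Sq^d$, with $g(\vec t)=f(\vec x)$. The natural choice is $\vec{x}=\vec{x}^*+\vec{A}^\T(\vec t-\vec z^*)$. This $\vec x$ lies in a $\mathcal{P}$-cell of depth $h$ around $\vec x^*$ as long as the $\ell_\infty$ displacement $\|\vec A^\T(\vec t-\vec z^*)\|_\infty\le\sqrt m\,\|\vec t-\vec z^*\|_\infty$ is at most the $\mathcal{P}$ cell side.

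The count that makes $\beta$ appear works as follows. After $h'$ splits in $m$ directions, $\mathcal{T}$ has side $\approx\kappa_1 3^{-h'/m}$. To match this, $\mathcal{P}$ needs depth $h\approx d\,h'/m$, shifted by the $\tilde h_1=d\lceil\log_3\kappa_1\rceil$ levels that account for the scaling by $\kappa_1$. Assumption~\ref{asp:sequool} then gives $f(\vec x)\ge f^*-\nu\rho^{h}$. For shallow depths, where $\vec x$ leaves $\mathcal{X}$, I would instead use the trivial bound $f^*-f\le l_f$; this explains the $\max\{\nu,l_f\}$. Writing $\rho^h$ as $\rho^{\beta h'}$ times a constant power of $\rho$ yields $\nu_{\mathcal A}=\max\{\nu,l_f\}\rho^{(1-\beta)(m-1)-\tilde h_1}$ and $\rho_{\mathcal A}=\rho^\beta$.

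The exponent $\beta=1+\frac{d-m}{2m-1}$ rather than the naive $d/m$ must come from the worst-case rounding of the round-robin $\mathcal{P}$ cells against $\mathcal T$ cells. The guarantee has to hold for every $h'$, including when the $\mathcal{P}$ star cell is not centered at $\vec x^*$; this costs a factor of $2$ or $3$ in side length per axis, and the $(m-1)$ term is a boundary correction. I expect this step to be the main obstacle: pinning down the exact linear relation $h\ge\beta h'+\text{const}$ uniformly in $h'$. My first attempt would be to bound, for each $h'$, the largest $h$ such that an $\ell_\infty$ ball of radius $\sqrt m\,\kappa_1 3^{-\lfloor h'/m\rfloor}$ around $\vec x^*$ lies within $\mathcal P$-cells of depth $h$ whose values are controlled. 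I would then take the infimum of $h/h'$ over $h'$, checking small $h'$ separately.

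\textbf{Step 2: near-optimality dimension.} Here I invoke Lemma~\ref{lemma:near_optimal_cells_bound} with $\epsilon=3\nu_{\mathcal A}\rho_{\mathcal A}^{h}=3\nu_{\mathcal A}\rho^{\beta h}$, which gives $|\mathcal N_{\mathcal T}(\epsilon)|\le 3^dd^{d-m}(12\sqrt m)^m|\mathcal N_{\mathcal P}(\epsilon)|$. I then rewrite $\epsilon=3\nu\rho^{\beta h-\tilde h_3'}$ with $\rho^{-\tilde h_3'}\ge\nu_{\mathcal A}/\nu$, which is exactly $\tilde h_3$ in \eqref{eq:tildeh3}. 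By the definition of $\eta_{\mathcal P}$, this is at most $C\rho^{-\eta_{\mathcal P}(\beta h-\tilde h_3)}$. The definition is applied at the $\mathcal P$-depth matching $\epsilon$, which is legitimate because the lemma compares cells at corresponding scales. Collecting constants gives $C_{\mathcal A}$ as stated and the bound $C_{\mathcal A}\rho_{\mathcal A}^{-(\eta_{\mathcal P}/\beta)h}$, hence $\eta_{\mathcal A}\le\eta_{\mathcal P}/\beta$.
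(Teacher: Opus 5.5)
\textbf{Step 2 loses the $1/\beta$ factor.} Your Step~2 contains an arithmetic error that removes exactly the improvement the lemma is about. You keep the threshold as $3\nu\rho^{\beta h-\tilde h_3}$ and apply the definition of $\eta_{\mathcal P}$ at depth $\beta h-\tilde h_3$. That gives $C\rho^{-\eta_{\mathcal P}(\beta h-\tilde h_3)}=C\rho^{\eta_{\mathcal P}\tilde h_3}(\rho^{\beta})^{-\eta_{\mathcal P}h}=C\rho^{\eta_{\mathcal P}\tilde h_3}\rho_{\mathcal A}^{-\eta_{\mathcal P}h}$, which only yields $\eta_{\mathcal A}\le\eta_{\mathcal P}$. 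The bound $\rho_{\mathcal A}^{-(\eta_{\mathcal P}/\beta)h}=\rho^{-\eta_{\mathcal P}h}$ that you write does not follow from it. In addition, Lemma~\ref{lemma:near_optimal_cells_bound} compares $\mathcal T$- and $\mathcal P$-cells at the same depth $h$ with the same $\epsilon$, not at ``corresponding scales.'' The paper does the opposite of what you do. It first discards the $\beta$: $3\nu_{\mathcal A}\rho^{\beta h}\le 3\nu_{\mathcal A}\rho^{h}\le 3\nu\rho^{h-\tilde h_3}$, using $\beta\ge1$ and $\rho<1$. It then applies Lemma~\ref{lemma:near_optimal_cells_bound} at depth $h$, and the definition of $\eta_{\mathcal P}$ with exponent $h-\tilde h_3$. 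This gives a constant times $\rho^{-\eta_{\mathcal P}h}=\rho_{\mathcal A}^{-(\eta_{\mathcal P}/\beta)h}$. The gain $1/\beta$ comes precisely from counting both schemes at the same depth, where the comparison costs only a constant, and then rewriting $\rho^h$ in the base $\rho_{\mathcal A}=\rho^\beta$.

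\textbf{Step 1 is not yet an argument.} You leave the relation $h\ge\beta h'+\mathrm{const}$ open, and neither $\beta$ nor the $(m-1)$ comes from off-centre rounding or boundary effects. In the paper, $\mathcal P$ is first rescaled to a scheme $\mathcal G$ on $\kappa_1\Sq^d$. Because $\kappa_1$ is a power of $3$, $\mathcal G^*_{h+\tilde h_1}=\mathcal P^*_h$. So $\mathcal G$ satisfies Assumption~\ref{asp:sequool} with $(\nu_{\mathcal G},\rho)$, where $\nu_{\mathcal G}=\max\{\nu,l_f\}\rho^{-\tilde h_1}$ and $l_f$ handles the first $\tilde h_1$ depths. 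Lemma~\ref{lemma:P_m_is_inP_2} then asserts the star-cell inclusion $\mathcal A^*_{km+i}\subseteq\mathcal G^*_{kd+i}$ for $i\in[1:m-1]$. Hence $\sup_{\mathcal A^*_{km+i}}(f^*-f)\le\nu_{\mathcal G}\rho^{kd+i}$, and $\beta=\min_{k\ge1,\,1\le i\le m-1}\frac{kd+i}{km+i}=\frac{d+m-1}{2m-1}$, attained at $k=1$, $i=m-1$. The factor $\rho^{(1-\beta)(m-1)}$ absorbs the shallow depths $1,\dots,m-1$.

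\textbf{Your mechanism cannot work from the assumption alone.} You propose fitting an $\ell_\infty$ neighbourhood of $\vec x^*$ into cells whose values are controlled. Assumption~\ref{asp:sequool} controls $f$ only on the star cell $\mathcal P^*_h$. Moreover, $\vec x^*$ can sit arbitrarily close to a face of $\mathcal P^*_h$, relative to its side length, over long runs of consecutive depths. So no fixed enlargement factor places $\vec x^*+\vec A^\T(\vec t-\vec z^*)$ inside a star cell. What is needed is an inclusion of the whole cell $\mathcal A^*_{h'}$ in a $\mathcal P$ star cell. You are right that this is the delicate point. Be aware that the paper's proof of Lemma~\ref{lemma:P_m_is_inP_2} does not really resolve it either. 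It bounds only the distance to the centre of $\mathcal G^*_{kd+i}$, and then concludes via $\mathcal G^*_{(k-1)d}\subseteq\mathcal G^*_{kd+i}$, an inclusion that runs the wrong way.
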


\begin{proof}[Proof Sketch]
Lemma~\ref{lemma:nu_rho_relate_p1_pm} shows that $\mathcal{A}$ is a valid partitioning scheme and relates its parameters to those of the default partitioning scheme $\mathcal{P}$. We then use Lemma~\ref{lemma:near_optimal_cells_bound} to bound the number of near-optimal cells in $\mathcal{A}$, see details in Appendix~\ref{apx:lemma:near_optimality_dimension_pm_p2}.
\end{proof}

Since $\beta \ge 1$, the previous lemma shows that $\nearoptd_{\mathcal{A}}\leq \nearoptd_{\mathcal{P}}$. Furthermore, since $\rho \in (0,1)$, $\rho_{\mathcal{A}}\leq \rho$ and $\nu \leq \nu_{\mathcal{A}}$. As an illustration, Example~\ref{example:1_x_x} satisfies the conditions of Lemma~\ref{lemma:near_optimality_dimension_pm_p2} with $d = 2$, $m = 1$, and hence $\beta = 1 + \frac{d - m}{2m - 1} = 2$. This implies $\rho_{\mathcal{A}} = \rho^2$ and $\eta_{\mathcal{A}}(\nu_{\mathcal{A}}, \rho_{\mathcal{A}}, C_{\mathcal{A}}) \le \eta_{\mathcal{P}}(\nu, \rho, C)/2$. We previously computed $\rho = 1/\sqrt{3}$ and $\rho_{\mathcal{A}} = 1/3$, which confirms the first conclusion. Additionally, since both $\eta_{\mathcal{P}}$ and $\eta_{\mathcal{A}}$ are 0, the second conclusion holds with equality. %

When the low rank matrix $\vec{A}$ is unknown, we use the estimation guarantees for the learning algorithms of \citet{fornasier2012learning} and \citet{mousavi-hosseini2023neural} that bound the subspace distance $\dist(\vec{A}, \hat{\vec{A}})$. The subspace distance between the two row subspaces $\vec{A}$ and $\hat{\vec{A}}$ and is given by $\norm{\vec{A}^\T\vec{A} - \hat{\vec{A}}^\T \hat{\vec{A}}}_2$, where $\norm{\cdot}_2$ denotes the spectral norm.

The following lemma shows that we need an upper bound on the subspace distance $\dist(\vec{A}, \hat{\vec{A}})$
to ensure that an optimizer exists within the low-dimensional search space.

\begin{lemma}\label{lemma:a_hat_expansion_factor}
Consider a multi-index function $f(\Vec{x})= g(\Vec{Ax})$ over $\RR^d$. Let $\Vec{x}^*$ be an optimizer of $f$ within $[-1,1]^d$, $\alpha$ be as in definition~\ref{def:alpha_max} and $\hat{\Vec{A}}$ be an estimate of $\Vec{A}$ satisfying $\dist(\vec{A}, \hat{\vec{A}}) < 1$. Then there exists a $\Vec{z}^* \in \RR^m$ such that $f(\hat{\vec{A}}^\T \Vec{z}^*) = f(\Vec{x}^*)$ and $\norm{\Vec{z}^*}_{\infty} \le \kappatwo \alpha$.
\end{lemma}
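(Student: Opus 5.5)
Set $\vec{w}^* \triangleq \vec{A}\vec{x}^*$. By Proposition~\ref{lemma:a_extent}, $\norm{\vec{w}^*}_\infty \le \alpha$, hence $\norm{\vec{w}^*}_2 \le \sqrt{m}\,\alpha$. Since $f(\vec{x}) = g(\vec{A}\vec{x})$, it suffices to find $\vec{z}^* \in \RR^m$ with
\begin{equation*}
\vec{A}\hat{\vec{A}}^\T \vec{z}^* = \vec{w}^*;
\end{equation*}
then $f(\hat{\vec{A}}^\T\vec{z}^*) = g(\vec{A}\hat{\vec{A}}^\T\vec{z}^*) = g(\vec{w}^*) = f(\vec{x}^*)$. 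So the plan is to show that the $m\times m$ matrix $\vec{M} \triangleq \vec{A}\hat{\vec{A}}^\T$ is invertible, take $\vec{z}^* = \vec{M}^{-1}\vec{w}^*$, and bound $\norm{\vec{z}^*}_\infty \le \norm{\vec{z}^*}_2 \le \norm{\vec{w}^*}_2/\sigma_{\min}(\vec{M})$.

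The central step is to show $\sigma_{\min}(\vec{M}) \ge \sqrt{1-\dist^2(\vec{A},\hat{\vec{A}})}$, using the principal-angle characterization of the subspace distance. The singular values of $\vec{A}\hat{\vec{A}}^\T$ are the cosines $\cos\theta_i$ of the principal angles between the two row spaces, and $\norm{\vec{A}^\T\vec{A} - \hat{\vec{A}}^\T\hat{\vec{A}}}_2 = \sin\theta_{\max}$ because both subspaces have equal dimension $m$. I would verify this directly rather than quote it. For a unit $\vec{u}\in\RR^m$, the vector $\hat{\vec{A}}^\T\vec{u}$ is a unit vector in the row space of $\hat{\vec{A}}$. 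Then
\begin{equation*}
\norm{\vec{M}\vec{u}}_2^2 = \norm{\vec{A}^\T\vec{A}\hat{\vec{A}}^\T\vec{u}}_2^2 = 1 - \norm{(\vec{I}-\vec{A}^\T\vec{A})\hat{\vec{A}}^\T\vec{u}}_2^2 .
\end{equation*}
Moreover, $(\vec{I}-\vec{A}^\T\vec{A})\hat{\vec{A}}^\T\vec{u} = (\hat{\vec{A}}^\T\hat{\vec{A}}-\vec{A}^\T\vec{A})\hat{\vec{A}}^\T\vec{u}$, since $\hat{\vec{A}}^\T\hat{\vec{A}}$ fixes $\hat{\vec{A}}^\T\vec{u}$. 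Its norm is therefore at most $\dist(\vec{A},\hat{\vec{A}})$, which gives $\norm{\vec{M}\vec{u}}_2^2 \ge 1-\dist^2$.

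Because $\dist < 1$ by hypothesis, this lower bound is positive, so $\vec{M}$ is invertible. Combining,
\begin{equation*}
\norm{\vec{z}^*}_\infty \le \norm{\vec{z}^*}_2 \le \frac{\sqrt{m}\,\alpha}{\sqrt{1-\dist^2(\vec{A},\hat{\vec{A}})}},
\end{equation*}
which is exactly the stated bound.

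The main obstacle I expect is this singular-value bound: it requires the identity that the projector difference fixes $\hat{\vec{A}}^\T\vec{u}$ in the right way. Once that is in place, the rest is norm bookkeeping via Proposition~\ref{lemma:a_extent}, together with the $\ell_\infty$ to $\ell_2$ conversion, which is what produces the $\sqrt{m}$ factor.
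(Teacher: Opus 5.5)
Your proof is correct and follows the paper's route. It uses the same choice $\vec{z}^* = (\vec{A}\hat{\vec{A}}^\T)^{-1}\vec{A}\vec{x}^*$, the same appeal to Proposition~\ref{lemma:a_extent} for $\norm{\vec{A}\vec{x}^*}_\infty \le \alpha$, and the same chain $\norm{\vec{z}^*}_\infty \le \norm{\vec{z}^*}_2 \le \sqrt{m}\,\alpha/\sigma_{\min}(\vec{A}\hat{\vec{A}}^\T)$. The only differences are that you verify $\sigma_{\min}(\vec{A}\hat{\vec{A}}^\T) \ge \sqrt{1-\dist^2(\vec{A},\hat{\vec{A}})}$ directly instead of citing the principal-angle identity \eqref{eq:sima_min_A_hatA}, and that you keep the square root in the denominator, which the paper's displayed chain drops (it ends at $\sqrt{m}\,\alpha/(1-\dist^2(\vec{A},\hat{\vec{A}}))$), so your bound is the one that matches the stated constant.
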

\begin{proof}[Proof Sketch]
Consider a $\vec{z}^*$ satisfying $\vec{A} \hat{\vec{A}}^\T \vec{z}^* = \vec{A} \vec{x}^*$. Since $\dist(\Vec{A}, \hat{\Vec{A}}) < 1$, the matrix $\Vec{A} \hat{\Vec{A}}^\T$ is invertible, giving that $\Vec{z}^* = (\Vec{A} \hat{\Vec{A}}^\T)^{-1} \Vec{A} \Vec{x}^*$ and $f(\hat{\Vec{A}}^\T \Vec{z}^*) = g(\Vec{A}\hat{\Vec{A}}^\T \Vec{z}^*)
= g(\Vec{A}\hat{\Vec{A}}^\T (\Vec{A}\hat{\Vec{A}}^\T)^{-1} \Vec{Ax^*}) = f(x^*)$. Finally, we bound $\norm{\vec{z}^*}_\infty$ using matrix norm inequalities and the subspace distance $\dist(\vec{A}, \hat{\vec{A}})$ in Appendix~\ref{apx:lemma:a_hat_expansion_factor}.
\end{proof}

\begin{figure}
    \centering
    \includegraphics[width=0.95\linewidth]{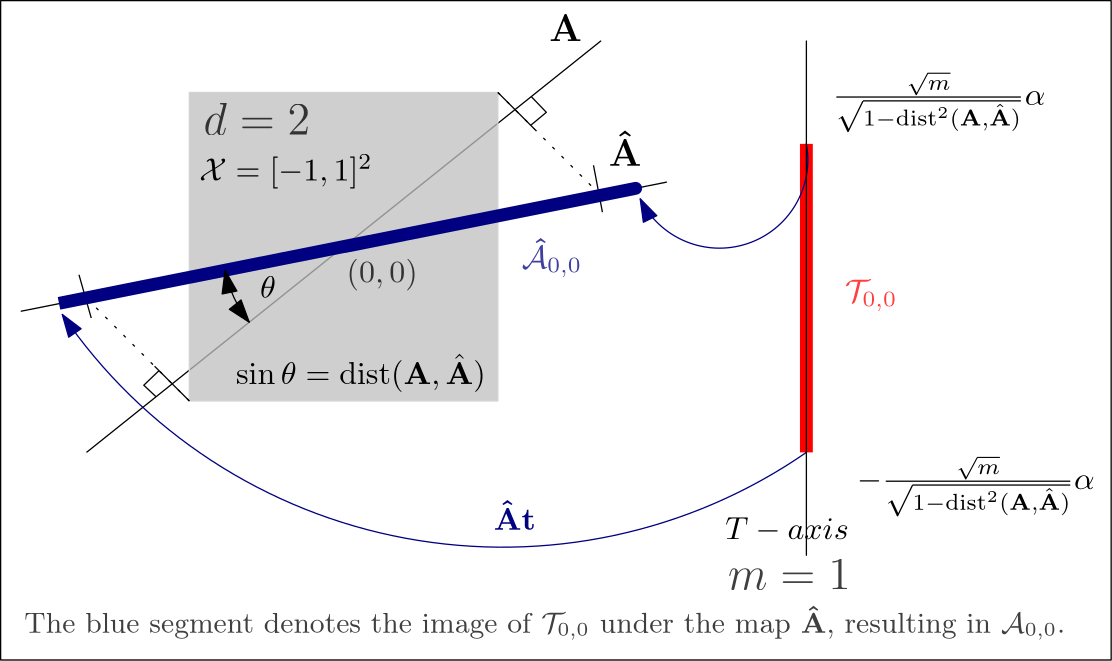}
    \caption{Illustration of Lemma~\ref{lemma:a_hat_expansion_factor} for $d=2, m=1$, showing the true subspace $\vec{A}$ and the estimated subspace $\hat{\vec{A}}$ used in defining the expansion factor.}
    \label{fig:subspace_fig}
\end{figure}

Figure~\ref{fig:subspace_fig} shows an illustration of the lemma. Specifically, optimizing $f$ over the blue segment is enough to obtain its optimum value over $\mathcal{X}=[-1,1]^2$. %
The value of $\hat{\alpha}$ used in Algorithm~\ref{alg:svd_partitioning} is obtained from this lemma.
If we have a $u \geq \dist(\vec{A}, \hat{\vec{A}})$, then since $\alpha \le \sqrt{d}$, we set
\begin{equation}\label{alpha_hat}
    \hat{\alpha} \triangleq \sqrt{dm/(1-u^2)}
\end{equation}
to ensure it is larger than $\kappatwo \alpha$.
The value of $u$ is obtained from Lemma~\ref{lem:W_perp_Frobenius} or Theorem~\ref{thm:cs_subspace}, along with Lemma~\ref{lemma:estimated_subspace_is_controlled}. In practice, $m$ is chosen to obtain a $\Vec{A}$ that preserves 95\% variance in the hidden layer weight matrix.

The next lemma bounds the number of near-optimal cells in $\hat{\mathcal{T}}$ in terms of those in $\mathcal{T}$. This is a key step toward comparing their near-optimality dimensions (see Lemma~\ref{lemma:pm_1_pm_near_opt_dim}).

\begin{lemma}\label{lemma:a_a_hear_near_optimal_cells}
For the partitioning schemes $\mathcal{T}$ and $\hat{\mathcal{T}}$ in Definition~\ref{def:A aligned partitioning}, let $\mathcal{N}_{\mathcal{T}}(3\nu_{\mathcal{T}}\rho_{\mathcal{T}}^h)$ be the number of cells $\mathcal{T}_{h,i}$ at depth $h$ for which $\sup_{\vec{z} \in \mathcal{T}_{h,i}} g(\vec{z}) \ge f^* - 3\nu_{\mathcal{T}}\rho_{\mathcal{T}}^h$. Similarly, let $\mathcal{N}_{\hat{\mathcal{T}}}(3\nu_{\hat{\mathcal{T}}}\rho_{\hat{\mathcal{T}}}^h)$ be the number of cells $\hat{\mathcal{T}}_{h,i}$ at depth $h$ for which $\sup_{\vec{z} \in \hat{\mathcal{T}}_{h,i}} g(\vec{A} \hat{\vec{A}}^\T \vec{z}) \ge f^* - 3\nu_{\hat{\mathcal{T}}}\rho_{\hat{\mathcal{T}}}^h$.
Then
\begin{align}
\forall h \ge 0, \quad\mathcal{N}_{\hat{\mathcal{T}}}(3\nu_{\hat{\mathcal{T}}}\rho_{\hat{\mathcal{T}}}^h) \le 4^m \mathcal{N}_{\mathcal{T}}(3\nu_{\mathcal{T}}\rho_{\mathcal{T}}^h) .
\end{align}
\end{lemma}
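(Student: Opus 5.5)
The strategy is to view $\hat{\mathcal{T}}$ as the image of a lattice-like partition under the linear map $\vec{M} \triangleq \vec{A}\hat{\vec{A}}^\T$. This matrix is invertible by Lemma~\ref{lemma:a_hat_expansion_factor}, since $\dist(\vec{A},\hat{\vec{A}})<1$. The function $\vec{z}\mapsto g(\vec{M}\vec{z})$ is then simply $g$ composed with a linear change of coordinates. For a near-optimal cell of $\hat{\mathcal{T}}$ at depth $h$, its image $\vec{M}\hat{\mathcal{T}}_{h,i}$ is a parallelepiped in the coordinates of $\mathcal{T}$, and it contains a point $\vec{M}\vec{z}$ with $g(\vec{M}\vec{z})\ge f^*-3\nu_{\hat{\mathcal{T}}}\rho_{\hat{\mathcal{T}}}^h$.

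I would fix the parameters of $\hat{\mathcal{T}}$ in terms of those of $\mathcal{T}$: $\rho_{\hat{\mathcal{T}}}=\rho_{\mathcal{T}}$ and $\nu_{\hat{\mathcal{T}}}\le\nu_{\mathcal{T}}$, using the same depth indexing after adjusting for the scale $\hat{\alpha}$ versus $\alpha$. The near-optimality threshold for $\hat{\mathcal{T}}$ is then at most that for $\mathcal{T}$. Any point witnessing near-optimality of a $\hat{\mathcal{T}}$ cell therefore also witnesses near-optimality of the $\mathcal{T}$ cell containing its image. This defines a map $\Phi$ from near-optimal $\hat{\mathcal{T}}$ cells to near-optimal $\mathcal{T}$ cells, sending a cell to the $\mathcal{T}_{h,j}$ containing $\vec{M}\vec{z}$ for a chosen witness $\vec{z}$.

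The count then follows from a fibre bound on $\Phi$: every near-optimal $\mathcal{T}$ cell has at most $4^m$ preimages. Here I would use that the cells of both schemes at a fixed depth $h$ are translates of one box on a scaled lattice $\Lambda$, with side lengths determined by the round-robin trisection. The question becomes how many cells of the $\hat{\mathcal{T}}$ grid, pushed forward by $\vec{M}$, can meet a single $\mathcal{T}$ box. Since $\|\vec{M}\|_2\le 1$ (it is a product of matrices with orthonormal rows), $\vec{M}$ contracts. Combined with $\hat{\alpha}\ge\alpha$, each coordinate side of $\vec{M}\hat{\mathcal{T}}_{h,i}$ is comparable to that of $\mathcal{T}_{h,j}$. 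I would then invoke Lemma~\ref{lemma:cells_tiling} in the form: a box of half-width $\kappa$ meets at most $(\kappa/\kappa'+2)^m$ lattice cells of half-width $\kappa'$. With $\kappa/\kappa'\le 2$, this gives at most $4^m$. Summing over near-optimal $\mathcal{T}$ cells yields $\mathcal{N}_{\hat{\mathcal{T}}}\le 4^m\mathcal{N}_{\mathcal{T}}$.

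The main obstacle is this fibre bound. The preimage under $\vec{M}$ of an axis-aligned box is a skewed parallelepiped, not a box, so Lemma~\ref{lemma:cells_tiling} does not apply directly. I would first enclose $\vec{M}^{-1}\mathcal{T}_{h,j}$ in an $\ell_\infty$ ball, using $\|\vec{M}^{-1}\|_2\le 1/\sqrt{1-\dist^2(\vec{A},\hat{\vec{A}})}$. I would then check that the choice of $\hat{\alpha}$ in \eqref{alpha_hat}, which scales the $\hat{\mathcal{T}}$ cells up by exactly this kind of factor, keeps the ratio $\kappa/\kappa'$ of the enclosing radius to the $\hat{\mathcal{T}}$ cell radius at most $2$. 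If the $\sqrt{m}$ norm-conversion factor breaks this, the fallback is to absorb it into the depth offset (the constants $\tilde h$) instead of the multiplicative constant $4^m$.
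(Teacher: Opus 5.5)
\textbf{Gap 1: the threshold step.} Your geometric core is the paper's. You pull a $\mathcal T$ cell back by $\vec M^{-1}$ with $\vec M=\vec A\hat{\vec A}^\T$, enclose it in an $\ell_\infty$ box, and count the $\hat{\mathcal T}$ cells meeting that box. The step that fails is the one making your map $\Phi$ land in \emph{near-optimal} $\mathcal T$ cells. It needs $\nu_{\hat{\mathcal T}}\rho_{\hat{\mathcal T}}^h\le\nu_{\mathcal T}\rho_{\mathcal T}^h$, and you propose to simply fix $\nu_{\hat{\mathcal T}}\le\nu_{\mathcal T}$.

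These parameters are not yours to choose. They must make Assumption~\ref{asp:sequool} hold for $\hat{\mathcal T}$, whose star cells are $\kappa_2\ge\sqrt m$ times wider than those of $\mathcal T$. The paper (Lemma~\ref{lemma:pm_1_pm_star_cell}) takes $\nu_{\hat{\mathcal T}}=\max\{\nu_{\mathcal T},l_g\}\rho_{\mathcal T}^{-\tilde h_2}\ge\nu_{\mathcal T}$, and with that value the displayed inequality cannot hold in general. Take $m=2$, $\hat{\vec A}=\vec A$ (so $\vec M=\vec I$, $\kappa_2=\sqrt2$, $\tilde h_2=8$) and $g(\vec z)=1-|z_1|$. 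Then $\nu_{\mathcal T}=\alpha$, $\rho_{\mathcal T}=1/\sqrt3$ are admissible. Assumption~\ref{asp:sequool} for $\hat{\mathcal T}$ with $\rho_{\hat{\mathcal T}}=\rho_{\mathcal T}$ already forces $\nu_{\hat{\mathcal T}}\ge\sqrt2\,\alpha>\nu_{\mathcal T}$, and the paper's value is $\nu_{\hat{\mathcal T}}=81\sqrt2\,\alpha$. At depth $h=2k$ with $k\ge 6$ there are then at most $5\cdot 3^k$ near-optimal $\mathcal T$ cells but at least $243\cdot 3^k$ near-optimal $\hat{\mathcal T}$ cells, a ratio above $16=4^m$.

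What your fibre argument can prove is the common-threshold bound $\mathcal N_{\hat{\mathcal T}}(\epsilon)\le 4^m\,\mathcal N_{\mathcal T}(\epsilon)$, for every depth $h$ and every $\epsilon>0$. That is also the most the paper's proof delivers. Its sentence ``since $\nu_{\hat{\mathcal T}}\rho_{\hat{\mathcal T}}^h>\nu_{\mathcal T}\rho_{\mathcal T}^h$, the above condition is satisfied by \ldots'' runs the implication backwards. The common-threshold form is exactly what the proof of Lemma~\ref{lemma:pm_1_pm_near_opt_dim} uses, with the threshold mismatch absorbed by the depth shift $\tilde h_4$; that is where a depth offset belongs.

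\textbf{Gap 2: $\Phi$ must be defined.} A witness $\vec z\in\kappa_2\alpha\Sq^m$ can have $\vec M\vec z\notin\alpha\Sq^m$ (already for $\vec M=\vec I$), and then no $\mathcal T$ cell receives it. Suppose $g$ has an isolated peak inside $\alpha\Sq^m$ and a ridge of value $f^*$ in $\vec M(\kappa_2\alpha\Sq^m)\setminus\alpha\Sq^m$. Then even the common-threshold count fails. An extra hypothesis is needed here, and the paper is silent on it too.

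\textbf{The constant.} The $\sqrt m$ in $\norm{\vec M^{-1}}_\infty\le\sqrt m/\sigma_{\min}(\vec M)=\kappa_2$ is cancelled exactly by the factor $\kappa_2$ built into the $\hat{\mathcal T}$ cells. So no depth offset is needed, and an offset would change a statement that compares both schemes at the same $h$. The ratio is $3$, though, not $2$. For $h=km+i$, the box around $\vec M^{-1}\mathcal T_{h,j}$ has half-width $\kappa_2\alpha 3^{-k}$, set by the longest side of $\mathcal T_{h,j}$. The shortest half-width of a $\hat{\mathcal T}_{h,\cdot}$ cell is $\kappa_2\alpha 3^{-(k+1)}$. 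Lemma~\ref{lemma:cells_tiling} therefore gives only $5^m$.

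To reach $4^m$, count axis by axis in the product grid, as the paper does. An interval of length $L$ meets at most $\lceil L/\ell\rceil+1$ half-open grid intervals of length $\ell$. That gives at most $4$ per axis, and only $2$ on axes trisected $k$ times. Finally, $\norm{\vec M}_2\le 1$ gives no coordinatewise control, since $\norm{\vec M}_\infty$ can be $\sqrt m$. In any case, the fibre bound needs the preimage $\vec M^{-1}\mathcal T_{h,j}$, not the forward image $\vec M\hat{\mathcal T}_{h,i}$.
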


\begin{proof}[Proof Sketch]

We aim to upper-bound the number of near-optimal cells in the partitioning scheme $\hat{\mathcal{T}}$ in terms of those in $\mathcal{T}$. The key idea is to relate the two schemes through the transformation $\vec{A}\hat{\vec{A}}^\top$, under which we optimize the function $g(\vec{A}\hat{\vec{A}}^\top \vec{z})$ over $\hat{\mathcal{T}}$. For a near-optimal cell $\mathcal{T}_{h,i} \in \mathcal{T}$, its preimage under this transformation defines a bounded set $B$ in $\RR^m$, which we enclose within a hypercube of computable size. To bound the number of near-optimal cells in $\hat{\mathcal{T}}$, we count how many of its cells at height $h$ can intersect this hypercube. We invoke the Lemma~\ref{lemma:cells_tiling} and show that at most $4^m$ such cells can intersect the enlarged region. The complete proof is provided in Appendix~\ref{apx:lemma:a_a_hear_near_optimal_cells}
\end{proof}

\begin{lemma}\label{lemma:pm_1_pm_near_opt_dim}
Consider the partitioning scheme $\hat{\mathcal{A}}$ obtained using $\hat{\vec{A}}$.
Let $l_g = g^* - \inf_{\Vec{z} \in \kappa_2 \alpha \Sq^m } g(\vec{A}\hat{\Vec{A}}^\T\Vec{z})$. Then,
\begin{align*}
       \nu_{\hat{\mathcal{A}}} = \max\{\nu_{\mathcal{A}}, l_g\}\rho^{-\tilde{h}_2 }_\mathcal{A}, \quad
    \rho_{\hat{\mathcal{A}}} = \rho_{\mathcal{A}}, \quad
    \nearoptd_{\hat{\mathcal{A}}} \le \nearoptd_{\mathcal{A}}
\end{align*}
with
\begin{equation*}
     C_{\hat{\mathcal{A}}} = C_{\mathcal{A}} 4^m \rho_{\mathcal{A}}^{\eta_{\mathcal{A}} \tilde{h}_4}, \tilde{h}_2 = \heighttwo
\end{equation*}
\begin{equation}\label{eq:tilde_h_4}
    \kappa_2 = \kappatwo, \quad \tilde{h}_4 = -\flr{\log_{\rho_{\mathcal{A}}}{\frac{\nu_{\hat{\mathcal{A}}}}{\nu_{\mathcal{A}}}}}
\end{equation}

\end{lemma}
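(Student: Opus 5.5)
The plan mirrors the proof of Lemma~\ref{lemma:near_optimality_dimension_pm_p2}, with $\mathcal{A}$ now playing the role of the reference scheme and $\hat{\mathcal{A}}$ the role of the new scheme. Throughout I work in the low-dimensional coordinates, comparing $\mathcal{T}$ (on $[-\alpha,\alpha]^m$ with objective $g(\vec{z})$) to $\hat{\mathcal{T}}$ (on $[-\hat\alpha,\hat\alpha]^m$ with objective $g(\vec{A}\hat{\vec{A}}^\T\vec{z})$). The argument has three steps: show that $\hat{\mathcal{A}}$ satisfies Assumption~\ref{asp:sequool} with the stated $(\nu_{\hat{\mathcal{A}}},\rho_{\hat{\mathcal{A}}})$; transfer the near-optimal cell count using Lemma~\ref{lemma:a_a_hear_near_optimal_cells}; and convert that count into a statement about the near-optimality dimension, with the depth shift absorbed into $C_{\hat{\mathcal{A}}}$.

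\textbf{Step 1 (Assumption~\ref{asp:sequool} for $\hat{\mathcal{A}}$).} By Lemma~\ref{lemma:a_hat_expansion_factor}, there is a $\vec{z}^*$ with $\|\vec{z}^*\|_\infty\le \kappa_2\alpha$ and $g(\vec{A}\hat{\vec{A}}^\T\vec{z}^*)=f^*$, so $\hat{\mathcal{T}}^*_h$ is well defined.
\begin{itemize}
\item For shallow depths $h\le\tilde h_2$, I use the crude bound $f^*-g(\vec{A}\hat{\vec{A}}^\T\vec{z})\le l_g$ on $\kappa_2\alpha\Sq^m$. This gives $l_g\le \max\{\nu_{\mathcal{A}},l_g\}\rho_{\mathcal{A}}^{-\tilde h_2}\rho_{\mathcal{A}}^{h}$, since $\rho_{\mathcal{A}}^{h-\tilde h_2}\ge1$.
\item For deep depths, the cell $\hat{\mathcal{T}}^*_h$ has side length $2\hat\alpha 3^{-\lfloor h/m\rfloor}$. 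Its image under $\vec{A}\hat{\vec{A}}^\T$, whose spectral norm is at most $1$, has $\ell_\infty$ diameter inflated by at most $\sqrt m$ relative to the cube.
\item The image contains $\vec{A}\vec{x}^*$, so it lies inside a $\mathcal{T}$-star cell at a shallower depth $h-\tilde h_2$. The shift $\tilde h_2=m+m\lceil\log_3\frac{2\sqrt m\kappa_2}{\kappa_2-1}\rceil$ must absorb two losses: the $\sqrt m$ inflation together with the ratio $\hat\alpha/\alpha\approx\kappa_2$, and a factor of $2$ needed to enclose a ball in the aligned grid (as in Lemma~\ref{lemma:cells_tiling}, where at most $2$ cells per axis meet the region). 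The extra $m$ accounts for the floor in $\lfloor h/m\rfloor$.
\item Applying Assumption~\ref{asp:sequool} for $\mathcal{A}$ at depth $h-\tilde h_2$ then gives the gap bound $\nu_{\mathcal{A}}\rho_{\mathcal{A}}^{h-\tilde h_2}$, which yields $\rho_{\hat{\mathcal{A}}}=\rho_{\mathcal{A}}$ and the stated $\nu_{\hat{\mathcal{A}}}$.
\end{itemize}

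\textbf{Step 2 (counting near-optimal cells).} By Lemma~\ref{lemma:a_a_hear_near_optimal_cells}, $|\mathcal{N}_{\hat{\mathcal{T}}}(3\nu_{\hat{\mathcal{A}}}\rho_{\mathcal{A}}^h)|\le 4^m|\mathcal{N}_{\mathcal{T}}(3\nu_{\hat{\mathcal{A}}}\rho_{\mathcal{A}}^h)|$. The threshold on the right still uses $\nu_{\hat{\mathcal{A}}}$, whereas the definition of $\eta_{\mathcal{A}}$ is phrased at thresholds $3\nu_{\mathcal{A}}\rho_{\mathcal{A}}^{h'}$. Setting $\tilde h_4=-\lfloor\log_{\rho_{\mathcal{A}}}(\nu_{\hat{\mathcal{A}}}/\nu_{\mathcal{A}})\rfloor$ gives $\nu_{\hat{\mathcal{A}}}\rho_{\mathcal{A}}^h\le \nu_{\mathcal{A}}\rho_{\mathcal{A}}^{h-\tilde h_4}$. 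Monotonicity of $\mathcal{N}$ in its threshold argument then bounds the count by that at threshold $3\nu_{\mathcal{A}}\rho_{\mathcal{A}}^{h-\tilde h_4}$.

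One subtlety is that cells at depth $h$ are being compared with a threshold indexed by depth $h-\tilde h_4$. I handle this exactly as in the $\tilde h_3$ step of Lemma~\ref{lemma:near_optimality_dimension_pm_p2}. I apply the definition of $\eta_{\mathcal{A}}$ with the counting bound $C_{\mathcal{A}}\rho_{\mathcal{A}}^{-\eta_{\mathcal{A}}(h-\tilde h_4)}$, which equals $C_{\mathcal{A}}\rho_{\mathcal{A}}^{\eta_{\mathcal{A}}\tilde h_4}\rho_{\mathcal{A}}^{-\eta_{\mathcal{A}}h}$. Multiplying by $4^m$ gives $|\mathcal{N}_{\hat{\mathcal{A}}}|\le C_{\hat{\mathcal{A}}}\rho_{\mathcal{A}}^{-\eta_{\mathcal{A}}h}$ for all $h$. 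Hence $\eta_{\hat{\mathcal{A}}}\le\eta_{\mathcal{A}}$ by the infimum in Definition~\ref{def:sequool_near_optimality_dimension}.

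\textbf{Main obstacle.} I expect Step~1 to be the hardest part: choosing $\tilde h_2$ so that the linearly distorted image of a $\hat{\mathcal{T}}$ star cell provably sits inside a single $\mathcal{T}$ star cell. Off-grid alignment may force the image to straddle several aligned cells. If containment in one cell fails, I would fall back on bounding the suboptimality directly. Every point of the image lies within $\ell_\infty$ distance of order $\sqrt m\,\hat\alpha 3^{-\lfloor h/m\rfloor}$ of $\vec{A}\vec{x}^*$, so it lies in some $\mathcal{T}$ cell adjacent to the star cell at depth $h-\tilde h_2$. I would then argue that Assumption~\ref{asp:sequool} for $\mathcal{A}$, applied to the shallower ancestor that contains the whole neighbourhood, controls the gap. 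The logarithmic term in $\tilde h_2$ is intended to guarantee that such an ancestor exists.
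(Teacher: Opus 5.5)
Your plan follows the paper's route. The paper obtains $(\nu_{\hat{\mathcal A}},\rho_{\hat{\mathcal A}})$ by citing Lemma~\ref{lemma:pm_1_pm_star_cell} and then runs essentially your Step~2. The genuine gap is Step~1, exactly at the obstacle you flag, and your fallback does not close it.

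Assumption~\ref{asp:sequool} for $\mathcal A$ constrains $g$ only on the nested star cells $\mathcal T^*_{h'}$ and says nothing about the cells next to them. So landing in ``some $\mathcal T$ cell adjacent to the star cell'' gives no bound. Nor can you pass to ``the shallower ancestor that contains the whole neighbourhood''. If $\vec A\vec x^*$ lies on, or very near, a face of a coarse star cell, the only star cell containing an $\ell_\infty$-ball of the relevant radius around $\vec A\vec x^*$ may be the root. Meanwhile $\tilde h_2$ does not depend on where $\vec A\vec x^*$ sits relative to the grid. In that situation $g$ can drop like a square root just across the face without changing $(\nu_{\mathcal A},\rho_{\mathcal A})$. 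The misaligned images $\vec A\hat{\vec A}^\T\hat{\mathcal T}^*_h$, which typically contain a neighbourhood of $\vec A\vec x^*$, reach across that face. Hence $\rho_{\hat{\mathcal A}}=\rho_{\mathcal A}$ with a fixed shift cannot be obtained from Assumption~\ref{asp:sequool} alone.

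Citing Lemma~\ref{lemma:pm_1_pm_star_cell} instead does not help. Its chain ends with $\|\vec z_{(k-k')m}-\vec A\hat{\vec A}^\T\vec z\|_\infty\le\kappa_2\alpha3^{-(k-k')}$, whereas $\mathcal T^*_{(k-k')m}$ has half-width only $\alpha3^{-(k-k')}$. The first term, $\|\vec z_{(k-k')m}-\vec A\vec x^*\|_\infty$, can by itself equal that half-width. So the containment $R_{km}\subseteq\mathcal T^*_{(k-k')m}$ is not established.

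Step~1 needs an extra two-sided hypothesis, for example $g^*-g(\vec w)\le\nu'\rho_{\mathcal A}^{h}$ whenever $\|\vec w-\vec A\vec x^*\|_\infty\le c\,\alpha3^{-\lfloor h/m\rfloor}$. With it, your own estimate $\|\vec A\hat{\vec A}^\T\vec z-\vec A\vec x^*\|_\infty\le2\sqrt m\,\hat\alpha3^{-\lfloor h/m\rfloor}$ for $\vec z\in\hat{\mathcal T}^*_h$ finishes Step~1 directly. The shift becomes $m\lceil\log_3(2\sqrt m\hat\alpha/(c\alpha))\rceil$, and no single-cell containment is needed.

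There is a second, smaller gap in Step~2, which the paper shares. You notice that depth-$h$ cells are being counted at a threshold indexed by $h-\tilde h_4$. You then apply the bound of Definition~\ref{def:sequool_near_optimality_dimension} at depth $h-\tilde h_4$, but that bound counts cells at depth $h-\tilde h_4$, not $h$, and it is unavailable for $h<\tilde h_4$.

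The repair is to pass to ancestors. A depth-$h$ cell that is near-optimal at threshold $3\nu_{\mathcal A}\rho_{\mathcal A}^{h-\tilde h_4}$ has a depth-$(h-\tilde h_4)$ ancestor that is near-optimal at the same threshold. Each such ancestor has $3^{\tilde h_4}$ descendants at depth $h$. So for $h\ge\tilde h_4$ the $\mathcal T$-count is at most $3^{\tilde h_4}C_{\mathcal A}\rho_{\mathcal A}^{-\eta_{\mathcal A}(h-\tilde h_4)}$, before the factor $4^m$ from Lemma~\ref{lemma:a_a_hear_near_optimal_cells}. For $h<\tilde h_4$ the trivial count $3^h\le 3^{\tilde h_4}$ suffices.

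Thus $\eta_{\hat{\mathcal A}}\le\eta_{\mathcal A}$ survives, and $C_{\hat{\mathcal A}}=4^m3^{\tilde h_4}C_{\mathcal A}$ works. The stated constant $C_{\mathcal A}4^m\rho_{\mathcal A}^{\eta_{\mathcal A}\tilde h_4}$ does not follow from this argument. It is at most $4^mC_{\mathcal A}$ because $\tilde h_4\ge0$, so it does not account for the enlarged threshold $3\nu_{\hat{\mathcal A}}\rho_{\mathcal A}^h$ at all.
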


\begin{proof}[Proof Sketch]
We first relate the SequOOL parameters $(\nu, \rho)$ of the partitioning scheme $\hat{\mathcal{A}}$ to the scheme ${\mathcal{A}}$ using Lemma~\ref{lemma:pm_1_pm_star_cell}, yielding expressions for $(\nu_{\hat{\mathcal{A}}}, \rho_{\hat{\mathcal{A}}})$. Next, using the bound on the number of near-optimal cells in $\mathcal{A}$ (Definition~\ref{def:sequool_near_optimality_dimension}), we upper bound the number of near-optimal cells in $\hat{\mathcal{A}}$ by composing with the Lemma~\ref{lemma:a_a_hear_near_optimal_cells}. The complete proof is provided in Appendix~\ref{apx:lemma:pm_1_pm_near_opt_dim}.
\end{proof}

\begin{theorem}\label{thm:main_theorem_1}
For a function in the multi-index class \eqref{eq:multi_index_model}, the regret of SequOOL applied on the partitioning scheme using $\hat{\vec{A}}$ returned by Algorithm~\ref{alg:svd_partitioning} and $\hat{\alpha} = \sqrt{dm}/\sqrt{1- \dist^2(\Vec{A}, \hat{\Vec{A}})}$ satisfies

\begin{equation*}
    r_n \leq
    \begin{cases}
       \gamma(\nu, \rho) \rho^{-\beta\tilde{h}_2 } \rho^{\frac{\beta}{C_1} {\lfloor\frac{n}{\overline{\log n}}\rfloor}}
        & \text{ if } \eta_{\mathcal{P}} = 0, \\
        \gamma(\nu, \rho) \rho^{-\beta\tilde{h}_2 }  \left(\frac{\tilde{n}}{\log{\tilde{n}}}\right)^{-\frac{\beta}{\eta_{\mathcal{P}}}}
        & \text{ if } \eta_{\mathcal{P}} > 0,
    \end{cases}
\end{equation*}

where
$\gamma(\nu, \rho) = \max\{\max\{\nu, l_f\}\rho^{(1-\beta)(m-1)-\tilde{h}_1}, l_g\}$,

$C_1 = 3^d d^{d-m} (12 \sqrt{m})^m C \rho^{-\eta_{\mathcal{P}} \tilde{h}_3} 4^m \rho^{\eta_{\mathcal{P}} \tilde{h}_4}$
and $\tilde{n} = \flr{n/\overline{\log} n}\eta_{\mathcal{P}} \log(1/\rho)/C_1$,

with $\tilde{h}_2 = \heighttwo$, $\tilde{h}_1 = d\cel{\log_3{\sqrt{m}\alpha}}$ and $\kappa_2 = \kappatwo$,
where $\tilde{h}_3, \tilde{h}_4$ are from equations~\eqref{eq:tildeh3} and~\eqref{eq:tilde_h_4} respectively.

\end{theorem}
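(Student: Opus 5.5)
The plan is to read the theorem as SequOOL's known regret bound, applied to the partitioning scheme $\hat{\mathcal{A}}$, and then substitute the parameters of $\hat{\mathcal{A}}$ obtained by chaining Lemma~\ref{lemma:near_optimality_dimension_pm_p2} with Lemma~\ref{lemma:pm_1_pm_near_opt_dim}.

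\textbf{Step 1 (validity of the search region).} First I will justify that running SequOOL on $\hat{\mathcal{A}}$ with $\hat{\alpha}=\sqrt{dm}/\sqrt{1-\dist^2(\vec{A},\hat{\vec{A}})}$ loses nothing. By Lemma~\ref{lemma:a_hat_expansion_factor}, some $\vec{z}^*$ with $\norm{\vec{z}^*}_\infty\le \kappa_2\alpha$ attains $f^*$. Since $\alpha\le\sqrt{d}$, we have $\kappa_2\alpha\le\hat{\alpha}$, so a maximizer lies in the domain of $\hat{\mathcal{T}}$. Hence the regret measured against $g(\vec{A}\hat{\vec{A}}^\T\cdot)$ on $[-\hat{\alpha},\hat{\alpha}]^m$ equals the regret for $f$ on $\mathcal{X}$.

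\textbf{Step 2 (parameters of $\hat{\mathcal{A}}$).} Next I will collect the parameters. Lemma~\ref{lemma:near_optimality_dimension_pm_p2} gives $\rho_{\mathcal{A}}=\rho^\beta$, $\nu_{\mathcal{A}}=\max\{\nu,l_f\}\rho^{(1-\beta)(m-1)-\tilde h_1}$, $\eta_{\mathcal{A}}\le\eta_{\mathcal{P}}/\beta$ and the constant $C_{\mathcal{A}}$. Lemma~\ref{lemma:pm_1_pm_near_opt_dim} then gives:
\begin{itemize}
\item $\rho_{\hat{\mathcal{A}}}=\rho^\beta$;
\item $\nu_{\hat{\mathcal{A}}}=\max\{\nu_{\mathcal{A}},l_g\}\rho^{-\beta\tilde h_2}=\gamma(\nu,\rho)\rho^{-\beta\tilde h_2}$;
\item $\eta_{\hat{\mathcal{A}}}\le\eta_{\mathcal{P}}/\beta$;
\item $C_{\hat{\mathcal{A}}}=C_{\mathcal{A}}4^m\rho_{\mathcal{A}}^{\eta_{\mathcal{A}}\tilde h_4}$.
\end{itemize}
To match $C_1$ I need $\rho_{\mathcal{A}}^{\eta_{\mathcal{A}}\tilde h_4}=\rho^{\beta\eta_{\mathcal{A}}\tilde h_4}$. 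I will use $\beta\eta_{\mathcal{A}}\le\eta_{\mathcal{P}}$, together with the monotonicity of the definition of the near-optimality dimension in $C$ and in $\eta$, to replace this factor by $\rho^{\eta_{\mathcal{P}}\tilde h_4}$. This identifies $C_{\hat{\mathcal{A}}}$ with $C_1$; it may be convenient to simply work with the upper bound $\eta_{\mathcal{P}}/\beta$ as an admissible near-optimality dimension.

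\textbf{Step 3 (apply SequOOL's theorem).} I will then invoke the SequOOL regret theorem of \citet{bartlett2019simple} with parameters $(\nu_{\hat{\mathcal{A}}},\rho_{\hat{\mathcal{A}}},\eta,C_1)$. If $\eta=0$, it gives $r_n\le\nu_{\hat{\mathcal{A}}}\rho_{\hat{\mathcal{A}}}^{\lfloor n/\overline{\log}n\rfloor/C_1}$, which becomes the first case after substituting $\rho_{\hat{\mathcal{A}}}=\rho^\beta$. If $\eta>0$, it gives $r_n\le\nu_{\hat{\mathcal{A}}}(\tilde n/\log\tilde n)^{-1/\eta}$ with $\tilde n=\lfloor n/\overline{\log}n\rfloor\,\eta\log(1/\rho_{\hat{\mathcal{A}}})/C_1$. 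Using $\eta=\eta_{\mathcal{P}}/\beta$ and $\log(1/\rho_{\hat{\mathcal{A}}})=\beta\log(1/\rho)$, the two factors of $\beta$ cancel inside $\tilde n$, and the exponent becomes $-\beta/\eta_{\mathcal{P}}$. This yields the second case.

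\textbf{Main obstacle.} The main difficulty is the bookkeeping of constants and case boundaries. When $\eta_{\mathcal{P}}=0$ the bound forces $\eta_{\hat{\mathcal{A}}}=0$, which is consistent. When $\eta_{\mathcal{P}}>0$, however, $\eta_{\hat{\mathcal{A}}}$ could be strictly smaller than $\eta_{\mathcal{P}}/\beta$, possibly even zero. I will handle this by observing that the defining inequality $|\mathcal{N}|\le C\rho^{-\eta h}$ still holds at $\eta=\eta_{\mathcal{P}}/\beta$, and that SequOOL's analysis only uses this inequality, not the exact infimum. I will also need to check that the exponent sign convention on $\tilde h_4$ (the factor $\rho^{\eta_{\mathcal{P}}\tilde h_4}$ in $C_1$) matches Lemma~\ref{lemma:pm_1_pm_near_opt_dim}, since $\tilde h_4\le0$ there.
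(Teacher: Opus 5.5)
Your plan matches the paper's proof: invoke Theorem~5/Corollary~6 of \citet{bartlett2019simple} on $\hat{\mathcal{A}}$ and substitute the parameters from Corollary~\ref{corollary:relate-pm1-p2}. Your Step~3 bookkeeping is correct, including the cancellation of the two factors of $\beta$ inside $\tilde n$. So is your remark that SequOOL's bound needs only the inequality $|\mathcal{N}|\le C\rho^{-\eta h}$, not the infimum.

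One correction in Step~2: $\tilde h_4$ is positive, not $\le 0$. We have $\nu_{\hat{\mathcal{A}}}/\nu_{\mathcal{A}}=\max\{1,l_g/\nu_{\mathcal{A}}\}\rho_{\mathcal{A}}^{-\tilde h_2}$ with $\rho_{\mathcal{A}}<1$, so $\log_{\rho_{\mathcal{A}}}(\nu_{\hat{\mathcal{A}}}/\nu_{\mathcal{A}})\le-\tilde h_2$ and hence $\tilde h_4\ge\tilde h_2>0$. Then $\beta\eta_{\mathcal{A}}\le\eta_{\mathcal{P}}$ gives $\rho^{\beta\eta_{\mathcal{A}}\tilde h_4}\ge\rho^{\eta_{\mathcal{P}}\tilde h_4}$. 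Swapping the former for the latter after applying Lemma~\ref{lemma:pm_1_pm_near_opt_dim} would therefore shrink the constant, which monotonicity in $C$ does not permit.

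Use instead the route you mention as an alternative; it is also what the paper's corollary implicitly does. The proof of Lemma~\ref{lemma:near_optimality_dimension_pm_p2} already certifies the pair $(\eta_{\mathcal{P}}/\beta,\,C_{\mathcal{A}})$ for $\mathcal{A}$. Feeding this exponent into Lemma~\ref{lemma:pm_1_pm_near_opt_dim} gives $C_{\hat{\mathcal{A}}}=C_{\mathcal{A}}4^m\rho_{\mathcal{A}}^{(\eta_{\mathcal{P}}/\beta)\tilde h_4}=C_{\mathcal{A}}4^m\rho^{\eta_{\mathcal{P}}\tilde h_4}=C_1$ exactly.
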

When $\eta_{\mathcal{P}} >0$, Algorithm 1 has $r_n = \tilde{O}(n^{-\beta/\eta_{\mathcal{P}}}) = \tilde{O}(n^{-(1 + \frac{d-m}{2m-1})/\eta_{\mathcal{P}}})$
while default SequOOL would give $\tilde{O}(n^{-1/\eta_{\mathcal{P}}})$,
showing our approach reduces the regret at a faster rate.
The proof of this theorem follows from applying the SequOOL parameters derived in Corollary \ref{corollary:relate-pm1-p2} to Theorem 5 of \citet{bartlett2019simple}. Detailed proof is in Appendix~\ref{pf:main_theorem_1_proof}.

\section{EXPERIMENTS} \label{sec:experiments}
All implementation details, benchmark functions, and experiment scripts can be found at our GitHub repository: \url{https://github.com/raja-sunkara/Learned-Partitions-SequOOL}

We evaluate our algorithms against baselines from various optimization categories. These include Bayesian Optimization (REMBO~\citep{wang2016bayesian}, HesBO~\cite{nayebi2019framework}), Evolutionary Algorithms (CMA-ES \citep{hansen2003reducing}),
Dual Annealing \cite{pincus1970monte}, Optimistic Optimization (SOO, SequOOL, RESOO \cite{qian2016scaling}, DiRect), and Random Search. The optimization functions used in our experiments include Sphere, Rastrigin, Different Powers, Rosenbrock, Styblinski-Tang, Hartmann-6, Branin, Ellipsoid, Sharp Ridge, and the CUSTOM function defined as $1 + (x_1-1)^2 + \sum_{i=d-m+2}^d (x_i-1)^4$.

In our experimental setup, we construct the multi-index function as $f(\Vec{x}) = g(\Vec{A}\Vec{x})$, where  $\Vec{x} \in \RR^d$, $g: \RR^m \rightarrow \RR$ is a standard optimization test function, and $\Vec{A} \in \RR^{m \times d}$ is a randomly generated matrix satisfying $\Vec{A} \Vec{A}^\T = \Vec{I}_m$. The Branin and Hartmann-6 functions are defined in 2 and 6 dimensions respectively; thus, we choose $m=2$ for Branin and $m=6$ for Hartmann-6 when constructing the multi-index function. To further evaluate the efficacy of our algorithm, we conducted benchmarks on low-dimensional multi-index functions with \(d=5\) and \(m=2\).
Additional experimental results are in Appendix~\ref{sec:low-dim-multi-index-results}.

\begin{figure*}[htb]
    \centering
    \includegraphics[width=0.77\linewidth]{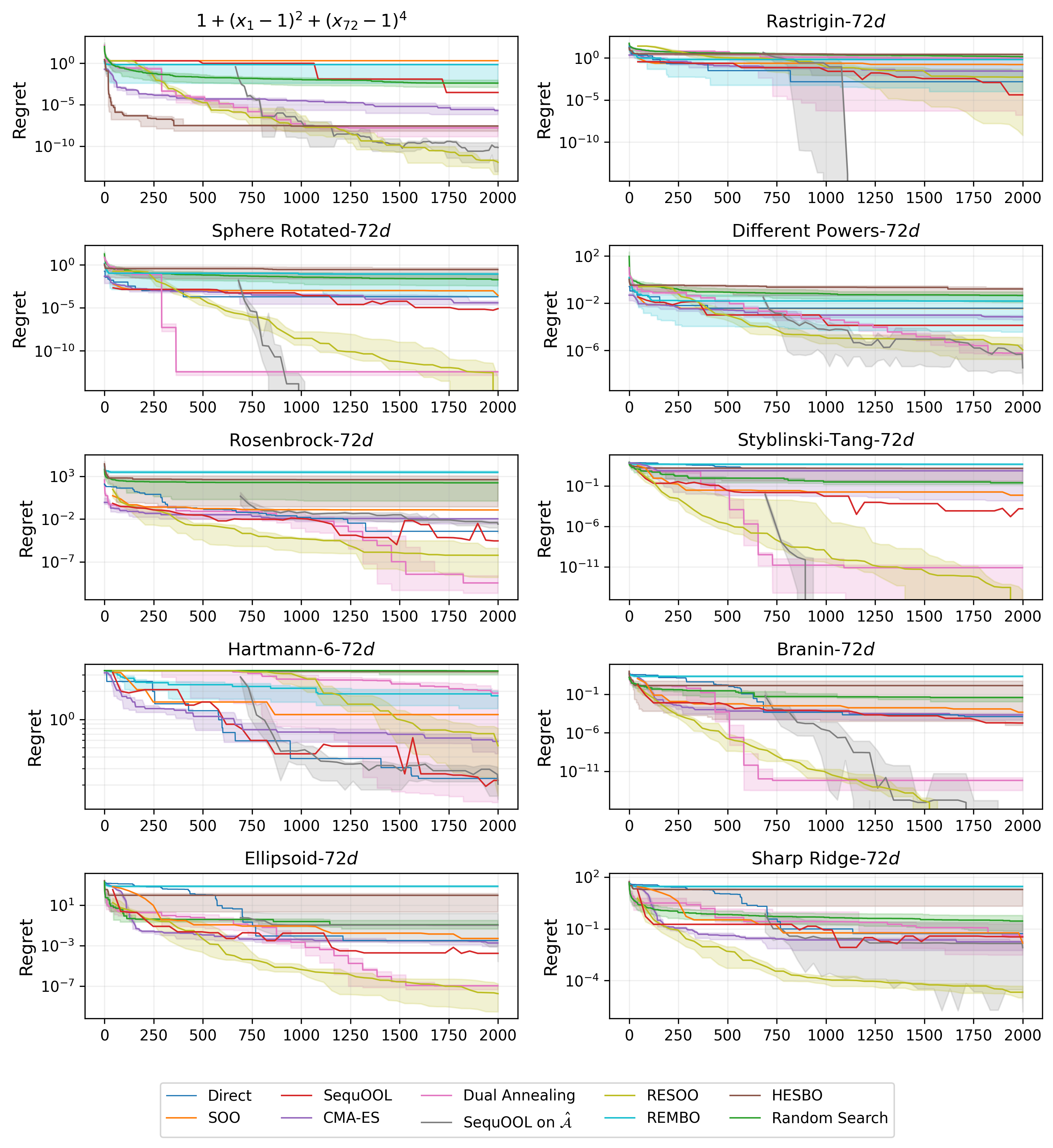}
    \caption{Regret Plots: Algorithm~\ref{alg:svd_partitioning} (SequOOL on $\hat{\mathcal{A}}$) uses 650 additional samples to learn the subspace. Regret is plotted for 100 equally spaced budget values between 1 and 2000. For the randomized algorithms, we took 10 trials and plotted the median curve (thick line) and 0 and 95 percentile curves. Random Search is run on $\hat{\mathcal{A}}$.
    }
    \label{fig:high_d_multi_index}
\end{figure*}

 Our algorithm demonstrates superior performance on the Rastrigin, Sphere and Styblinski-Tang functions, achieving zero regret with fewer samples compared to other methods. On the different powers function, we perform comparable to the other methods, however, on the $(x_1 - 1)^2 + (x_{72} - 1)^4$, Ellipsoid and Sharp Ridge function, our algorithm perform slightly worse than the RESOO. This may be attributed to the use of several (650) samples for the first stage in a 2000 budgeted experiment.
\subsection{LLM Quantization}\label{sec:llm_quant}
The AWQ \cite{lin2024awq} method for quantizing large language models formulates optimization problem as:
\begin{align*}
    \alpha^* &= \arg\min_{\alpha \in [0,1]} \mathcal{L}(\vec{s}_{\vec{X}}^\alpha)\\ \quad \mathcal{L}(\vec{s}) &= \norm{Q(\Vec{W} \cdot \vec{s})(\vec{s}^{-1} \cdot \Vec{X}) - \Vec{W}\Vec{X}}_2
\end{align*}

Where $\Vec{X}$ is the input features to the block which is cached from a calibration dataset. It uses the parameterization $\Vec{s} = \Vec{s}_{\Vec{X}}^\alpha$, where $\Vec{s}_{\Vec{X}}$ is the activation scale computed from $\Vec{X}$ and $\alpha \in [0, 1]$ and $Q$ as the quantization function and $\Vec{W}$ as the original weights (full-precision). To determine the optimal $\alpha^*$, AWQ applies a 1D grid search over the interval $[0,1]$. This parameter controls the scale of activations and influences quantization error.

Each layer of the LLM contains three primary components: the attention matrices ($\Vec{W}^Q$, $\Vec{W}^K$, $\Vec{W}^V$, and $\Vec{W}^O$: all four matrices share a single $\alpha$ per layer), the first fully connected layer ($\Vec{W}^{\texttt{fc1}}$: one $\alpha$ per layer), and the second fully connected layer ($\Vec{W}^{\texttt{fc2}}$:one $\alpha$ per layer). Consequently, this leads to three optimization parameters per layer, resulting in a total of 3M parameters to optimize across M layers. Each of these parameters is derived from separate optimization problems, all of which are solved through the grid search method in the interval $[0,1]$ to find the optimal value of $\alpha^*$.

We propose a new approach which involves solving this LLM Quantization as high-dimensional black-box optimization problem. In our approach, we jointly optimize all layers to minimize perplexity: So, our approach has one optimization problem in $3M$ dimensional space, compared to AWQ which has $3M$ one-dimensional optimization problems.
Let $\Vec{\alpha}=[\alpha_1,...,\alpha_{3M}]^\T$ represent the scales for all $M$ layers, with each layer having three parameters. We define our proposed optimization problem as:

\begin{equation*}
     \vec{\alpha}^* = \arg\min_{\vec{\alpha} \in [0,1]^{3M}} \operatorname{PPL}(\vec{\alpha})
\end{equation*}

Where,
$\operatorname{PPL}(\vec{\alpha})$ is the perplexity on the calibration set after quantization using the scaling factors derived from $\boldsymbol{\alpha}$.

We evaluated our approach on the OPT-1.3B model \cite{zhang2022opt}, with results presented in Table~\ref{LLM_results_table}. Our proposed objective function using SequOOL over 72 dimensions outperformed AWQ, achieving lower perplexity on both WikiText-2 \cite{merity2016pointer} and the calibration set (Pile dataset \cite{gao2020pile}). More details are in Appendix~\ref{sub:llm_experiment_details}.

To quantify this improvement, we compare perplexity scores across methods. The perplexity of the unquantized model is 14.47, which we treat as a reference point for comparison. Quantization with the AWQ baseline increases perplexity to 16.92, resulting in a perplexity gap of 2.45. In contrast, our proposed method (Algorithm 2) achieves a perplexity of 16.68, reducing the gap to 2.21. This corresponds to a relative improvement of approximately 10\% in perplexity degradation compared to AWQ, demonstrating the benefit of jointly optimizing the scaling factors across all layers.

\begin{table}
\centering
\caption{LLM Quantization Experiment Results \footnotesize
}
\footnotesize
\begin{tabular}{cccc}
\toprule
   Algorithm & Compute & WikiText-2 & Calibration \\
  & Time &PPL  & -Set PPL \\
\midrule
  Grid Search & $\approx$ 9 hours& 16.92 & 14.62 \\
  SequOOL & $\approx$ 10 hours& 16.83 & \textbf{14.28} \\
  Algorithm~\ref{alg:svd_partitioning} &$\approx$ 10 hours &16.96 &14.42 \\
  Algorithm~\ref{alg:adaptive partitioning} & $\approx$ 12 hours& \textbf{16.68} & \textbf{14.29} \\
\bottomrule
\end{tabular}
\label{LLM_results_table}
\end{table}

\section{DISCUSSION AND CONCLUSION
}
We have proposed an adaptive approach to learning a good partitioning scheme for black-box optimization. When the function belongs to a multi-index class, we prove that the adaptive partitioning scheme results in lower regret.
To achieve this, we have developed a novel theoretical contribution by relating the near-optimality dimensions of different partitioning schemes for the same function.
Empirically, we observe that our proposed approach is better or comparable to existing methods over a wide range of benchmark functions and can be applied to high-dimensional functions. For example, on the Rastrigin function, SequOOL achieved a regret of $4.22 \times 10^{-5}$, while our method reached zero regret. Against RESOO, on both the Rastrigin and Styblinski-Tang functions, our method achieved zero regret compared to $5.5 \times 10^{-3}$ for RESOO on Rastrigin. For Styblinski-Tang, our method reached zero regret with approximately 900 evaluations, versus 2000 for RESOO.

Some limitations of our approach are estimating the low-dimensional subspace dimension and using an upper bound on the subspace distance in our algorithm. When the multi index assumption is not valid, we see only a mild benefit over SequOOL, e.g., in the AWQ experiments, our proposed objective function has function variation in all the directions and improvement over SequOOL is minimal. In the two-stage algorithm, the first stage chooses samples randomly for the subspace learning and these samples reduce the budget available for optimization.

There are several avenues for future work. We can extend our approach to the case when the function evaluations are noisy. We can consider other low-dimensional functions beyond the class of multi-index functions, e.g., functions with bounded second order variation in the Radon domain~\cite{parhi2022near}. Obtaining theoretical guarantees for direction selection strategy could be possible using active learning techniques, e.g., in~\cite{mukherjee22}. %

\enlargethispage{\baselineskip}

\newpage
\section*{Acknowledgements}

This material is based upon work supported by the National Science Foundation under Grant No.\ 2246187.  The authors thank the reviewers and meta-reviewers for their careful reading and insightful suggestions, which significantly improved the quality of the paper. Ardhendu Tripathy acknowledges the helpful discussions with Robert Nowak.

 \section*{Impact Statement}

This paper presents work whose goal is to advance the field of Machine Learning. There are many potential societal consequences of our work, none which we feel must be specifically highlighted here.

\bibliographystyle{icml2025}
\bibliography{bbo}

\onecolumn

\section{Notation}
\label{sec:notation}
Let $\mathbb{N}_0 = \mathbb{N} \cup \{0\}$ denote the set of non-negative integers and $[d]$ represents the set $\{1,2,\dots, d\}$. For vectors and matrices, we use $(\cdot)^\T$ to denote the transpose.
 For a $d$-dimensional vector $\Vec{x} = [x_1, \dots, x_d]^\T,$ we use $\norm{\Vec{x}}_p$ and $\norm{\Vec{x}}_\infty $ to denote its $\ell_p$ and $\ell _{\infty }$ norm respectively.
For any matrix $\Vec{X} = [\Vec{x}_1, \dots, \Vec{x}_n],$ $\norm{\Vec{X}}_2$ and $\norm{\Vec{X}}_F$ represent its spectral and Frobenius norms.
 We use $\sigma_i(\Vec{X})$ for the $i^{th}$ largest singular value, with $\sigma_{\max}(\Vec{X})$ and $\sigma_{\min}(\Vec{X})$ denoting the largest and smallest singular values. Suppose $\vec{A} \in \mathbb{R}^{m \times d}$, with $\Vec{A}\Vec{A}^\T = \Vec{I}_m$, then for any vector $\vec{v} \in \mathbb{R}^d$, we denote its orthogonal projection onto the span of the rows of $\Vec{A}$ as $\vec{v}_{\parallel} = \Vec{A}^\T \Vec{A} \vec{v}$, with the orthogonal component given by $\vec{v}_{\perp} = \vec{v} - \vec{v}_{\parallel}$. For a matrix $\vec{W} \in \mathbb{R}^{p \times d}$, we denote $\vec{W}_{\parallel}$ and $\vec{W}_{\perp}$ as the projections applied to each row, i.e., $\vec{W}_{\parallel} = \vec{W}\vec{A}^\T \vec{A}$ and $\vec{W}_{\perp} = \vec{W} - \vec{W}_{\parallel}$. For a given scalar $\kappa > 0$, we denote $\kappa \mathcal{X}$ as the set $\{\kappa x : x \in \mathcal{X}\}$. By a partitioning scheme $\mathcal{P}$ with $\kappa$, we mean the domain for the partitioning scheme is the set $\{\kappa x : x \in [-1,1]^d\}$. By the default partitioning scheme $\mathcal{P}$, we mean partitioning scheme $\mathcal{P}$ with $\kappa = 1$. To describe side-length of a hyper-rectangle, we use the notation $[3^{-\flr{\frac{h+m-i}{m}}}]_{i=1}^m$. This expressions represents a vector of $m$ components, where each component is given by $3^{-\flr{\frac{h+m-i}{m}}}$, with $i$ ranging from $1$ to $m$.

\section{Omitted details for Section~\ref{sec:algorithms}}

A single hidden layer neural network with $p$ hidden neurons maps input $\vec{x} \in \RR^d$ to the scalar
\begin{equation}\label{eq:two_layer_nn}
    \hat{y}(\vec{x},\vec{W},\vec{a},\vec{b}) = \sum_{i=1}^p a_i \sigma(\vec{w}_i^\T \vec{x} + b_i),
\end{equation}
where $\sigma$ is the non-linear activation function,
$\vec{W}$ is the hidden layer weight matrix consisting of $p$ weight vectors denoted as $\vec{w}_i$, $b_i$ is the scalar bias for the $i$th hidden neuron, and $a_i$ are the components of the output layer weight vector. The following proposition shows that the class of single hidden layer neural networks can represent the important subspace of multi-index functions.
\begin{proposition}\label{propn:ridge_fun_projection}
Consider a function $f(\vec{x}) = \sum_{i=1}^p v_i\sigma(\vec{w}_i^\T \vec{x} + b_i)$, where $\vec{x} \in \RR^d$ and $\sigma$ is a non-linear function.
Let $\vec{x} = \vec{P}\vec{x} + (\vec{I}-\vec{P})\vec{x}$, where $\vec{P}$ is the projection matrix that maps any $x \in \RR^d$ to $\spn\{\vec{w}_1, \vec{w}_2, \ldots, \vec{w}_p\}$. Then $f(\vec{x}) = \sum_{i=1}^p v_i\sigma(\vec{w}_i^\T\vec{P}\vec{x} + b_i)$. And if $\vec{x}, \vec{x'} \in \RR^d$
are such that $(\vec{x}-\vec{x'}) \perp \spn\{\vec{w}_1, \vec{w}_2, \ldots, \vec{w}_p\}$, then $f(\vec{x}) = f(\vec{x'})$.
\end{proposition}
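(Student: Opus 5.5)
The plan is to split $\vec{x}$ orthogonally with respect to $\spn\{\vec{w}_1,\ldots,\vec{w}_p\}$ and show that each hidden neuron only sees the component in that span. Let $\vec{P}$ be the orthogonal projection onto $\mathcal{W} \triangleq \spn\{\vec{w}_1,\ldots,\vec{w}_p\}$. By definition of an orthogonal projection, $\vec{P}$ is symmetric and idempotent, $\vec{P}\vec{w}_i = \vec{w}_i$ for every $i$ (since $\vec{w}_i \in \mathcal{W}$), and $(\vec{I}-\vec{P})\vec{x} \in \mathcal{W}^\perp$ for every $\vec{x} \in \RR^d$.

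The first step is the pre-activation identity
\begin{equation*}
\vec{w}_i^\T \vec{x} = \vec{w}_i^\T \vec{P}\vec{x} + \vec{w}_i^\T (\vec{I}-\vec{P})\vec{x} = \vec{w}_i^\T \vec{P}\vec{x},
\end{equation*}
where the second term vanishes because $\vec{w}_i \in \mathcal{W}$ and $(\vec{I}-\vec{P})\vec{x} \in \mathcal{W}^\perp$. Equivalently, $\vec{w}_i^\T(\vec{I}-\vec{P}) = ((\vec{I}-\vec{P})\vec{w}_i)^\T = \vec{0}^\T$ by symmetry of $\vec{P}$. Adding $b_i$, applying $\sigma$ and summing with weights $v_i$ gives $f(\vec{x}) = \sum_{i=1}^p v_i \sigma(\vec{w}_i^\T \vec{P}\vec{x} + b_i)$. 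No property of $\sigma$ is needed; in particular, the nonlinearity assumption plays no role.

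For the second claim, suppose $(\vec{x}-\vec{x}') \perp \mathcal{W}$. Then $\vec{P}(\vec{x}-\vec{x}') = \vec{0}$, so $\vec{P}\vec{x} = \vec{P}\vec{x}'$. Substituting into the representation just derived, every summand for $\vec{x}$ coincides with the corresponding summand for $\vec{x}'$, hence $f(\vec{x}) = f(\vec{x}')$. Alternatively, one can apply $\vec{w}_i^\T(\vec{x}-\vec{x}') = 0$ directly to each pre-activation.

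There is no real obstacle here. The only point needing care is that $\vec{P}$ must be the \emph{orthogonal} projection: symmetry is what makes $\vec{w}_i^\T(\vec{I}-\vec{P})$ vanish. I would state this explicitly, and note that the argument also covers the degenerate case where the $\vec{w}_i$ are linearly dependent, since nothing beyond $\vec{w}_i \in \mathcal{W}$ is used.
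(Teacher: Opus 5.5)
Your proof is correct and follows essentially the same route as the paper. Both arguments use symmetry of the orthogonal projection $\vec{P}$ together with $\vec{P}\vec{w}_i = \vec{w}_i$ to get $\vec{w}_i^\T\vec{P}\vec{x} = \vec{w}_i^\T\vec{x}$, and then use $\vec{P}\vec{x} = \vec{P}\vec{x}'$ for the second claim; the only cosmetic difference is that the paper first assumes the $\vec{w}_i$ are linearly independent and writes $\vec{P}$ explicitly as $\vec{M}(\vec{M}^\T\vec{M})^{-1}\vec{M}^\T$ with $\vec{M} = [\vec{w}_1, \ldots, \vec{w}_p]$, which your version shows is unnecessary.
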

We use SGD with weight decay to fit the neural network on the function evaluations obtained at uniform random locations in $\mathcal{X}$. The top $m$ right singular vectors of the learned weight matrix of the hidden layer is used as the estimated $\hat{\vec{A}}$ for obtaining the partitioning scheme~$\hat{\mathcal{A}}$.

Algorithm~\ref{alg:lookahead} describes our lookahead direction selection strategy $\tau_h(\hat{f})$.

\paragraph{Querying outside the domain and optimizing $\hat{f}$}
In practice, with our optimization domain set as $\mathcal{X} = [-1,1]^d$, we may encounter situations during low-dimensional subspace optimization where $\Vec{t} \in [-\alpha, \alpha]^m$ results in $\hat{\Vec{A}}^\T \Vec{t} \notin \mathcal{X}$. To ensure that $f$ can be evaluated in all cases, we employ a two-step approach. First, we attempt to solve the optimization problem: $\argmin_{\Vec{t}_c \in \RR^{d-m}} \norm{\hat{\Vec{A}}_c^\T \Vec{t}_c}_2$ subject to $\hat{\Vec{A}}^\T \Vec{t} + \hat{\Vec{A}}_c^\T \Vec{t}_c \in \mathcal{X}$, where $\hat{\Vec{A}}_c$ consists of the remaining $d-m$ columns of $\hat{\Vec{A}}$ that are not in $\hat{\Vec{A}}$. If this optimization problem has no feasible solution, we then employ Euclidean projection onto $\mathcal{X}$: $\argmin_{x \in \mathcal{X}} \norm{\Vec{x} - \hat{\Vec{A}}^\T \Vec{t} }_2$. This projection method is applied whenever $\hat{\Vec{A}}^\T \Vec{t} \notin \mathcal{X}$, ensuring that we always have a valid point within our optimization domain.In practice, we estimate the minimum of $\hat{f}$ on the domain $\mathcal{T}_{h+1}$ using random sampling or any of the other black-box optimization algorithms, since $\hat{f}$ is cheap to evaluate and gradients are also available.

\begin{algorithm}[t]
\caption{Implementing lookahead direction selection strategy $\tau_h(\hat{f})$}
\label{alg:lookahead}
\begin{algorithmic}
\REQUIRE Current partition tree $\mathcal{T}$, height $h$, estimated function $\hat{f}$
\STATE $\hat{\vec{x}}{h}^* \leftarrow \arg\max_i f(\vec{x}{h, i})$, representative of cell $\mathcal{T}{h}$ with the largest function value at height $h$
\STATE $\mathcal{T}{h} \gets$ cell at height $h$ in $\mathcal{T}$ whose representative is $\hat{\vec{x}}{h}^*$, $axis\_to\_split \gets 0$, $minimum \gets \infty$
\FOR{$i \gets 1$  $m$}
\STATE $\mathcal{T}{h+1} \gets$ child cell of $\mathcal{T}h$ after trisecting axis $i$ and having the same representative $\hat{\vec{x}}^*{h}$
\STATE temp $\gets$ Compute minimum of $\hat{f}$ on the domain $\mathcal{T}_{h+1}$
\IF{temp < minimum}
\STATE $axis\_to\_split \gets i$
\STATE $minimum \gets temp$
\ENDIF
\ENDFOR
\STATE \textbf{return} $axis\_to\_split$
\end{algorithmic}
\end{algorithm}

\section{Omitted details for Section~\ref{sec:theoretical}}

Proofs are in Section~\ref{sec:proofs}.

First, we start with relating the default partitioning scheme $\mathcal{P}$ with $\mathcal{A}$ partitioning scheme. To establish the relationship between the near-optimality dimensions of the two schemes, we first need to compare the parameters $(\nu, \rho)$ of \texttt{SequOOL} across both partitioning schemes. This requires a characterization of the cells $\istar{A}{h}$and $\istar{P}{h}$. The following proposition provides this characterization.
\begin{proposition}\label{prop:star_cells}
    Let  $\kappa >0$ and $\vec{\alpha}^*_{h} \in \mathbb{R}^m$ be the representative of the $\mathcal{A}^*_h$ cell containing a global maxima of the function. Using %
    fraction of two vectors to denote component-wise division,

\begin{equation}
    \mathcal{A}^*_{h} = \{\vec{A^\T\alpha}: \vec{\alpha} \in \RR^m, \norm{\frac{\vec{\alpha} - \vec{\alpha}^*_{h}}{\vec{s}}}_{\infty} \le \alpha  \quad \text{with} \quad \vec{s} =  [3^{-\flr{\frac{h+m-i}{m}}}]_{i=1}^m \}.
\end{equation}

Similarly, if $\vec{x}^*_{h} \in \mathbb{R}^d$ is the representative of the $\mathcal{P}^*_h$ cell containing the same global maxima,
\begin{equation}
    \mathcal{P}^*_{h} = \{\vec{x}: \vec{x} \in \RR^d, \norm{ \frac{\vec{x} - \vec{x}^*_{h}}{\vec{c}}}_{\infty} \le \kappa  \quad \text{with} \quad \vec{c} =  [3^{-\flr{\frac{h+d-i}{d}}}]_{i=1}^d \}.
\end{equation}
\end{proposition}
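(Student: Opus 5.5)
We prove both descriptions by induction on the depth $h$, using the round-robin trisection rule of Definition~\ref{def:all_partitions} applied, respectively, to $\mathcal{T}$ on $[-\alpha,\alpha]^m$ (pushed forward by $\vec{A}^\T$ as in Definition~\ref{def:A aligned partitioning}) and to $\mathcal{P}$ on $\kappa\Sq^d$. The claim is that every cell at depth $h$, in particular the star cell, is an axis-aligned hyperrectangle centered at its representative. Its half-width along axis $i$ equals the initial half-width ($\alpha$ or $\kappa$) times $3^{-k_i(h)}$, where $k_i(h)$ is the number of times axis $i$ has been trisected in the first $h$ levels.

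\textbf{Step 1 (counting splits).} The split at level $h'\in\{0,\dots,h-1\}$ is along axis $(h' \bmod m)+1$. Hence axis $i$ is split exactly at the levels $h'$ with $h' \equiv i-1 \pmod m$, and the number of such levels is
\[
k_i(h) = \#\{h' \in \{0,\dots,h-1\} : h' \equiv i-1 \pmod m\} = \flr{\tfrac{h-(i-1)+m-1}{m}} = \flr{\tfrac{h+m-i}{m}}.
\]
This matches the component $s_i$. The same count with $d$ in place of $m$ gives $c_i$.

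\textbf{Step 2 (induction on cell shape).} At $h=0$ the single cell is $[-\alpha,\alpha]^m$. It is centered at $\vec{0}$ and has $\norm{\cdot}_\infty$-radius $\alpha$, so $\vec{s}=\vec{1}$. Now suppose a depth-$h$ cell is $\{\vec\alpha : \norm{(\vec\alpha-\vec\alpha_{h})/\vec s}_\infty\le\alpha\}$. Trisecting along axis $j=(h\bmod m)+1$ produces three boxes of equal width. Their centers are $\vec\alpha_h$ and $\vec\alpha_h \pm \tfrac{2}{3}\alpha s_j\vec e_j$. The half-width along $j$ becomes $\alpha s_j/3$, and all other half-widths are unchanged. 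This matches $\vec s$ at depth $h+1$ by Step 1.

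\textbf{Step 3 (identify the star cell).} The child containing the maximizer $\vec z^*$ (its existence in $[-\alpha,\alpha]^m$ is Proposition~\ref{lemma:a_extent}) is the depth-$(h+1)$ star cell, and its center is by definition its representative $\vec\alpha^*_{h+1}$. Applying $\vec{A}^\T$ to the cell in $\RR^m$ gives the stated set $\mathcal{A}^*_h$. The $\mathcal{P}$ statement is the identical argument in $\RR^d$ with initial radius $\kappa$.

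\textbf{Main obstacle.} The only delicate point is the off-by-one bookkeeping in the floor formula: split levels start at $h'=0$ on axis $1$. I would verify it directly at $h=1$, where $\vec{s}=(1/3,1,\dots,1)$, and at $h=m$, where $\vec{s}=(1/3,\dots,1/3)$. A secondary issue is cells on boundaries shared between children when the maximizer lies on a face; in that case we take any child containing it, and the description still holds.
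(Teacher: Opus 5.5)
Your proposal is correct and takes essentially the same route as the paper: describe every depth-$h$ cell as an axis-aligned box centered at its midpoint representative, with the half-width on axis $i$ equal to the initial radius times $3^{-\lfloor (h+m-i)/m\rfloor}$ (resp.\ $d$ in place of $m$), then push forward by $\vec{A}^\T$. The paper simply asserts that this floor expression counts the trisections per axis, whereas your explicit count of split levels and your induction on half-widths supply that justification.
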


Proposition~\ref{lemma:a_extent} shows that we can use $\mathcal{A}$ partitioning scheme to perform optimization. We now relate the $\nu$ and $\rho$ parameters (see Assumption~\ref{asp:sequool}) of the partitioning schemes $\mathcal{P}$ and $\mathcal{A}$.

To establish relationships between the parameters of the partitioning schemes $\mathcal{A}$ and the default scheme $\mathcal{P}$, we need to connect the sets $\istar{P}{h}$ and $\istar{A}{h}$. The following lemma provides this connection:
\begin{lemma}\label{lemma:P_m_is_inP_2}
Let the domain for $\mathcal{P}$ be $\kappa \Sq^d = \{\kappa \vec{x} \mid \vec{x} \in \Sq^d\}$ with $\kappa \ge \kappaone$.
Suppose $\vec{x}^* \in \mathcal{P}^*_{h}$ is such that $\vec{x}^* = \vec{A}^\T \vec{A} \vec{x}^*$. Then, $\forall i \in [1:m-1], \forall k \in \mathbb{N}_0$ we have that
$\mathcal{A}^*_{km+i} \subseteq \mathcal{P}^*_{kd+i}$.
\end{lemma}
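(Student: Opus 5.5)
The plan is to compare the two star cells coordinate-wise, using the explicit box descriptions in Proposition~\ref{prop:star_cells}. First I will work out the side-lengths at the two depths in question.

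\emph{Side-lengths of $\mathcal{A}^*_{km+i}$.} For $h=km+i$ with $i\in[1:m-1]$, the scaling vector is $\vec{s}=[3^{-\flr{(km+i+m-j)/m}}]_{j=1}^m$. Its entries are $3^{-(k+1)}$ for $j\le i$ and $3^{-k}$ for $j>i$. So $\mathcal{A}^*_{km+i}=\{\vec{A}^\T\vec{\alpha} : |\alpha_j-\alpha^*_{h,j}|\le \alpha s_j\}$ is the image under $\vec{A}^\T$ of an $m$-dimensional box. Every side of that box is at most $2\alpha 3^{-k}$, and the first $i$ sides are at most $2\alpha 3^{-(k+1)}$.

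\emph{Side-lengths of $\mathcal{P}^*_{kd+i}$.} Likewise, $\mathcal{P}^*_{kd+i}$ is a $d$-dimensional grid box in $\kappa\Sq^d$. Its half-widths are $\kappa 3^{-(k+1)}$ in coordinates $l\le i$ and $\kappa 3^{-k}$ in coordinates $l>i$. The indexing $km+i \leftrightarrow kd+i$ is chosen so that both schemes are in the same phase of their round-robin: both have completed $k$ full rounds plus $i$ extra trisections.

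\emph{Reduction to a displacement bound.} Take any $\vec{y}=\vec{A}^\T\vec{\alpha}\in\mathcal{A}^*_{km+i}$. By hypothesis $\vec{x}^*=\vec{A}^\T\vec{A}\vec{x}^*$, and $\vec{A}\vec{x}^*$ lies in the corresponding $\mathcal{T}$-cell. Hence
\begin{equation*}
\vec{y}-\vec{x}^*=\vec{A}^\T(\vec{\alpha}-\vec{A}\vec{x}^*),
\end{equation*}
where both $\vec{\alpha}$ and $\vec{A}\vec{x}^*$ lie in the same $\mathcal{T}$-box. For each coordinate $l$ this gives $|(\vec{y}-\vec{x}^*)_l|\le \norm{\vec{A}^\T(\vec{\alpha}-\vec{A}\vec{x}^*)}_\infty \le \norm{\vec{\alpha}-\vec{A}\vec{x}^*}_2 \le \sqrt{m}\,\max_j 2\alpha s_j$. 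This is where the hypothesis $\kappa\ge \kappaone\ge\sqrt{m}\alpha$ enters: it should make the whole image $\mathcal{A}^*_{km+i}$ fit inside a box around $\vec{x}^*$ whose sides are comparable to those of $\mathcal{P}^*_{kd+i}$. Because $\kappa$ is a power of $3$, the $\mathcal{P}$ grid at every depth is a refinement of a triadic lattice. So I would also invoke Lemma~\ref{lemma:cells_tiling} to control how the image box sits relative to the $\mathcal{P}$ grid.

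\emph{Main obstacle.} A bound on $\norm{\vec{y}-\vec{x}^*}_\infty$ alone does not suffice, for two reasons.
\begin{itemize}
\item $\mathcal{P}^*_{kd+i}$ is a fixed grid cell, not a box centred at $\vec{x}^*$. If $\vec{x}^*$ lies near a face of that cell, a small neighbourhood of $\vec{x}^*$ can leave it.
\item The first $i$ coordinates of $\mathcal{P}^*_{kd+i}$ have been split one extra time, while the crude $\ell_2$ bound above mixes all $m$ scales.
\end{itemize}
To address the first issue, I would use the freedom in the definition: $\mathcal{P}^*_h$ is \emph{a} cell containing a maximizer. When $\vec{x}^*$ lies on a boundary, I would select, consistently along the tree, the cell containing the $\mathcal{A}$ star cell; the nestedness of both trees makes this consistent. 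To address the second issue, I would carry the per-direction scales $s_j$ through $\vec{A}^\T$ instead of collapsing them to $\max_j s_j$. I expect this step to need the most care, and possibly a sharper use of the constant $\kappaone$ than the plain inequality $\kappa \ge \sqrt{m}\alpha$.
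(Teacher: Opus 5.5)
\textbf{There is a genuine gap, and it cannot be closed: the lemma is false as stated.} Your proposal stops after the displacement bound and lists two obstacles with tentative remedies. Neither remedy can work, because both obstacles are actual failures of the inclusion.

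\emph{Obstacle (2).} Carrying the scales through $\vec{A}^\T$ gives the exact half-extent $\alpha\sum_j |A_{jl}|\,s_j$ of $\mathcal{A}^*_{km+i}$ in coordinate $l$. For $l\le i$ this must fit inside the half-width $\kappa 3^{-(k+1)}$ of $\mathcal{P}^*_{kd+i}$. A coarse direction $j>i$ (half-width $\alpha 3^{-k}$) leaking into a fine coordinate $l\le i$ breaks this when $\kappa$ is as small as $\kappaone$. Concretely, take $d=3$, $m=2$, $\vec{a}_1=(0,0,1)^\T$ and $\vec{a}_2=(0.8,0.6,0)^\T$. Then $\alpha=1.4$ and $\kappaone=3$. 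Take $\kappa=3$, $g(\vec{z})=-\norm{\vec{z}}_2$ and $\vec{x}^*=\vec{0}$. All star cells are the centred ones. $\mathcal{A}^*_{2k+1}$ contains $1.4\cdot 3^{-k}\vec{a}_2$, whose first coordinate is $1.12\cdot 3^{-k}$. But $\mathcal{P}^*_{3k+1}$ has first coordinate in $[-3^{-k},3^{-k}]$, so the inclusion fails for every $k$.

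\emph{Obstacle (1).} Choosing ``the $\mathcal{P}$-cell containing the $\mathcal{A}$ star cell'' presupposes that such a cell exists, which is the claim itself. The problem is also not confined to $\vec{x}^*$ lying on a face. Keep the same $\vec{A}$ but take $\kappa=9$, $g(\vec{z})=-\norm{\vec{z}-(0,1)^\T}_2$ and $\vec{x}^*=\vec{a}_2$. Now the widths are fine, yet $\mathcal{A}^*_3=\vec{A}^\T\bigl([-\tfrac{1.4}{9},\tfrac{1.4}{9}]\times[\tfrac{1.4}{3},1.4]\bigr)$ reaches $x_1=1.12$. Meanwhile $\mathcal{P}^*_4$ has $x_1\in[-1,1]$, since its $x_1$-grid boundaries are the odd integers.

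\emph{The paper's proof.} It follows the same route as your first step. It shows $\norm{\vec{x}-\tilde{\vec{x}}^*_{kd+i}}_\infty\le\norm{\vec{A}^\T(\vec{\alpha}-\vec{\alpha}^*)}_\infty+\norm{\vec{x}^*-\tilde{\vec{x}}^*_{kd+i}}_\infty\le 3^{-(k-1)}\kappa$. It then concludes $\vec{x}\in\mathcal{G}^*_{(k-1)d}\subseteq\mathcal{G}^*_{kd+i}$. That last inclusion is reversed: deeper star cells are nested inside shallower ones. In any case, a box of half-width $3^{-(k-1)}\kappa$ about the representative of $\mathcal{G}^*_{kd+i}$ is not $\mathcal{G}^*_{(k-1)d}$. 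So your distrust of the crude $\ell_\infty$ bound was justified, and no sharper bookkeeping of the $s_j$ or of $\kappaone$ will rescue the lemma.

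\emph{What survives.} Only the metric statement $\norm{\vec{y}-\vec{x}^*}_\infty\le 2\sqrt{m}\,\alpha\,3^{-k}$ for $\vec{y}\in\mathcal{A}^*_{km+i}$ holds. Lemma~\ref{lemma:nu_rho_relate_pm_p2} needs something like $f^*-f(\vec{y})\le\nu\rho^{kd+i-c}$ for a constant $c$. Getting that requires a hypothesis phrased in terms of distance to $\vec{x}^*$ (a local smoothness condition). Assumption~\ref{asp:sequool} does not suffice, because it controls $f$ only on the grid cells that happen to contain $\vec{x}^*$.
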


Having established the relationship between the star cells of the partitioning schemes $\mathcal{A}$ and $\mathcal{P}$, we can now proceed to relate their respective parameters. The following two lemmas establish these relationships.

This lemma connects the parameters $(\nu, \rho)$ of the partitioning scheme $\mathcal{P}$ with $\kappa \ge \kappaone$ to the parameters $(\nu_{\mathcal{A}}, \rho_{\mathcal{A}})$ of the scheme $\mathcal{A}$.
\begin{lemma}\label{lemma:nu_rho_relate_pm_p2}
Let the parameters for the partitioning schemes $\mathcal{P}, \mathcal{A}$ be $(\nu, \rho), (\nu_{\mathcal{A}}, \rho_{\mathcal{A}})$ respectively. If Lemma~\ref{lemma:P_m_is_inP_2} is applicable and $\mathcal{P}$ satisfies Assumption~\ref{asp:sequool}. Then we have that
$\nu_{\mathcal{A}} = \nu \rho^{(1-\beta)(m-1)}, \rho_{\mathcal{A}} = \rho^{\beta} \text{ where } \beta = \betacon$.
\end{lemma}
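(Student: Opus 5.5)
We prove Lemma~\ref{lemma:nu_rho_relate_pm_p2} by transferring Assumption~\ref{asp:sequool} from $\mathcal{P}$ to $\mathcal{A}$ through the star-cell inclusion of Lemma~\ref{lemma:P_m_is_inP_2}. The goal is to show that for every depth $h'$ of $\mathcal{A}$ and every $\vec{x} \in \mathcal{A}^*_{h'}$,
\[
f(\vec{x}) \ge f^* - \nu_{\mathcal{A}} \rho_{\mathcal{A}}^{h'}.
\]
Write $h' = km + i$ and use the inclusion $\mathcal{A}^*_{km+i} \subseteq \mathcal{P}^*_{kd+i}$. Assumption~\ref{asp:sequool} for $\mathcal{P}$ then gives
\[
f(\vec{x}) \ge f^* - \nu \rho^{kd+i}.
\]
It remains to show that $\nu\rho^{kd+i} \le \nu\rho^{(1-\beta)(m-1)}\rho^{\beta(km+i)}$ for every admissible $k, i$. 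Since $\rho<1$, this is equivalent to the exponent inequality
\[
kd + i \ \ge\ (1-\beta)(m-1) + \beta(km+i).
\]

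This inequality is the algebraic core of the proof. Substituting $\beta - 1 = \frac{d-m}{2m-1}$, it becomes
\[
k(d-m) \ \ge\ (\beta-1)\bigl(km + i - (m-1)\bigr) = \frac{(d-m)(km+i-m+1)}{2m-1}.
\]
After dividing by $d-m>0$ (the case $d=m$ is trivial since $\beta=1$), it reduces to
\[
k(2m-1) \ \ge\ km + i - m + 1, \qquad\text{that is,}\qquad k(m-1) \ge i - m + 1.
\]
This holds because $i \le m-1$ makes the right-hand side at most $0$, while the left-hand side is nonnegative. Depths with $i=0$, i.e.\ $h' = km$, must also be covered. For these, take $i=m$ at level $k-1$: if the inclusion $\mathcal{A}^*_{km}\subseteq\mathcal{P}^*_{(k-1)d+m}$ can be justified, the inequality becomes $(k-1)(m-1) \ge 1$, which handles $k \ge 2$. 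The remaining small depths are absorbed using $\nu\rho^{(1-\beta)(m-1)} \ge \nu$ together with the trivial bound at depth $0$.

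\textbf{Main obstacle.} The difficulty lies not in the algebra but in the depths not covered by Lemma~\ref{lemma:P_m_is_inP_2}, namely the multiples of $m$. The lemma is stated only for $i \in [1:m-1]$. For $h'=km$ I would first use monotonicity of nested star cells: $\mathcal{A}^*_{km} \subseteq \mathcal{A}^*_{km-1} = \mathcal{A}^*_{(k-1)m+(m-1)} \subseteq \mathcal{P}^*_{(k-1)d+m-1}$. This gives the bound $f^*-\nu\rho^{(k-1)d+m-1}$, and I would then check
\[
(k-1)d+m-1 \ge (1-\beta)(m-1)+\beta km
\]
directly. This is where the constant $(1-\beta)(m-1)$ in $\nu_{\mathcal{A}}$ is designed to absorb the slack. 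If the check fails for small $k$, I would enlarge $\nu_{\mathcal{A}}$ accordingly, or appeal to the side-length description in Proposition~\ref{prop:star_cells} to argue the inclusion also holds at $i=m$. Finally, the choice of $\beta = 1+\frac{d-m}{2m-1}$ should be confirmed to be the largest exponent for which the worst case $i=m-1$, $k$ small makes the inequality hold for all $k$.
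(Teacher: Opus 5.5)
\textbf{There is a genuine gap at the depths that are multiples of $m$.}

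For depths $km+i$ with $i\in[1:m-1]$ your argument is correct and follows the paper's route. Your reduction to $k(m-1)\ge i-m+1$ is a tidier substitute for the paper's two steps: minimising $(kd+i)/(km+i)$ over $k\ge1$, then treating $k=0$ separately.

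The depths $h'=km$ are where the proof breaks. You flag them but do not close them, and none of your patches works.
\begin{itemize}
\item \emph{Nesting.} For $m\ge 2$, the chain $\mathcal{A}^*_{km}\subseteq\mathcal{A}^*_{km-1}\subseteq\mathcal{P}^*_{(k-1)d+m-1}$ gives at $h'=m$ only $f\ge f^*-\nu\rho^{m-1}$. The lemma requires $\nu_{\mathcal{A}}\rho_{\mathcal{A}}^{m}=\nu\rho^{m-1+\beta}$, which is smaller by the factor $\rho^{\beta}$. In general your check $(k-1)d+m-1\ge(1-\beta)(m-1)+\beta km$ is equivalent to $(d-m)\bigl[(k-1)(m-1)-1\bigr]\ge 2m-1$. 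This fails at $k=1$ for every $m<d$, and for $d=3$, $m=2$ it fails for all $k\le 4$.
\item \emph{Depth-$0$ bound.} The bound $\nu\le\nu_{\mathcal{A}}$ cannot absorb $h'=m$, because the target there is already below $\nu$.
\item \emph{The hypothetical inclusion $\mathcal{A}^*_{km}\subseteq\mathcal{P}^*_{(k-1)d+m}$.} Your own condition $(k-1)(m-1)\ge1$ shows it fails at $k=1$ and for $m=1$.
\item \emph{Enlarging $\nu_{\mathcal{A}}$.} This proves a different statement. The lemma fixes $\nu_{\mathcal{A}}=\nu\rho^{(1-\beta)(m-1)}$, and that exact value is carried into Lemma~\ref{lemma:nu_rho_relate_p1_pm} and Theorem~\ref{thm:main_theorem_1}.
\end{itemize}
Also, for $m=1$ the range $[1:m-1]$ is empty. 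There is then nothing to nest into, and as written your argument yields nothing beyond depth $0$. Yet $m=1$ is exactly the case of Example~\ref{example:1_x_x}.

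What is missing is an inclusion at the depths $km$ themselves, namely $\mathcal{A}^*_{km}\subseteq\mathcal{P}^*_{kd}$ for all $k\ge0$. This is the $i=0$ case of Lemma~\ref{lemma:P_m_is_inP_2}. With it, your reduction at $i=0$ reads $k(m-1)\ge 1-m$, which holds for every $k\ge 0$ and $m\ge1$. That single inequality covers depth $0$, all multiples of $m$, and the case $m=1$.

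The argument the paper gives for Lemma~\ref{lemma:P_m_is_inP_2} never uses $i\ge1$: it only uses that the largest side length at depths $km+i$ and $kd+i$ is $3^{-k}$. It therefore reads identically for $i=0$. So the right repair is to invoke the inclusion for $i\in[0:m-1]$, not to patch with nesting.

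Be aware that the paper's own proof of this lemma has the same hole. It asserts the bound ``for all heights $h\ge m$'', but its inequalities only cover $h\not\equiv 0 \pmod m$.

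Finally, your closing check that $\beta$ is the largest admissible exponent is unnecessary for the lemma, and it would fail. With $\nu_{\mathcal{A}}=\nu\rho^{(1-\beta)(m-1)}$, the same exponent inequality holds for every $\beta\in[1,d/m]$.
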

This lemma establishes the relationship between the parameters $(\nu_{\mathcal{A}}, \rho_{\mathcal{A}})$ of scheme $\mathcal{A}$ and $(\nu, \rho)$ of the default partitioning scheme $\mathcal{P}$.

\begin{lemma}
\label{lemma:nu_rho_relate_p1_pm}
The parameters $(\nu_{\mathcal{A}}, \rho_{\mathcal{A}}), (\nu, \rho)$ associated with partitioning schemes $\mathcal{A}$ and $\mathcal{P}$ with $\kappa = 1$. Let $l_f=f^*-\inf_{\Vec{x}\in \kappa_1 \Sq^d}f(\Vec{x})$. Then
\begin{equation*}
    \rho_{\mathcal{A}} = \rho^{\beta}, \nu_{\mathcal{A}} = \max\{\nu, l_f\} \rho^{(1-\beta)(m-1)-\tilde{h}_1}
\end{equation*}
$\text{ where } \tilde{h}_1 = d \cel{\log_3{\kappa_1}}, \beta = \betacon$ and $\kappa_1 = \kappaone$.
\end{lemma}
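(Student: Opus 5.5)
The plan is to reduce to Lemma~\ref{lemma:nu_rho_relate_pm_p2}, which already gives $\rho_{\mathcal{A}}=\rho^\beta$ and $\nu_{\mathcal{A}}=\nu\rho^{(1-\beta)(m-1)}$. That lemma needs two hypotheses: the $\mathcal{P}$ scheme must live on an enlarged domain $\kappa\Sq^d$ with $\kappa\ge\kappa_1=\kappaone$, and Assumption~\ref{asp:sequool} must hold there. The statement at hand concerns the default scheme with $\kappa=1$. So the work consists of transferring the SequOOL parameters $(\nu,\rho)$ of the default scheme on $\Sq^d$ to an enlarged scheme on $\kappa_1\Sq^d$. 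That transfer is where $l_f$ and the shift $\rho^{-\tilde h_1}$ enter.

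First I would set up the enlarged scheme. Choose $\kappa_1$ to be a power of $3$, namely $3^{\lceil\log_3\sqrt m\alpha\rceil}$, and let $\mathcal{P}'$ be the round-robin trisection of $\kappa_1\Sq^d$. After exactly $\tilde h_1=d\lceil\log_3\kappa_1\rceil$ levels, every axis of $\mathcal{P}'$ has been trisected $\log_3\kappa_1$ times. The cells of $\mathcal{P}'$ at depth $\tilde h_1$ are then translates of $\Sq^d$, and $\Sq^d$ itself is one of them, with midpoints aligned. By induction on depth, $\mathcal{P}'_{\tilde h_1+h,\cdot}$ restricted to $\Sq^d$ coincides with $\mathcal{P}_{h,\cdot}$, since the axis order also matches because $\tilde h_1\equiv 0 \pmod d$. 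In particular $\mathcal{P}'^*_{\tilde h_1+h}=\mathcal{P}^*_h$, using the maximizer $\vec x^*\in\Sq^d$.

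Next I would verify Assumption~\ref{asp:sequool} for $\mathcal{P}'$ with parameters $(\max\{\nu,l_f\}\rho^{-\tilde h_1},\rho)$, splitting into two cases.
\begin{itemize}
\item For $h\ge\tilde h_1$, any $\vec x\in\mathcal{P}'^*_h=\mathcal{P}^*_{h-\tilde h_1}$ satisfies $f(\vec x)\ge f^*-\nu\rho^{h-\tilde h_1}$.
\item For $h<\tilde h_1$, any $\vec x\in\kappa_1\Sq^d$ satisfies $f(\vec x)\ge f^*-l_f\ge f^*-l_f\rho^{h-\tilde h_1}$, because $\rho^{h-\tilde h_1}\ge1$.
\end{itemize}
Taking the maximum of $\nu$ and $l_f$ covers both cases. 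One subtlety is that $f^*$ is the supremum over $\Sq^d$, while $f$ on $\kappa_1\Sq^d$ might exceed it. Since $f=g(\vec A\,\cdot)$, the value $\vec z^*=\vec A\vec x^*$ is unchanged, and it suffices that the star cells be defined relative to $\vec x^*$. Assumption~\ref{asp:sequool} only lower-bounds $f$, so a larger value elsewhere is harmless.

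Then I would apply Lemma~\ref{lemma:nu_rho_relate_pm_p2} to $\mathcal{P}'$. Its Lemma~\ref{lemma:P_m_is_inP_2} hypothesis needs $\vec x^*=\vec A^\T\vec A\vec x^*$. I would replace $\vec x^*$ by $\vec A^\T\vec A\vec x^*$, which has the same $f$-value and lies in $\kappa_1\Sq^d$ because $\|\vec A^\T\vec z^*\|_\infty\le\|\vec z^*\|_2\le\sqrt m\alpha\le\kappa_1$ by Proposition~\ref{lemma:a_extent}. The lemma then yields $\rho_{\mathcal{A}}=\rho^\beta$ and $\nu_{\mathcal{A}}=\max\{\nu,l_f\}\rho^{-\tilde h_1}\rho^{(1-\beta)(m-1)}$, as claimed.

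The main obstacle is the replacement of the maximizer. The earlier step identifies star cells of $\mathcal{P}'$ with those of $\mathcal{P}$ only for $\vec x^*\in\Sq^d$, whereas the projected point $\vec A^\T\vec A\vec x^*$ may lie outside $\Sq^d$. I would handle this by noting that Assumption~\ref{asp:sequool} is required for any global optimum. Either I would argue directly with the projected point, using $l_f$ for the shallow depths, or I would argue that the $\mathcal{P}'$ cells containing it are covered by the same bound via the multi-index invariance $f(\vec x)=f(\vec x+\vec v)$ for $\vec v\perp$ the row space of $\vec A$. This is the step where care is needed.
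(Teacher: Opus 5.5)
Your outline is the paper's proof. The paper also passes to the scheme on $\kappa_1\Sq^d$ (called $\mathcal G$ there) and notes that its depth-$\tilde h_1$ star cell is $\Sq^d=\mathcal P^*_0$. From this, $\mathcal G^*_{\tilde h_1+h}$ inherits the bound $\nu\rho^h=\nu\rho^{-\tilde h_1}\rho^{\tilde h_1+h}$. The paper then bounds the shallow depths by $l_f\rho^{-\tilde h_1}\rho^h$ and invokes Lemma~\ref{lemma:nu_rho_relate_pm_p2}. It does that last invocation in one line and never checks the hypothesis $\vec x^*=\vec A^\T\vec A\vec x^*$ of Lemma~\ref{lemma:P_m_is_inP_2}. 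So the obstacle you isolate is genuine, and the paper offers no resolution you could have borrowed.

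That said, the step is still open in your proposal, and neither of your repairs closes it. The invariance $f(\vec x)=f(\vec x+\vec v)$, for $\vec v$ orthogonal to the row space of $\vec A$, moves the point $\vec x^*$ to $\tilde{\vec x}^*=\vec A^\T\vec A\vec x^*$, but it does not move the cells. The enlarged-scheme cells around $\tilde{\vec x}^*$ are translates of $\mathcal P$-cells by vectors in $2\mathbb Z^d$, not by $\tilde{\vec x}^*-\vec x^*$. Meanwhile $l_f$ only pays for the finitely many depths below $\tilde h_1$.

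Here is a concrete failure. Take $d=2$, $m=1$, $\vec A=(2,1)/\sqrt5$, so $\alpha=3/\sqrt5$, $\kappa_1=3$ and $\tilde h_1=2$. Let $g(z)=z$ for $z\le z^*:=\alpha$, and $g(z)=z^*-\min\{L(z-z^*),1\}$ for $z>z^*$.
\begin{itemize}
\item The unique maximizer in $\Sq^2$ is the corner $(1,1)$. Its $\mathcal P$-star cells map under $\vec A$ into $\{z\le z^*\}$, so $\nu$, $\rho=1/\sqrt3$ and $l_f=12/\sqrt5$ do not depend on $L$.
\item The projected point $\tilde{\vec x}^*=(6/5,3/5)$ lies outside $\Sq^2$. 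The enlarged-scheme star cells around it map onto two-sided neighbourhoods of $z^*$ of width of order $3^{-h/2}$.
\item Hence Assumption~\ref{asp:sequool} at $\tilde{\vec x}^*$ with constant $\max\{\nu,l_f\}\rho^{-\tilde h_1}$ fails once $L$ is large.
\end{itemize}

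So Lemma~\ref{lemma:nu_rho_relate_pm_p2} cannot be invoked from the stated hypotheses alone; you need an extra assumption and should state it explicitly. One option is that some maximizer in $\Sq^d$ already satisfies $\vec x^*=\vec A^\T\vec A\vec x^*$; then your star-cell identification and the invocation go through as written. Another is to impose Assumption~\ref{asp:sequool} directly on the enlarged scheme at the projected optimizer.
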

The previous lemmas established relationships between the parameters of the partitioning schemes $\mathcal{A}$ and $\mathcal{P}$. They demonstrate that $\mathcal{A}$ is a valid partitioning scheme with a reduced sequOOL parameter $\rho_{\mathcal{A}}$ compared to the default partitioning scheme $\rho$.

\subsection{Using $\hat{\mathcal{A}}$ defined over the estimated $\hat{A}$}

We obtain $\hat{\vec{A}}$ using a subspace learning algorithm with evaluations of $f$ at selected points in $\mathcal{X}$. We then use $\hat{\vec{A}}$ to define a partitioning scheme $\hat{\mathcal{A}}$ (as per Definition~\ref{def:A aligned partitioning}) and apply \texttt{SequOOL} to it. The impact of using an estimated matrix $\hat{\Vec{A}}$ instead of the true matrix $\Vec{A}$ in our optimization problem can be quantified using subspace distance. The following is the definition of the subspace distance.
\begin{definition}(\citep[Lemma~2.5]{chen2021spectral})
\label{def:subspace_distance_relations}
Let $\vec{A}, \hat{\vec{A}} \in \RR^{m \times d}$ consists of orthonormal rows such that $\vec{A} \vec{A}^\T = \vec{I}_m$ and $\hat{\vec{A}} \hat{\vec{A}}^\T = \vec{I}_m$. Define two more matrices $\vec{A}_{\perp}, \hat{\vec{A}}_{\perp} \in \RR^{(d-m) \times d}$ such that
$\begin{bmatrix}
    \vec{A}^\T & \vec{A}_{\perp}^\T
\end{bmatrix}$ and
$\begin{bmatrix}
    \hat{\vec{A}}^\T & \hat{\vec{A}}_{\perp}^\T
\end{bmatrix}$
form two orthonormal bases for $\RR^d$. Then the subspace distance between the two row subspaces $(\vec{A}, \hat{\vec{A}})$ is given by
\begin{align}
\label{eq:subspace distance}
  \dist{(\vec{A},\hat{\vec{A}})} = \norm{\vec{A}^\T\vec{A} - \hat{\vec{A}}^\T \hat{\vec{A}}}_2 = \norm{\hat{\vec{A}}_{\perp}\vec{A}^\T}_2 =  \norm{\vec{A}_{\perp}\hat{\vec{A}}^\T}_2 = \norm{\sin{\Theta}(\vec{A},\hat{\vec{A}})}_2,
\end{align}
where $\norm{\cdot}_2$ denotes the spectral norm, and $\sin{\Theta}$ is a diagonal matrix of $\{\sin(\arccos(\sigma_i)) : i = 1, 2, \ldots, m\}$ where $\sigma_i$ are the singular values of $\hat{\vec{A}}\vec{A}^\T$ in decreasing order.
\end{definition}

Moreover, we have the following equality:
\begin{equation}\label{eq:sima_min_A_hatA}
\sigma_{min}(\vec{\hat{A}}\vec{A}^\T) = \cos{\theta_m} = \sqrt{1 - \sin^2{\theta_m}} = \sqrt{1- \norm{\sin{\Theta}(\vec{A},\hat{\vec{A}})}^2_2} = \sqrt{1 - \dist^2(\vec{A},\vec{\hat{A}})}
\end{equation}
We observe that assuming $\dist(\vec{A},\vec{\hat{A}}) < 1$ implies that $\sigma_{min}(\vec{\hat{A}}\vec{A}^\T ) \neq 0$ and $\mathrm{rank}(\vec{\hat{A}}\vec{A}^\T ) = m$.

Given that we only have an estimate $\hat{\Vec{A}}$ of the true matrix $\Vec{A}$, and we perform optimization on the subspace spanned by $\hat{\Vec{A}}$, Lemma~\ref{lemma:a_hat_expansion_factor} guarantees that we can use $\hat{\Vec{A}}$ and recover $f^*$. Now, we relate the SequOOL parameters between the partitioning schemes $\mathcal{A}$ and $\hat{\mathcal{A}}$ partitioning schemes.
\begin{lemma}\label{lemma:pm_1_pm_star_cell}
Let $\mathcal{A}$ and $\hat{\mathcal{A}}$ be the partitioning schemes defined in Definition~\ref{def:A aligned partitioning}.
Suppose $\mathcal{A}$ satisfies Assumption~\ref{asp:sequool} with parameters $\nu_{\mathcal{A}}, \rho_{\mathcal{A}}$.
Let $l_g = g^* - \inf_{\Vec{z} \in \kappa_2 \alpha \Sq^m } g(\vec{A}\hat{\Vec{A}}^\T\Vec{z})$. Then $\hat{\mathcal{A}}$ satisfies Assumption~\ref{asp:sequool} with parameters
\begin{equation*}
   \nu_{\hat{\mathcal{A}}} = \max\{\nu_{\mathcal{A}}, l_g\} \rho_{\mathcal{A}}^{-\tilde{h}_2}, \rho_{\hat{\mathcal{A}}} = \rho_{\mathcal{A}}
\end{equation*}
where $\tilde{h}_2 = \heighttwo$ and $\kappa_2 = \kappatwo$.
\end{lemma}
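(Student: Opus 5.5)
We have to show that for every depth $h$ and every $\vec{z}\in\hat{\mathcal{A}}^*_h$ (working in the low-dimensional coordinates $\hat{\mathcal{T}}$ on $[-\kappa_2\alpha,\kappa_2\alpha]^m$, which contains the maximizer by Lemma~\ref{lemma:a_hat_expansion_factor}),
\[
g(\vec{A}\hat{\vec{A}}^\T\vec{z}) \ge g^* - \max\{\nu_{\mathcal{A}},l_g\}\rho_{\mathcal{A}}^{h-\tilde{h}_2}.
\]
The argument splits at the threshold depth $\tilde{h}_2$.

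\textbf{Shallow depths, $h\le\tilde{h}_2$.} By definition of $l_g$, every $\vec{z}$ in the domain satisfies $g(\vec{A}\hat{\vec{A}}^\T\vec{z})\ge g^*-l_g$. Since $\rho_{\mathcal{A}}^{h-\tilde{h}_2}\ge 1$ in this range, the required bound holds trivially.

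\textbf{Deep depths, $h>\tilde{h}_2$.} Here we map the star cell of $\hat{\mathcal{T}}$ into a star cell of $\mathcal{T}$ via the linear map $M=\vec{A}\hat{\vec{A}}^\T$.

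1. By Proposition~\ref{prop:star_cells}, $\hat{\mathcal{T}}^*_h$ is an axis-aligned box of half-widths $\kappa_2\alpha\, s_i$, with $s_i=3^{-\flr{(h+m-i)/m}}$, containing $\vec{z}^*$ from Lemma~\ref{lemma:a_hat_expansion_factor}. Note that $M\vec{z}^*=\vec{A}\vec{x}^*$.

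2. For $\vec{z}\in\hat{\mathcal{T}}^*_h$ we have $\norm{M(\vec{z}-\vec{z}^*)}_\infty\le\norm{M}_{2}\sqrt{m}\,\norm{\vec{z}-\vec{z}^*}_\infty\le 2\sqrt{m}\,\kappa_2\alpha\, 3^{-\flr{h/m}}$, using $\norm{M}_2\le 1$. So the image $M\hat{\mathcal{T}}^*_h$ lies in an $\ell_\infty$ ball around $\vec{A}\vec{x}^*$ whose radius shrinks like $3^{-h/m}$ times the factor $2\sqrt{m}\kappa_2$.

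3. Choose a shift $h'=h-\tilde{h}_2$. Take $\tilde{h}_2=m+m\cel{\log_3\frac{2\sqrt{m}\kappa_2}{\kappa_2-1}}$, so that $3^{-\tilde{h}_2/m}$ absorbs this factor. With this choice the ball from step 2 is contained in the cell $\mathcal{T}^*_{h'}$, which has half-width about $\alpha 3^{-\flr{h'/m}}$ and contains $\vec{A}\vec{x}^*$.

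4. Assumption~\ref{asp:sequool} for $\mathcal{A}$ then gives
\[
g(M\vec{z})\ge g^*-\nu_{\mathcal{A}}\rho_{\mathcal{A}}^{h-\tilde{h}_2}.
\]
The extra $m$ in $\tilde{h}_2$ handles floor effects and the fact that $\vec{A}\vec{x}^*$ may sit off-center (at a boundary) in $\mathcal{T}^*_{h'}$.

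\textbf{Main obstacle.} The hard part is the containment in steps 3–4. A point near the star cell's boundary is not centered, so a ball of radius $r$ around $\vec{A}\vec{x}^*$ need not lie in a single cell $\mathcal{T}^*_{h'}$. The fix is to observe that $\vec{z}^*$ lies strictly inside in the sense that Lemma~\ref{lemma:a_hat_expansion_factor}'s bound is tight by the factor $\kappa_2$. The $\frac{\kappa_2}{\kappa_2-1}$ term indicates that the margin $(\kappa_2-1)\alpha$ between $\alpha\Sq^m$ and the enlarged domain is what is used.

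If exact containment in a single $\mathcal{T}^*_{h'}$ fails, the fallback is to use Assumption~\ref{asp:sequool}'s uniformity over neighboring cells at depth $h'-m$ whose union contains the ball. Since Assumption~\ref{asp:sequool} applies to any cell containing an optimum, each such neighboring cell can be treated as a star cell for some optimizer. This costs the extra $m$ levels already built into $\tilde{h}_2$.

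Combining the shallow and deep cases yields $\nu_{\hat{\mathcal{A}}}=\max\{\nu_{\mathcal{A}},l_g\}\rho_{\mathcal{A}}^{-\tilde{h}_2}$ and $\rho_{\hat{\mathcal{A}}}=\rho_{\mathcal{A}}$.
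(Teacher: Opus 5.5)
\textbf{Verdict: there is a genuine gap at step 3, and it cannot be closed because the statement itself fails without a stronger hypothesis.}

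You follow the paper's route. Depths below $\tilde h_2$ are handled by $l_g$. For deeper cells you show $\vec A\hat{\vec A}^\T\hat{\mathcal T}^*_h\subseteq\mathcal T^*_{h-\tilde h_2}$ and apply Assumption~\ref{asp:sequool} for $\mathcal A$. The shallow case and your step 2 are fine.

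The gap is step 3, which you flag yourself. $\vec A\vec x^*$ may lie arbitrarily close to a face of $\mathcal T^*_{h'}$, so no $\ell_\infty$-ball of positive radius around it need fit in that cell, whatever constant shift you choose. Neither of your repairs works:
\begin{itemize}
\item The slack $(\kappa_2-1)\alpha$ between $\alpha\Sq^m$ and $\kappa_2\alpha\Sq^m$ only concerns the root cell, and it lives in the $\hat{\mathcal T}$ coordinates. It says nothing about where $\vec A\vec x^*$ sits inside $\mathcal T^*_{h'}$ for $h'\ge 1$.
\item A neighbouring cell that contains no maximizer is not a star cell. Assumption~\ref{asp:sequool} controls $g$ there only through the deepest star cell containing it. That is the root if the nearby face was created at depth $1$, so the loss is unbounded, not $m$ levels.
\end{itemize}
Separately, even for a centred optimizer, step 3 needs $3^{k'}\ge 2\sqrt m\kappa_2$. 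Your $k'=\cel{\log_3\frac{2\sqrt m\kappa_2}{\kappa_2-1}}$ does not guarantee this once $\kappa_2>2$; for example $m=9$ with $\dist(\vec A,\hat{\vec A})=0$ gives $3^{k'}=9<18$. The factor $\kappa_2-1$ does not come out of your own argument.

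The paper's proof will not supply the missing idea, because it makes the same step. Its chain bounds $\norm{\vec z_{(k-k')m}-\vec A\hat{\vec A}^\T\vec z}_\infty$ by $\alpha\kappa_2 3^{-(k-k')}$ and concludes $R_{km}\subseteq\mathcal T^*_{km-mk'}$. But that cell has half-width only $\alpha3^{-(k-k')}$, and the term $\norm{\vec z_{(k-k')m}-\vec z^*_g}_\infty$ alone can use all of it.

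In fact the statement is false under its hypotheses. Take
\begin{itemize}
\item $m=4<d$ and $\vec A=\hat{\vec A}=[\vec I_4\ \vec 0]$, so $\alpha=1$, $\kappa_2=2$, and $\hat{\mathcal T}$ trisects $[-2,2]^4$;
\item $g(\vec z)=1-(z_1-\tfrac13)_+-\norm{\vec z-\vec w}_2^4$ with $\vec w=(\tfrac13-\epsilon,0,0,0)$.
\end{itemize}
Every $\mathcal T^*_h$ with $h\ge1$ lies in $\{z_1\le\tfrac13\}$. So $\mathcal A$ satisfies Assumption~\ref{asp:sequool} with $\rho_{\mathcal A}=\tfrac13$ and some $\nu_{\mathcal A}$ independent of $\epsilon$, and $l_g$ is also bounded uniformly in $\epsilon$.

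Now look at depth $4k$ with $k\ge1$. The point $\tfrac13$ sits at relative position $\tfrac14$ or $\tfrac34$ of a $\hat{\mathcal T}$-interval of length $4\cdot 3^{-k}$. Hence for $\epsilon<3^{-k}$ the star cell $\hat{\mathcal T}^*_{4k}$ contains $(\tfrac13+3^{-k},0,0,0)$. There $g^*-g\ge 3^{-k}$, which exceeds $\nu_{\hat{\mathcal A}}3^{-4k}$ once $3^{3k}>\nu_{\hat{\mathcal A}}$.

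So what is missing is a hypothesis, not a cleverer argument. You need control of $g$ on $\ell_\infty$-balls around $\vec A\vec x^*$ rather than on the particular cells $\mathcal T^*_h$. One option is to assume $g^*-g(\vec z)\le\nu_{\mathcal A}\rho_{\mathcal A}^{j}$ whenever $\norm{\vec z-\vec A\vec x^*}_\infty\le\alpha 3^{-\cel{j/m}}$, a ball version of the assumption in the spirit of SOO's local smoothness. Under it, your step 2 finishes the deep case with a shift of $m+m\cel{\log_3(2\sqrt m\kappa_2)}$ levels.
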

The above lemma demonstrates that the partitioning scheme $\hat{\mathcal{A}}$ is valid and satisfies Assumption \ref{asp:sequool} and it establishes the relationship between the SequOOL parameters of the $\hat{\mathcal{A}}$ and $\mathcal{A}$ partitioning schemes.

This corollary synthesizes the relationships established in the preceding lemmas, providing a direct comparison between the estimated scheme $\hat{\mathcal{A}}$ and the default partitioning scheme $\mathcal{P}$. It combines the two-step process of relating $\mathcal{A}$ to $\mathcal{P}$ and then $\hat{\mathcal{A}}$ to $\mathcal{A}$, yielding a comprehensive set of relationships for the SequOOL parameters, near-optimality dimensions, and associated constants.
\begin{corollary}\label{corollary:relate-pm1-p2}
Referring to Lemma~\ref{lemma:near_optimality_dimension_pm_p2}, for the partitioning scheme $\mathcal{P}$ with $\kappa = 1$,
we have $ \rho_{\mathcal{A}} = \rho^{\beta}, \nu_{\mathcal{A}} =  \max\{\nu, l_f\}\rho^{(1-\beta)(m-1) - \tilde{h}_1}, \nearoptd_{\mathcal{A}}(\nu_{\mathcal{A}}, \rho_{\mathcal{A}}, C_{\mathcal{A}}) \le \frac{\nearoptd_{\mathcal{P}}(\nu, \rho, C)}{\beta}$ and
$C_{\mathcal{A}} = 3^d d^{d-m} (12 \sqrt{m})^m C \rho^{-\nearoptd_{\mathcal{P}}\tilde{h}_3}$.
By utilizing Lemma~\ref{lemma:pm_1_pm_star_cell} and Lemma~\ref{lemma:pm_1_pm_near_opt_dim}, we establish the following relationships among the parameters associated with $\hat{\mathcal{A}}$ and $\mathcal{P}$.
\begin{align*}
    &\rho_{\hat{\mathcal{A}}} = \rho^{\beta}, \nu_{\hat{\mathcal{A}}} = \max\{\max\{\nu, l_f\}\rho^{(1-\beta)(m-1)-\tilde{h}_1}, l_g\}\rho^{-\tilde{h}_2 }_\mathcal{A}, \\
&\nearoptd_{\hat{\mathcal{A}}}(\nu_{\hat{\mathcal{A}}}, \rho_{\hat{\mathcal{A}}}, C_{\hat{\mathcal{A}}}) \le
    \frac{\nearoptd_{\mathcal{P}}(\nu, \rho, C)}{\beta}, C_{\hat{\mathcal{A}}} =  3^d d^{d-m} (12 \sqrt{m})^m C \rho^{-\nearoptd_{\mathcal{P}}\tilde{h}_3} 4^m \rho^{\eta_{\mathcal{P}} \tilde{h}_4}.
\end{align*}
with $\tilde{h}_2 = \heighttwo$ and $\tilde{h}_1 = d\cel{\log_3{\kappaone}}$ with $\kappa_2 = \kappatwo$

\begin{align*}
    \tilde{h}_3 = -\flr{\log_{\rho}(\max\{\nu, l_f\} \rho^{(1-\beta)(m-1) - \tilde{h}_1}) - \log_{\rho}(\nu)}, \tilde{h}_4 = \heightfour
\end{align*}
\end{corollary}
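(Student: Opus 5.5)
The corollary is a composition of the two-step chain $\mathcal{P}\to\mathcal{A}\to\hat{\mathcal{A}}$, so the proof will chain the parameter maps already established. First I restate the $\mathcal{P}\to\mathcal{A}$ part verbatim from Lemma~\ref{lemma:near_optimality_dimension_pm_p2}, which already holds for $\kappa=1$ by Lemma~\ref{lemma:nu_rho_relate_p1_pm}. This gives $\rho_{\mathcal{A}}=\rho^\beta$, $\nu_{\mathcal{A}}=\max\{\nu,l_f\}\rho^{(1-\beta)(m-1)-\tilde{h}_1}$, $\nearoptd_{\mathcal{A}}\le\nearoptd_{\mathcal{P}}/\beta$ and the stated $C_{\mathcal{A}}$. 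I then substitute the explicit value of $\nu_{\mathcal{A}}$ into the definition \eqref{eq:tildeh3} of $\tilde{h}_3$. Since $\log_\rho(\nu_{\mathcal{A}}/\nu)=\log_\rho\nu_{\mathcal{A}}-\log_\rho\nu$, this yields the displayed expression for $\tilde{h}_3$.

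Next I apply Lemma~\ref{lemma:pm_1_pm_star_cell} with the scheme $\mathcal{A}$ in the role of the reference scheme. Its hypothesis, that $\mathcal{A}$ satisfies Assumption~\ref{asp:sequool} with $(\nu_{\mathcal{A}},\rho_{\mathcal{A}})$, is exactly what the first step provides. The lemma then gives $\rho_{\hat{\mathcal{A}}}=\rho_{\mathcal{A}}=\rho^\beta$ and $\nu_{\hat{\mathcal{A}}}=\max\{\nu_{\mathcal{A}},l_g\}\rho_{\mathcal{A}}^{-\tilde{h}_2}$. Substituting $\nu_{\mathcal{A}}$ produces the nested-max formula in the statement.

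For the near-optimality dimension I invoke Lemma~\ref{lemma:pm_1_pm_near_opt_dim}, which gives $\nearoptd_{\hat{\mathcal{A}}}\le\nearoptd_{\mathcal{A}}$ with $C_{\hat{\mathcal{A}}}=C_{\mathcal{A}}4^m\rho_{\mathcal{A}}^{\eta_{\mathcal{A}}\tilde{h}_4}$. Combined with $\nearoptd_{\mathcal{A}}\le\nearoptd_{\mathcal{P}}/\beta$, this gives $\nearoptd_{\hat{\mathcal{A}}}\le\nearoptd_{\mathcal{P}}/\beta$. The definition \eqref{eq:tilde_h_4} of $\tilde{h}_4$ is $-\flr{\log_{\rho_{\mathcal{A}}}(\nu_{\hat{\mathcal{A}}}/\nu_{\mathcal{A}})}$. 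Inserting $\nu_{\hat{\mathcal{A}}}=\nuprimetwo\,\rho_{\mathcal{A}}^{-\tilde{h}_2}$ and splitting the logarithm gives the displayed formula for $\tilde{h}_4$.

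The main obstacle is the exponent in $C_{\hat{\mathcal{A}}}$. Lemma~\ref{lemma:pm_1_pm_near_opt_dim} produces the factor $\rho_{\mathcal{A}}^{\eta_{\mathcal{A}}\tilde{h}_4}$, whereas the corollary states $\rho^{\eta_{\mathcal{P}}\tilde{h}_4}$. I would justify this by writing $\rho_{\mathcal{A}}^{\eta_{\mathcal{A}}\tilde{h}_4}=\rho^{\beta\eta_{\mathcal{A}}\tilde{h}_4}$ and using $\beta\eta_{\mathcal{A}}\le\eta_{\mathcal{P}}$. Because $\rho<1$, the direction of the resulting inequality depends on the sign of $\tilde{h}_4$. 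If $\tilde{h}_4\le 0$, which I expect since $\nu_{\hat{\mathcal{A}}}\ge\nu_{\mathcal{A}}$ and $\rho_{\mathcal{A}}<1$ make $\log_{\rho_{\mathcal{A}}}(\nu_{\hat{\mathcal{A}}}/\nu_{\mathcal{A}})\le 0$, then $\rho^{\beta\eta_{\mathcal{A}}\tilde{h}_4}\le\rho^{\eta_{\mathcal{P}}\tilde{h}_4}$. Replacing the constant by this larger one keeps the bound $|\mathcal{N}|\le C\rho_{\hat{\mathcal{A}}}^{-\eta h}$ valid. So the stated $C_{\hat{\mathcal{A}}}$ is admissible, and the near-optimality dimension bound persists under the enlarged constant because the defining infimum is monotone in $C$.
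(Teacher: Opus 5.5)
Your composition $\mathcal{P}\to\mathcal{A}\to\hat{\mathcal{A}}$ is the same one the paper uses, and your derivations of $\rho_{\hat{\mathcal{A}}}$, $\nu_{\hat{\mathcal{A}}}$, $\tilde{h}_3$ and $\tilde{h}_4$ are correct. The step that fails is the one you flag yourself, the factor $\rho^{\eta_{\mathcal{P}}\tilde{h}_4}$ in $C_{\hat{\mathcal{A}}}$, because your sign claim is reversed.

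From $\nu_{\hat{\mathcal{A}}}\ge\nu_{\mathcal{A}}\rho_{\mathcal{A}}^{-\tilde{h}_2}$ and $\rho_{\mathcal{A}}<1$ you get $\log_{\rho_{\mathcal{A}}}(\nu_{\hat{\mathcal{A}}}/\nu_{\mathcal{A}})\le-\tilde{h}_2$. Hence $\tilde{h}_4=-\flr{\log_{\rho_{\mathcal{A}}}(\nu_{\hat{\mathcal{A}}}/\nu_{\mathcal{A}})}\ge\tilde{h}_2>0$. A nonpositive logarithm makes $\tilde{h}_4$ nonnegative, not nonpositive. The paper's proof of Lemma~\ref{lemma:pm_1_pm_near_opt_dim} indeed treats $[0:\tilde{h}_4-1]$ as a range of depths.

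With $\tilde{h}_4>0$ and $\beta\eta_{\mathcal{A}}\le\eta_{\mathcal{P}}$, you get $\rho_{\mathcal{A}}^{\eta_{\mathcal{A}}\tilde{h}_4}=\rho^{\beta\eta_{\mathcal{A}}\tilde{h}_4}\ge\rho^{\eta_{\mathcal{P}}\tilde{h}_4}$. So the stated $C_{\hat{\mathcal{A}}}$ is \emph{smaller} than the constant Lemma~\ref{lemma:pm_1_pm_near_opt_dim} delivers, strictly so whenever $\beta\eta_{\mathcal{A}}<\eta_{\mathcal{P}}$. Monotonicity of the infimum in $C$ then runs the wrong way. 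It gives you neither the stated constant nor the bound $\eta_{\hat{\mathcal{A}}}(\nu_{\hat{\mathcal{A}}},\rho_{\hat{\mathcal{A}}},C_{\hat{\mathcal{A}}})\le\eta_{\mathcal{P}}/\beta$ at that constant. Raising the exponent at the same time does not rescue it either. With $\eta'=\eta_{\mathcal{P}}/\beta$, the bound $C_{\mathcal{A}}4^m\rho_{\mathcal{A}}^{\eta'(\tilde{h}_4-h)}$ dominates the lemma's $C_{\mathcal{A}}4^m\rho_{\mathcal{A}}^{\eta_{\mathcal{A}}(\tilde{h}_4-h)}$ only for $h\ge\tilde{h}_4$.

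The paper reaches the stated constant by plugging $\eta_{\mathcal{A}}=\eta_{\mathcal{P}}/\beta$ and $\rho_{\mathcal{A}}=\rho^{\beta}$ into Lemma~\ref{lemma:pm_1_pm_near_opt_dim}. To make that legitimate, do not route through the infimum $\eta_{\mathcal{A}}$ at all. The proof of Lemma~\ref{lemma:near_optimality_dimension_pm_p2} already certifies $\mathcal{N}_{\mathcal{A}}(3\nu_{\mathcal{A}}\rho_{\mathcal{A}}^h)\le C_{\mathcal{A}}\rho_{\mathcal{A}}^{-(\eta_{\mathcal{P}}/\beta)h}$ for every $h$. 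The proof of Lemma~\ref{lemma:pm_1_pm_near_opt_dim} uses only an inequality of this form, never minimality of the exponent. Rerun that argument with $\eta'=\eta_{\mathcal{P}}/\beta$ in place of $\eta_{\mathcal{T}}$. It yields $\mathcal{N}_{\hat{\mathcal{A}}}(3\nu_{\hat{\mathcal{A}}}\rho_{\hat{\mathcal{A}}}^h)\le C_{\mathcal{A}}4^m\rho_{\mathcal{A}}^{\eta'\tilde{h}_4}\rho_{\hat{\mathcal{A}}}^{-\eta' h}$, and $\rho_{\mathcal{A}}^{\eta'\tilde{h}_4}=\rho^{\eta_{\mathcal{P}}\tilde{h}_4}$. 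This gives both the stated $C_{\hat{\mathcal{A}}}$ and $\eta_{\hat{\mathcal{A}}}\le\eta_{\mathcal{P}}/\beta$ in one step.
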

\cite{mousavi-hosseini2023neural}, controls the closeness between true and estimated subspace expressed in terms of $\norm{\vec{W}_{\perp}}_F$. However, for out Algorithm~\ref{alg:adaptive partitioning}, we require an upper bound on the distance between the true subspace $\vec{A}$ and its estimate $\hat{\vec{A}}$. This lemma bridges this gap by providing an upper bound on $\dist{(\vec{A},\hat{\vec{A}})}$ in terms of $\norm{\vec{W}_{\perp}}_F$ and the singular values of $\vec{W}$.
\begin{lemma}\label{lemma:estimated_subspace_is_controlled}
    Given $\vec{A} \in \RR^{m \times d}$ satisfying $\vec{A}\vec{A}^\T = \vec{I}_m$, let $\vec{W} \in \RR^{p \times d}$
be any matrix such that $\mathrm{rank}(\vec{W})\geq m$ and $d \geq m$. Consider the singular value decomposition of $\vec{W} = \vec{U} \vec{S} \vec{V}^\T$ and collect the top $m$ right singular vectors in the matrix $\vec{\hat{A}} = \begin{bmatrix}
        \vec{v}_1 & \vec{v}_2 & \cdots & \vec{v}_m
    \end{bmatrix}^\T$, where $\vec{v}_i$ is the ith column of $\vec{V}$. Recollect the definition of $\sin{\Theta}(\vec{A}, \hat{\vec{A}})$ and $\dist{(\vec{A},\hat{\vec{A}})}$ from Definition ~\ref{def:subspace_distance_relations} and the notation of $\vec{W}_{\perp}$ from section~\ref{sec:notation}. Using $\norm{\cdot}_F$ to denote Frobenius norm and $\sigma_m$ to denote the $m^{th}$ singular value of $\Vec{W}$, we have that
\begin{equation}
\label{eq:subspace_distance_upper_bound}
  \dist{(\vec{A},\hat{\vec{A}})} = \norm{\sin{\Theta}(\vec{A}, \hat{\vec{A}})}_2 \le \frac{\norm{\vec{W}_{\perp}}_F}{\sigma_m}.
\end{equation}
\end{lemma}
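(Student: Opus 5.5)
Note first that $\hat{\vec{A}}$ is formed from the top $m$ right singular vectors of $\vec{W}$, so its row space is spanned by $\vec{v}_1,\dots,\vec{v}_m$. By Definition~\ref{def:subspace_distance_relations}, $\dist(\vec{A},\hat{\vec{A}}) = \norm{\vec{A}_{\perp}\hat{\vec{A}}^\T}_2$, where the rows of $\vec{A}_\perp$ form an orthonormal basis of the orthogonal complement of the row space of $\vec{A}$. The plan is therefore to bound $\norm{\vec{A}_\perp \hat{\vec{A}}^\T}_2$ directly. The starting point is the identity $\vec{W}\vec{v}_i = \sigma_i \vec{u}_i$ for $i\le m$, which gives $\hat{\vec{A}}^\T = \vec{W}^\T \vec{U}_m \vec{S}_m^{-1}$. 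Here $\vec{U}_m$ collects the top $m$ left singular vectors and $\vec{S}_m = \diag(\sigma_1,\dots,\sigma_m)$, which is invertible because $\mathrm{rank}(\vec{W})\ge m$ forces $\sigma_m>0$.

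Substituting this expression, $\vec{A}_\perp \hat{\vec{A}}^\T = \vec{A}_\perp \vec{W}^\T \vec{U}_m \vec{S}_m^{-1}$. Next use $\vec{W} = \vec{W}_\parallel + \vec{W}_\perp$ with $\vec{W}_\parallel = \vec{W}\vec{A}^\T\vec{A}$. Since $\vec{A}_\perp \vec{A}^\T = \vec{0}$, we get $\vec{A}_\perp \vec{W}_\parallel^\T = \vec{A}_\perp \vec{A}^\T \vec{A}\vec{W}^\T = \vec{0}$. Hence
\begin{equation*}
\vec{A}_\perp \hat{\vec{A}}^\T = \vec{A}_\perp \vec{W}_\perp^\T \vec{U}_m \vec{S}_m^{-1}.
\end{equation*}

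Submultiplicativity of the spectral norm then gives
\begin{equation*}
\norm{\vec{A}_\perp \hat{\vec{A}}^\T}_2 \le \norm{\vec{A}_\perp}_2\,\norm{\vec{W}_\perp}_2\,\norm{\vec{U}_m}_2\,\norm{\vec{S}_m^{-1}}_2.
\end{equation*}
The factors simplify as follows. $\vec{A}_\perp$ has orthonormal rows, so $\norm{\vec{A}_\perp}_2 = 1$. $\vec{U}_m$ has orthonormal columns, so $\norm{\vec{U}_m}_2 = 1$. $\norm{\vec{S}_m^{-1}}_2 = 1/\sigma_m$. Finally, $\norm{\vec{W}_\perp}_2 \le \norm{\vec{W}_\perp}_F$. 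Together these yield $\dist(\vec{A},\hat{\vec{A}}) \le \norm{\vec{W}_\perp}_F/\sigma_m$. The first equality in \eqref{eq:subspace_distance_upper_bound}, $\dist = \norm{\sin\Theta}_2$, is taken directly from Definition~\ref{def:subspace_distance_relations}.

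The only delicate point is the case of repeated singular values at position $m$ ($\sigma_m=\sigma_{m+1}$). There $\hat{\vec{A}}$ is not unique, and the argument must hold for whichever choice of $\vec{v}_1,\dots,\vec{v}_m$ is made. This is fine: the identity $\vec{W}\vec{v}_i=\sigma_i\vec{u}_i$ holds for any valid SVD, so the bound holds for every such choice. I expect no serious obstacle beyond stating this remark and checking that the convention $\vec{W}_\perp = \vec{W} - \vec{W}\vec{A}^\T\vec{A}$ from Section~\ref{sec:notation} matches the transpose used above.
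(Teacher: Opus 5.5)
Your proof is correct and is essentially the paper's argument, run as a direct upper bound rather than a chain of lower bounds. The paper left-multiplies $\vec{W}(\vec{I}-\vec{A}^\T\vec{A})$ by $\vec{U}_1^\T$ to isolate $\vec{S}_1\hat{\vec{A}}(\vec{I}-\vec{A}^\T\vec{A})$, peels off $\sigma_m$, and uses $\vec{I}-\vec{A}^\T\vec{A}=\vec{A}_\perp^\T\vec{A}_\perp$, which amounts to your identity $\vec{A}_\perp\hat{\vec{A}}^\T=\vec{A}_\perp\vec{W}_\perp^\T\vec{U}_m\vec{S}_m^{-1}$ transposed. One small slip, harmless since both identities hold for any SVD: $\hat{\vec{A}}^\T=\vec{W}^\T\vec{U}_m\vec{S}_m^{-1}$ follows from the companion identity $\vec{W}^\T\vec{u}_i=\sigma_i\vec{v}_i$ (immediate from $\vec{W}^\T=\vec{V}\vec{S}^\T\vec{U}^\T$), not from $\vec{W}\vec{v}_i=\sigma_i\vec{u}_i$.
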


\subsection{Supporting lemmas}

The following lemmas and assumptions are needed to obtain guarantees on learning a good estimate $\vec{\hat{A}}$.

\begin{assume}\cite{mousavi-hosseini2023neural}
\label{asp:student_teacher_setup}
The student model is a two-layer neural network \cref{eq:two_layer_nn} trained over the data set $\{(\vec{x}^{(i)}, y^{(i)})\}_{i\ge1}$, where the target values $y^{(i)}$ are generated according to the teacher model \cref{eq:multi_index_model} and the inputs satisfy $x^{(i)} \stackrel{iid}{\sim}  \mathcal{N}(\vec 0, \vec{I}_d)$. The link function $g(\cdot)$ is weakly differentiable.
\end{assume}
\begin{assume}
    \cite{mousavi-hosseini2023neural}\label{asp:weights_init}
For all $1 \le i \le m, 1 \le j \le d$, we initialize the NN weights and biases with $\sqrt{d}\vec{W}^0_{ij} \stackrel{iid}\sim \mathcal{N}(0,1), ma^0_i \stackrel{iid}\sim \operatorname{Unif}([-1,1]),$ and $b^0_i \stackrel{iid}\sim \operatorname{Unif}(\{-1,1\})$
\end{assume}

\begin{lemma}[\cite{mousavi-hosseini2023neural}, Theorem 3]
\label{lem:W_perp_Frobenius}
Consider running $T$ SGD iterations over samples satisfying \cref{asp:student_teacher_setup}, with an initialization satisfying \cref{asp:weights_init}, and using the following decaying step size schedule.
Assuming ReLU non-linearity, let $\zeta := 2\sqrt{2/e\pi}$. Choose the decreasing step size $\eta_t = m \frac{2(t + t^*) + 1}{\gamma(t + t^* + 1)^2}, \tilde{\lambda} \ge \gamma + \zeta$ and $t^* \asymp \frac{\tilde{\lambda}}{\gamma}$ for any $\gamma >0.$ Then, for $\lambda = \frac{\tilde{\lambda}}{m},$ with probability at least $1- \delta$,

\[\frac{\norm{\vec{W^\T_{\perp}}}_F}{\sqrt{m}} \lesssim \sqrt{\frac{(d+\log(1/\delta)}{\gamma^2T}}\]
whenever $m \gtrsim \log(1/\delta)$ and $T \gtrsim \frac{\tilde{\lambda}^2}{d + \log(1/\delta)}$.
\end{lemma}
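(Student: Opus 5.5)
The statement is a direct import of Theorem~3 of \citet{mousavi-hosseini2023neural}. I would therefore not reprove the SGD analysis. Instead I would show that our setting is a special case of theirs and that their conclusion, written in our notation, is exactly the displayed bound. The proof reduces to a careful dictionary between the two papers followed by a one-line invocation.

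\emph{Step 1: match the data model.} By Assumption~\ref{asp:student_teacher_setup}, the targets are $y^{(i)} = g(\vec{A}\vec{x}^{(i)})$ with $\vec{x}^{(i)} \sim \mathcal{N}(\vec 0,\vec{I}_d)$ i.i.d., $\vec{A}\vec{A}^\T = \vec{I}_m$, and $g$ weakly differentiable. This is precisely their multi-index teacher with orthonormal index matrix. Their Gaussian-input and weak-differentiability hypotheses are then met verbatim. I would also state the mild moment or Lipschitz condition on $g$ that their theorem needs. For the Lipschitz $g$ considered throughout Section~\ref{sec:problem} this is automatic, since the Gaussian moments of $g$ are finite.

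\emph{Step 2: match the student network and training.} The student is the two-layer ReLU network \eqref{eq:two_layer_nn}. The initialization is Assumption~\ref{asp:weights_init}, which is copied from theirs. The training is online SGD with weight decay $\lambda = \tilde{\lambda}/m$ and the step-size schedule $\eta_t$, with $t^*\asymp\tilde\lambda/\gamma$ and $\tilde\lambda\ge\gamma+\zeta$, where $\zeta = 2\sqrt{2/e\pi}$ is their constant arising from the ReLU. At this point I would flag the notational overload. In their theorem, the symbol $m$ in the normalization $\norm{\cdot}_F/\sqrt{m}$, in $\lambda=\tilde\lambda/m$ and in the condition $m\gtrsim\log(1/\delta)$ is the hidden width. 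So in our statement it should be read as the number $p$ of hidden neurons, not the subspace dimension. I would make this explicit so that the condition ``$m\gtrsim \log(1/\delta)$'' is read as a width requirement.

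\emph{Step 3: match the conclusion.} Their quantity is the Frobenius norm of the component of the first-layer weights orthogonal to the principal subspace $\spn(\vec{A}^\T)$. In our notation of Section~\ref{sec:notation}, this component is $\vec{W}_\perp = \vec{W} - \vec{W}\vec{A}^\T\vec{A}$, applied row-wise. Since $\norm{\vec{W}_\perp^\T}_F = \norm{\vec{W}_\perp}_F$, the transpose in the statement is harmless. Their high-probability bound, with probability at least $1-\delta$ and under $T \gtrsim \tilde\lambda^2/(d+\log(1/\delta))$, then reads exactly as displayed.

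The main obstacle is purely bookkeeping. I must ensure that the conventions agree: rows versus columns for $\vec{W}$, the width-versus-index-dimension meaning of $m$, and the scaling of the step size and regularization. Only then is the cited theorem applied with its hypotheses genuinely satisfied. I would check this first by writing the SGD update rule of both papers side by side.
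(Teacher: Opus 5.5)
Your proposal is correct and follows the same route as the paper, which gives no separate argument for this lemma and simply imports Theorem~3 of \citet{mousavi-hosseini2023neural} under Assumptions~\ref{asp:student_teacher_setup} and~\ref{asp:weights_init}. Your hypothesis check is sound. Your point that $m$ here (as in Assumption~\ref{asp:weights_init} and in $\eta_t$ and $\lambda$) denotes the hidden width $p$, not the subspace dimension, is a worthwhile clarification the paper leaves implicit, since it changes the constant when this bound is combined with Lemma~\ref{lemma:estimated_subspace_is_controlled}.
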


The following Lemma is to control the subspace distance using compressed sensing algorithm.

\begin{theorem}[\citep{fornasier2012learning}, Theorem 4.1]
\label{thm:cs_subspace}

Let $f(\Vec{x}) = g(\Vec{A} \Vec{x}) $  be a function where $\Vec{A} $ is a $k \times d $ matrix with orthonormal rows, and $g$ is a twice continuously differentiable function. Assume that  $\Vec{H}^f = \Vec{A}^\T \Vec{H}_g \Vec{A} $ is well-conditioned with $ \sigma_k(\Vec{H}^f) \geq \alpha > 0 $. Let $ \hat{\Vec{A}} $ be the matrix obtained from the Dantzig Selector approximation $ \hat{\Vec{X}} $ of the matrix $\Vec{X} $ of gradients of  $ f $ at $ m_X $ random points. Then, with high probability, the distance between the subspaces spanned by the rows of  $ \Vec{A} $ and $ \hat{\Vec{A}} $ is bounded by:
\[
\norm{\vec{A}^\T \Vec{A} - \hat{\Vec{A}}^\T \hat{\Vec{A}} }_F \leq \frac{2 \nu_2}{\sqrt{\alpha(1-s)} - \nu_2}
\]
where
\[
\nu_2 = C k^{1/q} \left( \frac{m_\Phi}{\log(d/m_\Phi)} \right)^{1/2 - 1/q} + \frac{\epsilon k^2}{\sqrt{m_\Phi}}
\]
and $ C $ is a constant depending on the parameters $ C_1 $ and $ C_2 $ from the conditions on $\Vec{A}$ and $ g $, $ m_\Phi $ is the number of derivative directions, $ \epsilon $ is the step size used in the finite difference approximation, $ d $ is the ambient dimension, and $ s \in (0,1) $ is a parameter.
\end{theorem}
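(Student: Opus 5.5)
This theorem is quoted from \citet{fornasier2012learning}, so the plan is to reconstruct their argument rather than derive the bound from the present paper's results.

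\textbf{Step 1: a sparse recovery problem for the gradients.} By the chain rule, $\nabla f(\vec{x}) = \vec{A}^\T \nabla g(\vec{A}\vec{x})$. Hence the $d \times m_X$ matrix $\vec{X}$, whose columns are the gradients at the $m_X$ random points, has rank at most $k$, and its column space lies in the row space of $\vec{A}$.

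We never observe $\vec{X}$ directly. Instead, we draw $m_\Phi$ random Bernoulli directions $\vec{\phi}_i$ and form finite differences
\[
\frac{f(\vec{x}_j+\epsilon\vec{\phi}_i)-f(\vec{x}_j)}{\epsilon} = \vec{\phi}_i^\T \nabla f(\vec{x}_j) + O\!\left(\epsilon \,\|\vec{\phi}_i\|^2 \,\|\vec{H}^f\|\right).
\]
This gives the linear measurements $\vec{Y} = \vec{\Phi}\vec{X} + \vec{E}$. The error $\vec{E}$ is controlled through the $C^2$ bound on $g$, and this is where the term $\epsilon k^2/\sqrt{m_\Phi}$ comes from.

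\textbf{Step 2: recovery error for the Dantzig selector.} Under a compressibility assumption on $\vec{A}$ (the $\ell_q$ condition with constants $C_1, C_2$), the random matrix $\vec{\Phi}$ satisfies the RIP with high probability when $m_\Phi$ is large. Standard Dantzig selector (or $\ell_1$) recovery, applied column by column, then gives
\[
\|\vec{X}-\hat{\vec{X}}\|_F \lesssim \sqrt{m_X}\,\nu_2 ,
\]
where $\nu_2$ combines the best $k$-term approximation error, which scales as $k^{1/q}(m_\Phi/\log(d/m_\Phi))^{1/2-1/q}$, with the finite-difference noise term.

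\textbf{Step 3: a lower bound on the signal's singular value.} Let $\vec{X}=\vec{A}^\T \vec{G}$ with $\vec{G}=[\nabla g(\vec{A}\vec{x}_j)]_j$. Then
\[
\tfrac{1}{m_X}\vec{X}\vec{X}^\T = \vec{A}^\T\Big(\tfrac{1}{m_X}\vec{G}\vec{G}^\T\Big)\vec{A},
\]
and its expectation is $\vec{H}^f$. A matrix Chernoff bound shows that, with high probability, $\sigma_k(\vec{X})\ge\sqrt{m_X\,\alpha(1-s)}$.

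\textbf{Step 4: perturbation.} Take $\hat{\vec{A}}$ to be the top $k$ left singular vectors of $\hat{\vec{X}}$. Weyl's inequality gives
\[
\sigma_k(\hat{\vec{X}})\ge\sqrt{m_X}\big(\sqrt{\alpha(1-s)}-\nu_2\big).
\]
Wedin's $\sin\Theta$ theorem then yields
\[
\|\vec{A}^\T\vec{A}-\hat{\vec{A}}^\T\hat{\vec{A}}\|_F \le \frac{2\|\vec{X}-\hat{\vec{X}}\|_F}{\sigma_k(\hat{\vec{X}})},
\]
and substituting the two bounds gives the stated expression.

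\textbf{Main obstacle.} I expect Step 2 to be the hardest part: bounding the recovery error uniformly over all $m_X$ columns while carrying the finite-difference noise through the Dantzig selector guarantee. The approach I would try first is a union bound over the columns, combined with the RIP of $\vec{\Phi}$, which holds simultaneously for all columns once it holds for $\vec{\Phi}$.
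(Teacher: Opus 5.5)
The paper gives no proof of this statement; it is quoted from Fornasier et al. Your skeleton matches the architecture of their argument: $\vec{X}=\vec{A}^\T\vec{G}$, Bernoulli finite differences $\vec{Y}=\vec{\Phi}\vec{X}+\vec{E}$, column-wise sparse recovery, matrix Chernoff on the $k\times k$ matrix $\vec{G}\vec{G}^\T$, then Weyl and Wedin with $\sigma_{k+1}(\vec{X})=0$. Steps 3 and 4 are fine. The gap is in Steps 1--2. There you assert the two terms of $\nu_2$ rather than derive them, and the estimates you actually write down do not produce them. The missing ingredient is the compressibility of the rows of $\vec{A}$, $\norm{\vec{a}_l}_q\le C_1$ with $0<q\le 1$, and it enters twice.

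\emph{The finite-difference term.} The remainder is $\tfrac{\epsilon}{2}\vec{\phi}_i^\T\nabla^2 f(\vec{\zeta}_{ij})\vec{\phi}_i$. It involves the Hessian of $f$, not $\vec{H}^f$, which is the averaged gradient outer product you correctly use in Step 3. Bounding it by $\epsilon\norm{\vec{\phi}_i}_2^2\norm{\nabla^2 f}$ is too crude. With entries $\pm 1/\sqrt{m_\Phi}$ one has $\norm{\vec{\phi}_i}_2^2=d/m_\Phi$, so the column noise would be of order $\epsilon d/\sqrt{m_\Phi}$, not $\epsilon k^2/\sqrt{m_\Phi}$. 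Instead, write $\vec{\phi}_i^\T\nabla^2 f\,\vec{\phi}_i=(\vec{A}\vec{\phi}_i)^\T\nabla^2 g\,(\vec{A}\vec{\phi}_i)$ and use $|\vec{a}_l^\T\vec{\phi}_i|\le\norm{\vec{a}_l}_1/\sqrt{m_\Phi}\le C_1/\sqrt{m_\Phi}$. With $|\partial_{ll'}g|\le C_2$ this gives $|E_{ij}|\le\epsilon C_1^2C_2k^2/(2m_\Phi)$, hence $\norm{\vec{E}_{:,j}}_2\le\epsilon C_1^2C_2k^2/(2\sqrt{m_\Phi})$. That is where $\epsilon k^2/\sqrt{m_\Phi}$ comes from.

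\emph{The approximation term.} The first term of $\nu_2$ is not a best $k$-term error. By the $q$-triangle inequality, $\norm{\vec{x}_j}_q^q\le\sum_l|\partial_l g|^q\norm{\vec{a}_l}_q^q$, so $\norm{\vec{x}_j}_q\le k^{1/q}C_1C_2$. Stechkin's bound $K^{-1/2}\sigma_K(\vec{x}_j)_1\le K^{1/2-1/q}\norm{\vec{x}_j}_q$, taken at the largest admissible sparsity $K\asymp m_\Phi/\log(d/m_\Phi)$, then yields $k^{1/q}(m_\Phi/\log(d/m_\Phi))^{1/2-1/q}$.

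\emph{The decoder.} You need a decoder whose error is controlled by $\norm{\vec{e}}_2$ itself. The original argument uses the DeVore--Petrova--Wojtaszczyk instance-optimality bound for $\ell_1$ minimization with Bernoulli $\vec{\Phi}$, namely $C\bigl(K^{-1/2}\sigma_K(\vec{x})_1+\max\{\norm{\vec{e}}_2,\sqrt{\log d}\,\norm{\vec{e}}_\infty\}\bigr)$. The standard RIP bound for the Dantzig selector is $\sqrt{K}\lambda+K^{-1/2}\sigma_K(\vec{x})_1$ with $\lambda\ge\norm{\vec{\Phi}^\T\vec{e}}_\infty$. For this deterministic noise, the available bound on $\norm{\vec{\Phi}^\T\vec{e}}_\infty$ is of the same order as that on $\norm{\vec{e}}_2$, so that route costs an extra factor $\sqrt{K}$ in the second term. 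Finally, such a guarantee is a property of $\vec{\Phi}$ alone and holds for all $\vec{x},\vec{e}$ simultaneously, so the union bound over columns you anticipated as the main obstacle is unnecessary.
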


\section{Illustrative experiments to motivate lookahead direction selection}

\begin{wrapfigure}[20]{R}{0.45\textwidth}
    \centering
\includegraphics[width=\linewidth]{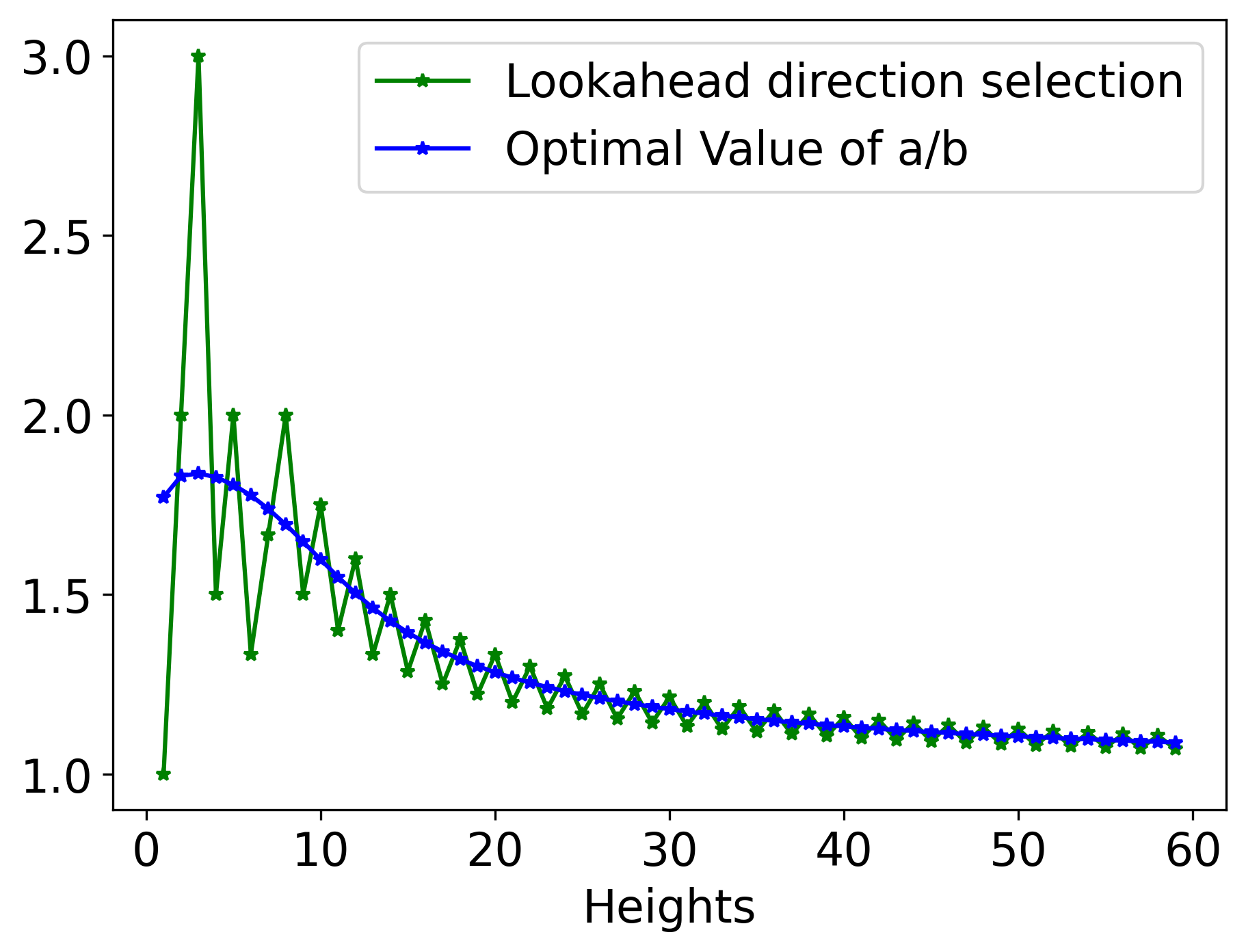}
    \caption{Optimal splitting ratio between the first and second directions. Green curve is obtained using the strategy in Definition~\ref{def:lookahead}}
    \label{fig:scipy_look_ahead}
\end{wrapfigure}

Using a partitioning scheme with a lower near-optimality dimension can lead to a faster decrease in regret. Empirically, we observe that the regret for SequOOL applied for a budget of $200$ evaluations of the function in the Example~\ref{example:1_x_x^2} was $5 \times 10^{-10}$ for the default partitioning scheme and $5.8 \times 10^{-12}$ for the direction selection strategy in Example~\eqref{example:1_x_x^2}.   This example indicates that it can be beneficial to use a direction splitting strategy that adapts to the function being optimized.

Additionally, we evaluate the regret for different choices of $\vec{A}$, by parameterizing $\vec{A} = \big[\begin{smallmatrix}
    \cos \theta & -\sin \theta\\
    \sin \theta & \cos \theta
\end{smallmatrix}\big]$ and changing the rotation angle from $0$ to $\nicefrac{\pi}{8}$ while keeping the direction selection strategy the same as in Example~\eqref{example:1_x_x^2}. Figure~\ref{fig:angle_experiment} shows that the regret varies significantly over the range of angles. This shows that minimizing the angle of discrepancy between $\vec{A}$ and the true directions of variation (which are the standard $x_1$ and $x_2$ axes in this example) is beneficial in reducing the regret.

Given a particular $\vec{A}$ and function~$f$, we consider the question of identifying an appropriate direction selection strategy that minimizes $\mathcal{N}_{\mathcal{A}}(3\nu_{\mathcal{A}} \rho^h_{\mathcal{A}})$ at all heights. The two example strategies for $f(x_1,x_2) = 1 - |x_1| -x_2^2$ we have seen so far are the default round-robin (equivalently 1:1) splitting and the 2:1 splitting in Example\eqref{example:1_x_x^2}. For an $\vec{A}$ with $\theta = \nicefrac{\pi}{48}$, we minimize the number of near-optimal cells $\mathcal{N}_{\mathcal{A}}(3\nu_{\mathcal{A}} \rho^h_{\mathcal{A}})$ at different heights by choosing the best splitting ratio at each height and plot the ratios as the blue line in Figure~\ref{fig:scipy_look_ahead}.
We see that as the height increases to infinity the optimal split ratio converges to $1$. However, at lower heights, the optimal split ratio is greater than 1 and takes its maximum value $1.83$ at $h=3$.

\begin{wrapfigure}[14]{R}{0.40\textwidth}
    \centering
\includegraphics[width=\linewidth]{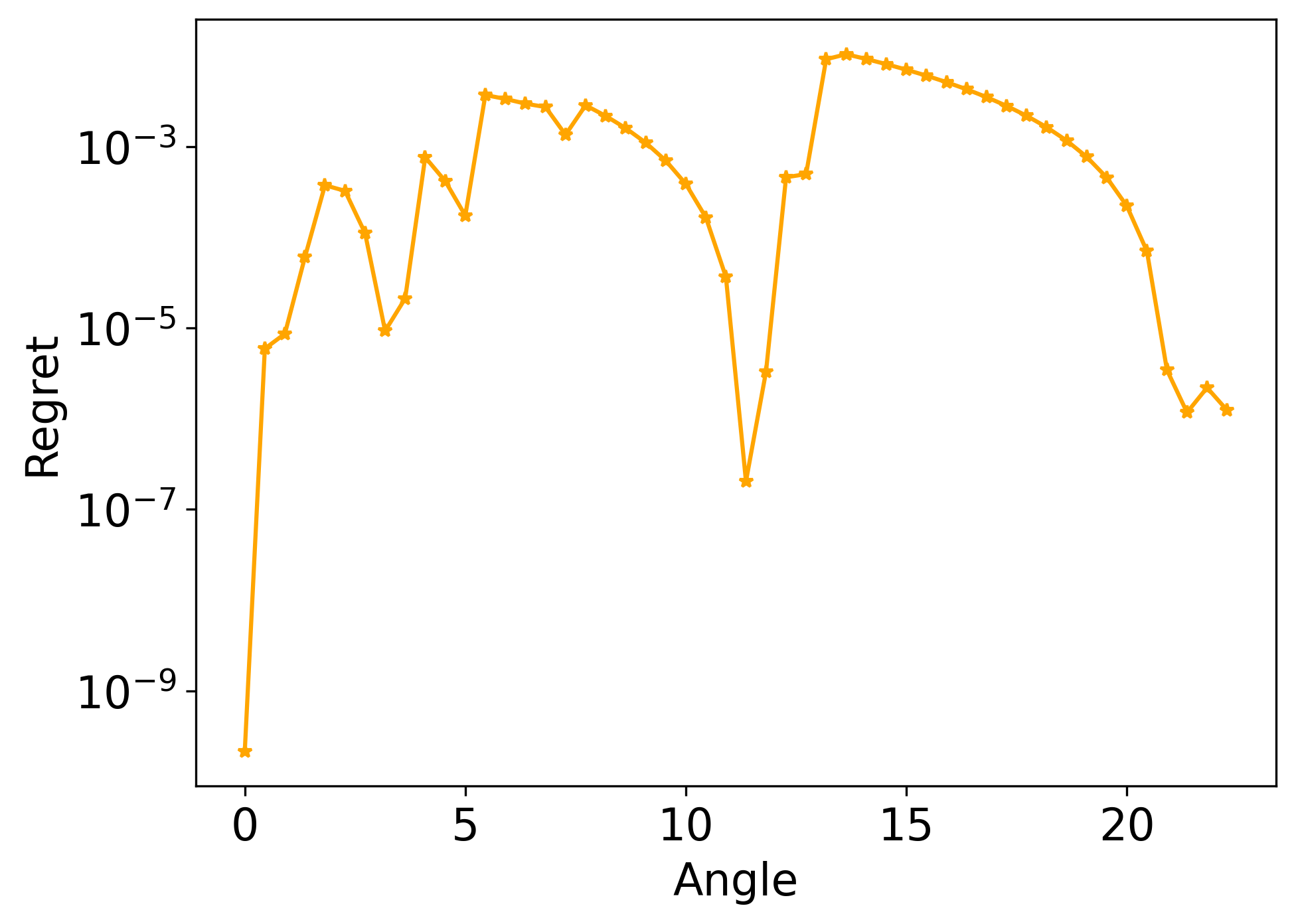}
    \caption{Regret at $n=300$ for SequOOL on $\vec{A}$ with varying $\theta$.}
    \label{fig:angle_experiment}
\end{wrapfigure}

\begin{definition}
\label{def:lookahead}
    Lookahead strategy for direction selection. Given an estimated $\hat{f}$ and the current tree of partitions till depth $h$, the lookahead strategy
    first evaluates the different number of near-optimal cells for $\hat{f}$ at depth $h+1$ by splitting along each of the $m$ directions. It then greedily selects the direction to be split at $h+1$ as the direction that results in the lowest number of near-optimal cells.
\end{definition}

In a numerical experiment, we see that the lookahead strategy closely matches the optimal split ratio (shown as green points in Fig~\ref{fig:scipy_look_ahead}) over all heights. This also results in it having a low regret than the default partitioning scheme. Empirically, we observe that the regret for SequOOL applied for a budget of $500$ evaluations of the function in the previous lemma was $5.68 \times 10^{-10}$ for the partitioning scheme~$\mathcal{A}$ using the lookahead strategy for direction selection, $1.98 \times 10^{-5}$ for $\mathcal{A}$ with 1:1 splitting, and $1.8 \times 10^{-4}$ for $\mathcal{A}$ with the 2:1 splitting strategy from Example \eqref{example:1_x_x^2}.

\section{Proofs}\label{sec:proofs}
Some of the facts which we use in our proofs.

Key inequalities for the matrix norms include: $\norm{\Vec{x}}_{\infty} \le \norm{\Vec{x}}_2 \le \norm{\Vec{x}}_1$.
Holder's inequality, applicable for $p, q \ge 1$ where $\frac{1}{p} + \frac{1}{q} = 1$, states that $|\Vec{x}^\T \Vec{y}| \le \norm{\Vec{x}}_p \norm{\Vec{y}}_q$. The triangle inequality, valid for any $p \ge 1$, asserts that $\norm{\Vec{x} + \Vec{y}}_p \le \norm{\Vec{x}}_p + \norm{\Vec{y}}_p$.

Matrix Norms:
For a matrix $\Vec{A} \in \RR^{m \times n}$, the operator norm is:
\begin{align*}
\norm{\Vec{A}}_p = \sup_{\Vec{x} \neq 0} \frac{\norm{\vec{A}\Vec{x}}_p}{\norm{\Vec{x}}_p}
\end{align*}
When $p = \infty$, $\norm{\Vec{A}}_{\infty} = \max_{1 \le i \le m} \sum_{j=1}^n |a_{ij}|$ and when $p=2$, $\norm{\Vec{A}}_2 = \sigma_{\max}(\Vec{A})$, Where $\sigma_{\max}(\Vec{A})$ represents the largest singular value of matrix $\Vec{A}$. Additionally, the Frobenius norm is given by  $\norm{\Vec{A}}_F = \sqrt{\sum_{i=1}^m \sum_{j=1}^n |a_{ij}|^2}$.

For a matrix $\Vec{A} \in \RR^{m \times n}$ of $\mathrm{rank}$ $r$, the following inequalities hold:
\begin{equation}\label{matrix_inequalities2}
\norm{\Vec{A}}_2 \le \norm{\Vec{A}}_F \le \sqrt{r} \norm{\Vec{A}}_2,
\frac{1}{\sqrt{n}}\norm{\Vec{A}}_{\infty} \le \norm{\Vec{A}}_2 \le \sqrt{m} \norm{\Vec{A}}_{\infty}
\end{equation}

For the operator norm $\norm{\cdot}_2$, one has
\begin{align}\label{matrix_inequalities}
\norm{\Vec{A} \Vec{B}}_2 \le \norm{\Vec{A}}_2 \norm{\Vec{B}}_2,  \norm{\Vec{A} \Vec{B}}_2 \ge \norm{\Vec{A}}_2 \sigma_{\min} (\Vec{B}),
\norm{\Vec{A} \Vec{B}}_2 \ge \norm{\Vec{B}}_2 \sigma_{\min} (\Vec{A})
\end{align}
Suppose a matrix $\Vec{A}$ consists of orthonormal rows or columns, then
$\norm{\Vec{A}}_2 = \norm{\Vec{A}^\T}_2 = 1  $

\subsection{Proof of Example \ref{example:1_x}}

\begin{proof}
Consider the function $ f(x_1,x_2) = g(\vec{A}\vec{x}) = 1 - |x_1| $ with $ \vec{A} = [1,0] $ and $ g(z) = 1 - |z| $.
Let $\mathcal{P}, \mathcal{A}$ be the partitioning schemes defined in \cref{def:all_partitions} with $\kappa = 1$ and parameters $(\nu, \rho), (\nu_{\mathcal{A}}, \rho_{\mathcal{A}})$ respectively.
For the $\mathcal{P}$ partitioning scheme, along the $X$ axis, the side lengths of the children at depth $h$ are given by $3^{-\lceil h/2 \rceil}$ and along the $Y$ axis, it is $3^{-\lfloor h/2 \rfloor}$.
Consider $f(x_1^*,x_2^*) - f(x_1, x_2) = |x_1|$, and the cell with the representative origin is the $\mathcal{P}^*_h$ cell, therefore
\[|x_1| = 3^{-\lceil h/2 \rceil}  \le  3^{-h/2} \le (1/\sqrt{3})^h.\]
According to Assumption~\ref{asp:sequool}, the appropriate values are  $\nu = 1$, $\rho = 1/\sqrt{3}$.
Now consider a rectangle region $R$ with corners $\{(-3\nu\rho^h,-1), (3\nu\rho^h,-1), (3\nu\rho^h,1), (-3\nu\rho^h,1)\}$. Since $f^* - f(x_1,x_2) = |x_1|,$ $\forall (x_1, x_2) \in R, f^* - f(x_1,x_2) \le 3\nu\rho^h$. Thus any cell in $\mathcal{P}$ that has a non-empty intersection with $R$ is a near-optimal cell. Each cell at depth $h$ has an area of $3^{-\cel{h/2}} 3^{-\flr{h/2}}$, therefore
\[\mathcal{N}_{\mathcal{P}}(3\nu \rho^h) \geq \frac{\area{R}}{\area{\mathcal{P}_{h,i}}} =\frac{4(3(1/\sqrt{3})^h)}{3^{-\cel{h/2}} 3^{-\flr{h/2}}},\]
yielding that since $\mathcal{N}_{\mathcal{P}}(3\nu \rho^h) = \Omega(\rho^{-h})$ , from the Definition \ref{def:sequool_near_optimality_dimension} we get $\eta_{\mathcal{P}}=1$.

For the $\mathcal{A}$ partitioning scheme, we first note that $\alpha=1$.  Along $X$ axis, the side length of the children at depth $h$ is $3^{-h}$, therefore
\[f(x_1^*,x_2^*) - f(x_1, x_2) = |x_1| = 3^{-h} \le (1/3)^h,\]
yielding the values are $\nu_{\mathcal{A}} = 1$, $\rho_{\mathcal{A}} = 1/3$. Now consider a line segment $L$ with endpoints $\{(-3\nu_{\mathcal{A}}\rho_{\mathcal{A}}^h,0), (3\nu_{\mathcal{A}}\rho_{\mathcal{A}}^h,0)\}$. Since $f^* - f(x_1,x_2) = |x_1|,$ $\forall (x_1, 0) \in L, f^* - f(x_1,x_2) \le 3\nu_{\mathcal{A}}\rho_{\mathcal{A}}^h$. Thus any cell in $\mathcal{A}$ that has an intersection with the $L$ is a near-optimal cell. Every $\mathcal{A}_{h,i}$ cell at depth $h$ has a length of $3^{-h}$, therefore,
\[\mathcal{N}_{\mathcal{A}}(3\nu_{\mathcal{A}}\rho_{\mathcal{A}}^h) \leq 2 + \frac{\len{L}}{\len{\mathcal{A}({h,i})}} = 2 + \frac{2(3(1/3)^h)}{3^{-h}} = 8,\]
where the additional term $2$ accounts for cells in $\mathcal{A}$ that partially intersect $L$ at its endpoints. Hence $\mathcal{N}_{\mathcal{A}}(3\nu_{\mathcal{A}}\rho_{\mathcal{A}}^h)$ is a constant and $\eta_{\mathcal{A}}=0$.
\end{proof}

\subsection{Proof of Example~\ref{example:1_x_x}}

\begin{proof}
    Consider the function $ g(k) = 1 - |k| $ with $ \vec{A} = [1,1] $. Then, $ f(x_1,x_2) = g(\vec{A}\vec{x}) = 1 - |x_1 + x_2| $.
For the $\mathcal{P}$ partitioning scheme with $\kappa = 1$, along the $X$ axis, the side lengths of the children at depth $h$ along the $X, Y$ axes are $3^{-\lceil h/2 \rceil}, 3^{-\lfloor h/2 \rfloor}$ respectively.
Consider $f(x_1^*,x_2^*) - f(x_1, x_2) = |x_1+x_2|$, and the cell with the representative origin is the $\mathcal{P}^*_h$ cell, therefore
\[|x_1+x_2| = 3^{-\lceil h/2 \rceil} + 3^{-\flr{h/2}}  \le  2\cdot3^{-(h-1)/2} \le 2\sqrt{3}(1/\sqrt{3})^h.\]
According to Assumption~\ref{asp:sequool}, $\nu = 2\sqrt{3}$, $\rho = 1/\sqrt{3}$.
Consider a rhombus region $T$ formed by coordinates $\{(-3\nu\rho^h,0), (0,-3\nu\rho^h), (3\nu\rho^h,0), (0,3\nu\rho^h)\}.$ Since $f^* - f(x_1,x_2) = |x_1+ x_2|,$ $\forall (x_1, x_2) \in T, f^* - f(x_1,x_2) \le 3\nu\rho^h$. This says, any cell in $\mathcal{P}$ that has an intersection with the $T$ is near-optimal cell. Each cell at depth $h$ has an area of $3^{-\cel{h/2}} 3^{-\flr{h/2}}$, therefore,
\begin{align*}
\mathcal{N}_{\mathcal{P}}(3\nu\rho^h) &\leq  \left( 1 + \frac{3\nu\rho^h}{3^{-\cel{h/2}}} \right)  \left (1 + \frac{3\nu\rho^h}{3^{-\flr{h/2}}}\right) \le O(1)
\end{align*}
Hence $\mathcal{N}_{\mathcal{P}}(3\nu\rho^h)$ is independent of $h$ and $\eta_{\mathcal{P}}=0$.

For the $\mathcal{A}$ partitioning scheme, along the X axis and Y axis, the side lengths of the children at depth $h$ is given by $3^{-h}$ and $3^{-h}$, giving that
\[f(x_1^*,x_2^*) - f(x_1, x_2) = |x_1+x_2| = 2\cdot3^{-h} \le 2(1/3)^h.\]
According to Assumption~\ref{asp:sequool}, the values are $\nu_{\mathcal{A}} = 2$, $\rho_{\mathcal{A}} = 1/3$.

Further, we see from Definition~\ref{def:alpha_max} that $\alpha=\sqrt{2}$ in this example.
\end{proof}

\subsection{Proof of Example~\ref{example:1_x_x^2}}

\begin{proof}

For the $\mathcal{P}$ partitioning scheme, the side lengths of the children at depth $h$ are as follows: along the $X$ axis, $3^{-\lceil h/2 \rceil}$. Along the $Y$ axis, it is $3^{-\lfloor h/2 \rfloor}$.
Consider $f(x_1^*,x_2^*) - f(x_1, x_2) = |x_1| + x_2^2,$ and the cell with the representative origin is the $\mathcal{P}^*_h$ cell, therefore

\begin{align*}
    |x_1| + x_2^2 &=  3^{-\lceil h/2 \rceil} + 3^{-2 \lfloor h/2 \rfloor} \le 3^{-\lfloor h/2 \rfloor} + 3^{-2 \lfloor h/2 \rfloor} \\
    &\le 2\cdot3^{-\lfloor h/2 \rfloor} \le 2 \cdot 3^{-(h-1)/2} \le 2\sqrt{3}(1/\sqrt{3})^h
\end{align*}
According to Assumption \ref{asp:sequool}, the appropriate values are $\nu = 2\sqrt{3}$, $\rho = 1/\sqrt{3}$ and $\nu \rho^h = 2\sqrt{3}(1/\sqrt{3})^{h}$.

Consider a region $R$ which is given by $|x_1| + x^2_2 \le 3 \nu \rho^h$. Since $f^* - f(x_1,x_2) = |x_1|+x^2_2,$ $\forall (x_1, x_2) \in R, f^* - f(x_1,x_2) \le 3\nu\rho^h$. Thus any cell in $\mathcal{P}$ that has a non-empty intersection with $R$ is a near-optimal cell. Each cell at depth $h$ has an area of $3^{-\cel{h/2}} 3^{-\flr{h/2}}$, therefore
\[\mathcal{N}_{\mathcal{P}}(3\nu \rho^h) \geq \frac{\area{R}}{\area{\mathcal{P}_{h,i}}}\]
Now, we compute the area of the region formed by the curve $|x_1| + x^2_2 = 3\nu\rho^h$ which is given by

\begin{align*}
    4 \int_{0}^{\sqrt{3\nu\rho^h}} \int_{0}^{3\nu\rho^h-x^2_2} dx_1 dx_2 = \nicefrac{8}{3} (3\nu \rho ^h)^{\nicefrac{3}{2}}
\end{align*}
Thus,

\begin{align*}
\mathcal{N}_{\mathcal{P}}(3\nu \rho^h) \geq \frac{\area{R}}{\area{\mathcal{P}_{h,i}}} = \frac{8/3 (6\sqrt{3} (\frac{1}{\sqrt{3}})^h)^{3/2}}{3^{-\cel{h/2}} 3^{-\flr{h/2}}} = \Omega\left(  (\frac{1}{\sqrt{3}})^{-h/2} \right)
\end{align*}
Hence, $\eta_{\mathcal{P}} \ge 0.5$

The $\istar{P}{h}$ cell contains the point $(0, 0)$. Since $\mathcal{P}$ is an axis-aligned partitioning scheme, we can bound the number of cells directly above, i.e., having the same x coordinate of their representative as that of, $\istar{P}{h}$ by the value $\sqrt{3\nu \rho^h}/3^{-\lfloor h/2 \rfloor}$. In a similar manner, we can bound the maximum number of cells having their representative's y coordinate to be the same as that of $\istar{P}{h}$ by $3\nu \rho^h/3^{-\lceil h/2 \rceil}$. Since $\mathcal{P}$ is an axis aligned partitioning scheme, the previous two bounds imply that number of near optimal cells are upper bounded by the product of number of cells along $x$ axis times number of cells along $y$ axis.
Thus,
\begin{align*}
    \mathcal{N}_{\mathcal{P}}(3 \nu \rho^h) \le \left( 1+ \cel{\frac{\sqrt{3 \cdot 2\sqrt{3}(1/\sqrt{3})^{h}}} {3^{-\lceil h/2 \rceil}}}  \right) \left(1 + \cel{\frac{3 \cdot 2\sqrt{3}(1/\sqrt{3})^{h}}{3^{-\lfloor h/2 \rfloor}}}  \right) = O\left(  (\frac{1}{\sqrt{3}})^{-h/2} \right)
\end{align*}
Thus, $\eta_{\mathcal{P}} \le 0.5$, hence $\eta_{\mathcal{P}} = 0.5$

For the $\mathcal{A}$ partitioning scheme, the side lengths of the children at depth $h$ are as follows: along the $X$ axis, $3^{-(h+1)}$ if $h$ is odd; otherwise, $3^{-h}$. Along the $Y$ axis, it is $3^{-\lfloor h/2 \rfloor}$.
Let $h$ is even. consider $f(x_1^*,x_2^*) - f(x_1, x_2) = |x_1| + x_2^2 $ and the cell with the representative origin is the $\istar{A}{h}$ cell, therefore

\begin{align*}
    |x_1| + x_2^2 = 3^{-h} + 3^{-2 \lfloor h/2 \rfloor} \le 3^{-h} + 3^{-2 (h-1)/2} \le 4 \cdot 3^{-h}
\end{align*}
According to Assumption \ref{asp:sequool}, we have $\nu_{\mathcal{A}} = 4$, $\rho_{\mathcal{A}} = 1/3$, and $3\nu_{\mathcal{A}} \rho_{\mathcal{A}}^h = 12 \cdot 3^{-h}$.

The $\istar{A}{h}$ cell contains the point $(0, 0)$. Since $\mathcal{A}$ is an axis-aligned partitioning scheme, we can bound the number of cells directly above, i.e., having the same x coordinate of their representative as that of, $\istar{A}{h}$ by the value $\sqrt{3\nu \rho^h}/3^{-h/2}$. In a similar manner, we can bound the maximum number of cells having their representative's y coordinate to be the same as that of $\istar{P}{h}$ by $3\nu \rho^h/3^{- h }$. Since $\mathcal{P}$ is an axis aligned partitioning scheme, the previous two bounds imply that number of near optimal cells are upper bounded by the product of number of cells along $x$ axis times number of cells along $y$ axis.
Thus,

\begin{align*}
    \mathcal{N}_{\mathcal{A}}(3 \nu_{\mathcal{A}} \rho_{\mathcal{A}}^h) \le \left( 1+ \cel{\frac{\sqrt{3 \cdot 4 \cdot 3^{-h}}} {3^{-h/2 }}}  \right) \left(1 + \cel{\frac{3 \cdot 4 \cdot 3^{-h}}{3^{-h}}}  \right) = 65
\end{align*}
When $h$ is odd, following the same steps will give $\mathcal{N}_{\mathcal{A}}(\nu_{\mathcal{A}} \rho_{\mathcal{A}}^h) \le 148$.

Hence, $\mathcal{N}_{\mathcal{A}}(\nu_{\mathcal{A}} \rho_{\mathcal{A}}^h) $ is a constant and $\eta_{\mathcal{A}}= 0$.
\end{proof}

\subsection{Proof of Proposition~\ref{lemma:a_extent}}\label{apx:lemma:a_extent}
\begin{proof}

We claim that $\Vec{z}^* = \Vec{A}\Vec{x}^*$, where $\Vec{x}^* \in  \mathcal{X}$ is the optimizer of $f$. This choice of $\Vec{z}^*$ satisfies $f(\Vec{A}^\T \Vec{z}^*) = f(\Vec{x}^*)$ as required in the Lemma. we will now show that the infinity norm of this $\Vec{z}^*$ is less than or equal to $\alpha$.

First, observe that for any $\Vec{x} \in [-1,1]^d$, we can express $\Vec{x}$ as a convex combination of corner points:
\begin{equation}\label{convex_combination}
        \Vec{x} = \sum_{j=1}^{2^d} \Vec{c}_j \alpha_j, \quad \alpha_j \ge 0, \sum_{j=1}^{2^d} \alpha_j = 1
    \end{equation}

Now, let's consider the infinity norm of $\Vec{Ax^*}$:
\begin{align*}
   \norm{\Vec{Ax^*}}_{\infty} &= \norm{\sum_{j=1}^{2^d} \Vec{A}\Vec{c}_j \alpha_j}_{\infty}  \tag{Using Equation \ref{convex_combination}}\\
   &\le \sum_{j=1}^{2^d} \norm{\Vec{A}\Vec{c}_j \alpha_j}_{\infty} \tag{Triangle Inequality of Norms} \\
   &= \sum_{j=1}^{2^d} |\alpha_j|\norm{\Vec{A}\Vec{c}_j}_{\infty} \tag{Absolute Homogeneity property of Norms }\\
   &\leq \alpha \sum_{j=1}^{2^d} |\alpha_j| \leq \alpha \tag{From the definition of $\alpha$ in the Proposition statement}
\end{align*}
Therefore, we have shown that $\norm{\Vec{z}^*}_{\infty} = \norm{\Vec{Ax^*}}_{\infty} \le \alpha$, which completes the proof.
\end{proof}

\subsection{Proof of Proposition~\ref{prop:star_cells}}
\begin{proof}

Let $\Vec{x}^*_h$ be the representative point (midpoint) of the $\mathcal{P}^*_h$ cell and $\Vec{x} $ is within the $\mathcal{P}^*_h$ cell.

Let $\Vec{x}$ be a point in a hyperrectangle centered at $\Vec{x}^*_h$ with side lengths $2\kappa \vec{c}$, where $\Vec{c}$ is a vector of side lengths. Then, this point $\Vec{x}$ satisfies
$|x_i - x^*_{h,i}| \leq \kappa c_i$ for all $i \in \{1,\ldots,d\}$. Equivalently, this set of inequalities can be written as $
    \max_{i \in \{1,\ldots,d\}} \left|\frac{x_i - x^*_{h,i}}{c_i}\right| \leq \kappa$. Using the infinity norm, we can concisely express this conditions as
$\left\|\frac{\vec{x} - \vec{x}^*_h}{\vec{c}}\right\|_\infty \leq \kappa $, where the division is performed element-wise.

For the partition scheme, we perform trisection along each axis in a round-robin manner. After $h$ iterations, the side lengths are given by:

\begin{equation}
    c_i = \kappa 3^{-\left\lfloor \frac{h+d-i}{d} \right\rfloor}, \quad i \in \{1,\ldots,d\}
\end{equation}
where $\left\lfloor \frac{h+d-i}{d} \right\rfloor$ represents the number of trisections applied to dimension $i$.
Therefore, the cell $\mathcal{P}^*_h$ can be described as:

\begin{equation}
    \mathcal{P}^*_h = \left\{\vec{x} \in \mathbb{R}^d : \left\|\frac{\vec{x} - \vec{x}^*_h}{\vec{c}}\right\|_\infty \leq \kappa \quad \text{with} \quad \vec{c} = [3^{-\left\lfloor \frac{h+d-i}{d} \right\rfloor}]_{i=1}^d. \right\}
\end{equation}

From the \cref{def:A aligned partitioning},  we have
$\istar{\mathcal{A}}{h} \triangleq \{\Vec{A}^\T \Vec{\alpha} : \Vec{\alpha} \in \istar{\mathcal{T}}{h} \}$ and

\begin{equation}
    \mathcal{T}^*_h = \left\{\vec{\alpha} \in \mathbb{R}^m : \left\|\frac{\Vec{\alpha} - \Vec{\alpha}^*_h}{\vec{s}}\right\|_\infty \leq \alpha \quad \text{with} \quad \vec{s} = [3^{-\left\lfloor \frac{h+m-i}{m} \right\rfloor}]_{i=1}^d. \right\}
\end{equation}
Hence,
\begin{equation}
    \mathcal{A}^*_h = \left\{\Vec{A}^\T \vec{\alpha} : \Vec{\alpha} \in \RR^m, \left\|\frac{\Vec{\alpha} - \Vec{\alpha}^*_h}{\vec{s}}\right\|_\infty \leq \alpha \quad \text{with} \quad \vec{s} = [3^{-\left\lfloor \frac{h+m-i}{m} \right\rfloor}]_{i=1}^d. \right\}
\end{equation}
\end{proof}

\subsection{Proof of Lemma~\ref{lemma:P_m_is_inP_2}}

\begin{proof}
Following the Definition~\ref{def:all_partitions}, consider the partitioning schemes $\mathcal{P}$ and $\mathcal{A}$.

According to Proposition~\ref{prop:star_cells}, we can express the cell $\istar{A}{h}$ as:
\[\istar{A}{h} = \{\vec{A^\T \alpha}: \vec{\alpha} \in \mathbb{R}^m, \left\|\frac{\vec{\alpha} - \vec{\alpha}^*_{h}}{\vec{s}}\right\|_{\infty} \le \alpha \quad \text{with} \quad \vec{s} = [3^{-\lfloor \frac{h+m-i}{m} \rfloor}]_{i=1}^m \}\]
At depth $h = km+i$, for $i \in [1:m-1]$ and $k \in \mathbb{N}_0$, the side lengths simplifies to: $
\vec{s} = [3^{-\mathds{1}(j \leq i)-k}]_{j=1}^m$
Similarly,
\[\istar{P}{h} = \{\vec{x}: \vec{x} \in \mathbb{R}^d, \left\|\frac{\vec{x} - \vec{x}^*_{h}}{\vec{c}}\right\|_{\infty} \le 1 \} \quad \text{with} \quad \vec{c} = [3^{-\lfloor \frac{h+d-i}{d} \rfloor}]_{i=1}^d .\]
At depth $h = kd+i$, for $i \in [1:d-1]$ and $k \in \mathbb{N}_0$, the side lengths for $\istar{P}{h}$ become: $\vec{c} = [3^{-\mathds{1}(j \leq i)-k}]_{j=1}^d$.

And let us consider another partitioning scheme $\mathcal{G} = \kappa \mathcal{P}$. Let us denote $\Vec{\tilde{x}}^*_h$ to be the representative of the $\istar{G}{h}$ cell. Using Proposition~\ref{prop:star_cells}, the cell $\istar{G}{h}$ can be written as
\[
\istar{G}{kd+i} = \{\vec{x}: \vec{x} \in \mathbb{R}^d, \left\|\frac{\vec{x} - \vec{\tilde{x}}^*_{kd+i}}{\vec{c}}\right\|_{\infty} \le \kappa \} \quad \text{with} \quad \vec{c} = [3^{-\mathds{1}(j \leq i)-k}]_{j=1}^d
\]
Consider an element $\vec{x} = \vec{A}^\T \vec{\alpha} \in \istar{A}{km+i}$. we now proceed with the following chain of inequalities:

\begin{align*}
    \norm{\vec{x} - \vec{\tilde{x}}^*_{kd+i}}_{\infty} &\le \norm{\vec{x} - \vec{x}^*}_{\infty} + \norm{\vec{x}^* - \vec{\tilde{x}}^*_{kd+i}}_{\infty} \tag{Vector Norm property}\\
    &= \norm{\vec{A}^\T\vec{\alpha} - \vec{A}^\T\vec{\alpha}^*}_{\infty} + \norm{\vec{x}^* - \vec{\tilde{x}}^*_{kd+i}}_{\infty}  \tag{$\Vec{x}^* = \Vec{A}^\T \Vec{\alpha}^*$}\\
    &= \norm{\vec{A}^\T (\vec{\alpha} - \Vec{\alpha^{*}})}_{\infty} + \norm{\vec{x}^* - \vec{\tilde{x}}^*_{kd+i}}_{\infty} \\
    &\leq \norm{\vec{A}^{\T}}_{\infty} \norm{\vec{\alpha} - \Vec{\alpha^{*}}}_{\infty} + \norm{\vec{x}^* - \vec{\tilde{x}}^*_{kd+i}}_{\infty} \tag{Matrix Norm definition}\\
    &\leq \sqrt{m} 3^{-k} \alpha  + 3^{-k} \kappa \tag{From the Matrix Inequality ~\ref{matrix_inequalities2}} \\
    &\le 2 \cdot 3^{-k} \kappa \tag{ From the Lemma statement, $\kappa \ge \kappaone$} \\
    &\le  3^{-(k-1)} \kappa
\end{align*}
This sequence of inequalities demonstrates that $\vec{x} \in \istar{G}{(k-1)d}$. Moreover, we know that $\istar{G}{(k-1)d} \subseteq \istar{G}{kd + i}$. Therefore, partitioning scheme $\mathcal{P}$ with $\kappa \ge \kappaone$ will satisfy:
\[ \istar{A}{km+i} \subseteq \istar{P}{kd+i}  \quad \forall i \in [1:m-1], \forall k \in \mathbb{N}_0\]

\end{proof}
\subsection{Proof of Lemma~\ref{lemma:nu_rho_relate_pm_p2}}
\begin{proof}
Since Lemma~\ref{lemma:P_m_is_inP_2} is assumed to be applicable, consider a partitioning scheme $\mathcal{P}$ with $\kappa = \kappaone$ and suppose $f^*_1 = \sup_{\Vec{x}\in \kappa \Sq^d }f(\Vec{x})$. Let $\mathcal{P}$ satisfies \cref{asp:sequool}, then, there exist constants $\nu$ and $0 < \rho < 1$ such that
\begin{equation}\label{eq:sup_partition}
    \sup_{\Vec{x} \in \istar{P}{h}} (f^*_1 - f(\Vec{x}))
    \le \nu \rho^{h}. \quad \forall h \in \mathbb{N}_0
\end{equation}
Since $\kappa \ge \kappaone$, we can incorporate Lemma~\ref{lemma:P_m_is_inP_2}, which gives,
\[\istar{A}{km+i} \subseteq \istar{P}{kd+i} \quad \forall i \in [1:m-1], \forall k \in \mathbb{N}_0\]
Therefore, we get
\begin{equation}\label{eq:lm_le_p2}
    \sup_{\Vec{x} \in \istar{A}{km+i} } (f^* - f(\Vec{x})) \le \sup_{\Vec{x} \in \istar{P}{kd+i}} (f^* - f(\Vec{x})) \quad \forall i \in [1:m-1], \forall k \in \mathbb{N}_0
\end{equation}
Combining \cref{eq:lm_le_p2,eq:sup_partition}, and using the fact $f^* \le f^*_1$, we have
\begin{align}\label{eq:proof_intermediate_1}
    \sup_{\Vec{x} \in \istar{A}{km+i}} (f^* - f(\Vec{x})) \le \sup_{\Vec{x} \in \istar{P}{kd+i}} (f^* - f(\Vec{x})) \le  \nu \rho^{kd+i} \le \nu (\rho^{\frac{kd+i}{km+i}})^{km+i}
\end{align}
Therefore, we have
\begin{align}\label{eq:require}
    \sup_{\Vec{x} \in \istar{A}{km+i}} (f^* - f(\Vec{x})) \le \nu (\rho^{\frac{kd+i}{km+i}})^{km+i}  \quad \forall i \in [1:m-1], \forall k \in \mathbb{N}_0
\end{align}
Consider $\rho_{\mathcal{A}} = \rho^\beta$ where
\begin{align}\label{eq:beta}
    \beta = \min\left\{\frac{kd+i}{km+i} \ \bigg| \ i \in [1:m-1], \ k \in \mathbb{N}\right\}
\end{align}
Then the sequence of inequalities $\sup_{\Vec{x} \in \istar{A}{km+i}} (f^* - f(\Vec{x})) \le \nu (\rho^{\beta})^{km+i}  \quad \forall i \in [1:m-1], \forall k \in \mathbb{N}$ ensures that \eqref{eq:require} is satisfied for all heights $h \geq m$.
We show that $\beta = \frac{d + m-1}{2m-1}$ is the solution to \eqref{eq:beta}.
We consider the sequence for $ k = 1 $:
\begin{align*}
t_i = \frac{d+i}{m+i}, \quad \forall i \in [1, m-1]
\end{align*}
First, we show that this sequence is decreasing.
For any $ i $ in the range $[2, m-2]$, consider the difference between consecutive terms:
\begin{align*}
t_{i+1} - t_i = \frac{d+(i+1)}{m+(i+1)} - \frac{d+i}{m+i}   = \frac{m - d}{(m+i)(m+i+1)}
\end{align*}
Since $m > d$, we have $m - d > 0$, but the denominator $(m+i)(m+i+1)$ is positive. Thus, $t_{i+1} < t_i$, which says that sequence $t_i$ is decreasing and the minimum value of $t_i$ occurs at $i = m-1$:
\[
t_{m-1} = \frac{d+(m-1)}{m+(m-1)} = \frac{d+m-1}{2m-1}
\]
We now consider the general form for any $k \in \mathbb{N}$:
\[
\beta = \min\left\{\frac{kd+i}{km+i} \ \bigg| \ i \in [1:m-1]\right\}
\]
With the observation that
\[\frac{(k+1)d + m - 1}{(k+1)m + m - 1} \geq \frac{kd + m - 1}{km + m - 1}\]
we get $\beta = (d+m-1)/(2m-1)$.
With this choice of $\beta$ we have
\begin{align}
    \sup_{\Vec{x} \in \istar{A}{km+i}} (f^* - f(\Vec{x})) \le \nu (\rho^{\frac{kd+i}{km+i}})^{km+i} \le \nu (\rho_{\mathcal{A}}^{\beta})^{km+i}  \quad \forall i \in [1:m-1], \forall k \in \mathbb{N}
\end{align}
Now, we choose $\nu_{\mathcal{A}}$ to satisfy \eqref{eq:require} for the first $m-1$ heights.
From Equation~\ref{eq:proof_intermediate_1},
\begin{align}
    \sup_{\Vec{x} \in \istar{A}{i}} (f^* - f(\Vec{x})) \le \nu \rho^i &= \nu \rho^{i-\beta i} \rho^{\beta i} \quad  \forall i \in [1:m-1] \\
    &\le \nu \rho^{(1-\beta)(m-1)} \rho^{\beta i} \quad \forall i \in [1:m-1]
\end{align}
Therefore, $\nu_{\mathcal{A}} = \nu \rho^{(1-\beta)(m-1)}$ and $\rho_{\mathcal{A}} = \rho^{\beta}$ will satisfy the Assumption~\ref{asp:sequool} for the $\mathcal{A}$ partitioning scheme.

\end{proof}

\subsection{Proof of Lemma~\ref{lemma:nu_rho_relate_p1_pm}}

\begin{proof}
Consider partitioning schemes $\mathcal{P}$ ($\kappa = 1$) and $\mathcal{G}$ ($\kappa = \kappa_1 = \kappaone$) with star cells:
\[\istar{P}{h} = \{\vec{x}: \vec{x} \in \RR^d, \norm{ \frac{\vec{x} - \vec{x}^*_{h}}{\vec{c}}}_{\infty} \le 1\},
\istar{G}{h} = \{\vec{x}: \vec{x} \in \RR^d, \norm{ \frac{\vec{x} - \tilde{\vec{x}}_{h}^*}{\vec{c}}}_{\infty} \le \kappa\}\].
where $\vec{c} = [3^{-\flr{\frac{h+d-i}{d}}}]_{i=1}^d$

At height $\tilde{h}_1 = d\cel{\log_3{\sqrt{m} \alpha}}$, there are $\cel{\log_3{\sqrt{m} \alpha}}$ divisions along each of the $d$-axis. Since at each division the cell is divided into three parts, after $\cel{\log_3{\sqrt{m} \alpha}}$ divisions, side length along any axis becomes $ \kappa_1/3^{\cel{\log_3{\sqrt{m} \alpha}}} = 1$. Since we are focusing on the $\vec{x}^*$ which lies in the domain $\istar{P}{0}$, we have $\istar{G}{ \tilde{h}_1} = \istar{P}{0}$. Subsequently, we deduce that $\istar{G}{h + \tilde{h}_1} \subseteq \istar{P}{h}$.

implying:
\begin{align}
\sup_{\Vec{x} \in \istar{G}{\tilde{h}_1+h}}(f^* - f(\Vec{x})) = \sup_{\Vec{x} \in \istar{P}{h}} (f^* - f(\Vec{x}))   \quad \forall h \in \mathbb{N}_0
\end{align}

By lemma assumption $\mathcal{P}$ satisfies Assumption~\ref{asp:sequool}, with parameters ($\nu, \rho$). Therefore we have
\begin{align*}
    \sup_{\Vec{x} \in \istar{P}{h}} (f^* - f(\vec{x})) \le \nu \rho^h   \quad \forall h \in \mathbb{N}_0
    \end{align*}
Therefore,
\begin{align}
\label{eq:nu_rho_p1_pm_1}
        \sup_{\Vec{x} \in \istar{G}{\tilde{h}_1+h}} (f^* - f(\Vec{x})) \le \nu \rho^h = \nu\rho^{-\tilde{h}_1} \rho^{\tilde{h}_1+h} \quad \forall h \in \mathbb{N}_0
    \end{align}
For depths  $h \in [1:\tilde{h}_1-1]$:
\begin{align*}
     \sup_{\Vec{x} \in \istar{G}{h}} (f^* - f(\Vec{x})) &\le f^* - \inf_{\Vec{x} \in \kappa_1 \Sq^d} f(\Vec{x}) \\
     &\le (f^* - \inf_{\Vec{x} \in \kappa_1 \Sq^d} f(\Vec{x})) \rho^{-\tilde{h}_1} \rho^h
\end{align*}

Using inequality \ref{eq:nu_rho_p1_pm_1} and the above inequality, we conclude that $\mathcal{G}$ satisfies Assumption~\ref{asp:sequool} with parameters $ (\rho, \max\{\nu, f^* - \inf_{\Vec{x} \in \kappa_1 \Sq^d} f(\Vec{x})\} \rho^{-\tilde{h}_1})$.

By Lemma~\ref{lemma:nu_rho_relate_pm_p2} and $\kappa \ge \kappaone$, we conclude:
\[\rho_{\mathcal{A}} = \rho^{\beta}, \nu_{\mathcal{A}} = \max\{\nu, f^* - \inf_{\Vec{x} \in \kappa_1 \Sq^d} f(\Vec{x})\} \rho^{(1-\beta)(m-1) - \tilde{h}_1}\]
\end{proof}

\subsection{Proof of Lemma~\ref{lemma:cells_tiling}}

\begin{proof}

    Let $B^{\circ}(\Vec{x},r)$ be the open ball corresponding to the closed ball $B(\Vec{x},r)$. We will first demonstrate that $
B^{\circ}(\Vec{c}_i,\kappa') \cap B^{\circ}(\Vec{c}_j,\kappa') = \emptyset \quad  \forall i \not = j$. Suppose there exists a point $\Vec{y} \in B^{\circ}(\Vec{c}_i,\kappa') \cap B^{\circ}(\Vec{c}_j,\kappa')$, then:
\begin{equation*}
    \norm{\Vec{c}_i - \Vec{c}_j}_{\infty}
    \le \norm{\Vec{c}_i -\Vec{y}}_{\infty} + \norm{\Vec{c}_j-\Vec{y}}_{\infty}
    < 2\kappa'.
\end{equation*}
However, for lattice points, we have $\norm{\Vec{c}_i - \Vec{c}_j}_\infty \geq 2\kappa'$, contradicting the above inequality. %

Next, we show that \begin{align}\label{balls_set_containment}
    B(\Vec{c}_i, \kappa') \subseteq  (\kappa + 2\kappa')\Sq^m \quad \forall \Vec{c}_i \in \mathcal{C}.
\end{align}
For any $\Vec{y} \in B(\Vec{c}_i, \kappa')$ we have that:
\begin{align*}
    \norm{\Vec{y}}_{\infty} &\leq \norm{\Vec{y} - \Vec{c}_i}_\infty + \norm{\Vec{c}_i}_\infty %
    \\
    &\leq \kappa' + \norm{\Vec{c}_i}_\infty \quad \text{(since $\Vec{y} \in B(\Vec{x}_i, \kappa')$} \\
&= \kappa' + \norm{\Vec{c}_i -\Vec{z} + \Vec{z}}_{\infty} \tag{for some $\Vec{z} \in B(\Vec{0}, \kappa) \cap B(\Vec{c}_i, \kappa') \neq \emptyset $} \\
&\le \kappa' + \norm{\Vec{c}_i - \Vec{z}}_{\infty} + \norm{\Vec{z}}_{\infty} \\
&\le \kappa' + \kappa' + \kappa \\
    &= \kappa + 2\kappa'.
\end{align*}
Therefore, $\Vec{y} \in (\kappa + 2\kappa')\Sq^m$, which proves \eqref{balls_set_containment}. Let $N$ be the number of balls $B(\Vec{c}_i, \kappa')$. Since $B^\circ(\Vec{x}_i, \kappa')$ are disjoint and all these balls are contained in $(\kappa + 2\kappa')\Sq^m$, we have:
\begin{align*}
    N \cdot \text{Vol}(B^{\circ}(\Vec{x}_i, \kappa')) &\leq \text{Vol}((\kappa + 2\kappa')\Sq^m) \\
    N \cdot (2\kappa')^m &\leq (\kappa + 2\kappa')^m,
\end{align*}
giving that $ N \le \left(2 + \frac{\kappa}{\kappa'}\right)^m$.
For the lower bound to $N$, since,
$B(\Vec{0}, \kappa) \subseteq \bigcup_{i} B(\Vec{c}_i, \kappa') $, we have that
\begin{align*}
          &\text{Vol}(B(\Vec{0}, \kappa)) =  (2 \kappa)^m  \le \text{Vol} (\bigcup_{i} B(\Vec{c}_i, \kappa')) \\
          &\le \sum_{i=1}^N \text{Vol}(B(\Vec{c}_i, \kappa')) = N  \text{Vol} (B(\Vec{c}_0, \kappa')) = N  (2 \kappa')^m.
    \end{align*}
    Hence $N \ge \left(\frac{\kappa}{\kappa'}\right)^m $.
\end{proof}
\begin{lemma}
\label{lm:proj_a_perp_vol}
Let $P = [-1/\sqrt{d}, 1/\sqrt{d}]^d$, $P_{\text{rot}} = \{\vec{Q}^\T \vec{x} : \vec{x} \in P\}$, where $\vec{Q} = \begin{bmatrix} \vec{A}^\T & \vec{A}_\perp^\T \end{bmatrix}$ is an orthonormal matrix, with $\vec{A} \in \mathbb{R}^{m \times d}$ and $\vec{A}_\perp \in \mathbb{R}^{(d-m) \times d}$. Denote,
$\operatorname{Proj}_{\vec{A}_{\perp}}(S) = \{\Vec{\beta} \in \RR^{d-m} : \Vec{\beta} = \Vec{A}_{\perp} \Vec{x}, \Vec{x} \in S \subseteq \RR^d\}$ and
\[
\vec{\alpha}'_{\max} \triangleq
\begin{bmatrix}
    \max_{1 \leq j \leq 2^d} \vec{q}_{1}^\T
    \vec{c}_j, &
     \cdots, &
     \max_{1 \leq j \leq 2^d} \vec{q}_{m}^\T
    \vec{c}_j, &
    \max_{1 \leq j \leq 2^d} \vec{q}_{m+1}^\T \vec{c}_j, &
    \cdots,&
    \max_{1 \leq j \leq 2^d} \vec{q}_d^\T \vec{c}_j
\end{bmatrix}^\T,
\]
where $\vec{c}_j$ are corners of hypercube $P$.
Then:\[
    \operatorname{Vol}(\operatorname{Proj}_{\vec{A}_\perp}(P_{\text{rot}})) = \prod_{i=m+1}^{d} 2 \vec{\alpha}'_{\max, i}
    \quad \text{and} \quad
    \operatorname{Vol}(\operatorname{Proj}_{\vec{A}_\perp}([-1, 1]^d)) \geq \prod_{i=m+1}^{d} 2 \vec{\alpha}'_{\max, i}.
    \]
\end{lemma}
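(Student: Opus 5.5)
The plan is to work entirely with the linear map $\vec{x} \mapsto \vec{A}_\perp \vec{x}$ together with support functions of the cube. Two facts drive everything. First, for any $\vec{x} \in P$ the vector $\vec{Q}^\T\vec{x}$ has coordinates $\vec{q}_i^\T \vec{x}$, $i=1,\dots,d$, where $\vec{q}_i$ are the rows of $\vec{A}$ followed by those of $\vec{A}_\perp$. Second, a linear functional on a cube is maximized at a corner, since the cube is the convex hull of its corners (the same convex-combination argument as in the proof of Proposition~\ref{lemma:a_extent}). Consequently
\[
\max_{\vec{x}\in P} \vec{q}_i^\T \vec{x} = \max_{1\le j\le 2^d} \vec{q}_i^\T \vec{c}_j = \vec{\alpha}'_{\max,i},
\]
and by the symmetry $P=-P$ the minimum is $-\vec{\alpha}'_{\max,i}$.

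\textbf{Step 1: bounding box and volume of the projection.} I would apply the identification above to the last $d-m$ coordinates. This shows that the projection of $P_{\text{rot}}$ onto the $\vec{A}_\perp$ block, read in the coordinates $\vec{q}_{m+1},\dots,\vec{q}_d$, is a convex, centrally symmetric set whose extent along coordinate $i$ is exactly $[-\vec{\alpha}'_{\max,i},\vec{\alpha}'_{\max,i}]$. Its bounding box therefore has volume $\prod_{i=m+1}^d 2\vec{\alpha}'_{\max,i}$, and the upper bound $\operatorname{Vol}(\operatorname{Proj}_{\vec{A}_\perp}(P_{\text{rot}}))\le\prod_{i=m+1}^d 2\vec{\alpha}'_{\max,i}$ is immediate. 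For the matching lower bound I would try to show the projection actually fills this box. The approach is to exhibit, for each sign pattern $\vec{\epsilon}\in\{-1,1\}^{d-m}$, a point $\vec{x}\in P$ with $\vec{q}_i^\T\vec{x} = \epsilon_i\vec{\alpha}'_{\max,i}$ for all $i>m$ simultaneously, and then conclude by convexity.

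\textbf{Step 2: the inequality for $[-1,1]^d$.} Since $P=[-1/\sqrt d,1/\sqrt d]^d\subseteq[-1,1]^d$ and projection by a fixed linear map is monotone under inclusion, we get $\operatorname{Proj}_{\vec{A}_\perp}(P)\subseteq\operatorname{Proj}_{\vec{A}_\perp}([-1,1]^d)$. The orthonormal change of coordinates $\vec{Q}^\T$ preserves volume, so the two descriptions of the projected volume agree. Combining this with Step 1 gives the stated lower bound. As a fallback that does not depend on Step 1's equality, I would use scaling instead: $[-1,1]^d=\sqrt d\,P$, so the projected volume scales by $d^{(d-m)/2}$. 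This alone yields the inequality provided Step 1 gives any volume lower bound within that factor of the box.

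\textbf{Main obstacle.} The crux is the ``fills the box'' claim in Step 1. A linear image of a cube is a zonotope, not in general an axis-aligned box, so the corners maximizing the different $\vec{q}_i$ need not coincide. I would first check whether the paper's intended $\vec{A}_\perp$ (or the subsequent use in Lemma~\ref{lemma:near_optimal_cells_bound}) only needs the bounding-box quantity, in which case the equality should be read as the volume of the bounding box. Otherwise I would try to choose $\vec{A}_\perp$ within its rotational freedom so that its rows align with principal directions of the projected zonotope.
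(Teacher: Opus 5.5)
\textbf{The gap.} Your Step~1 lower bound (``the projection fills the box'') cannot be completed, because the equality in the statement is false in general. Under your reading, $\operatorname{Proj}_{\vec{A}_\perp}(P_{\text{rot}})=\vec{A}_\perp P$. This is a zonotope whose bounding box in the $\vec{q}_{m+1},\dots,\vec{q}_d$ frame has volume $\prod_{i>m}2\vec{\alpha}'_{\max,i}$. A compact convex body has the volume of its bounding box only if it \emph{is} that box, and that happens only when every column of $\vec{A}_\perp$ has at most one nonzero entry.

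For a counterexample, take $d=3$, $m=1$, $\vec{A}=(1,1,1)/\sqrt3$. Then $\vec{A}_\perp P$ is a regular hexagon, which is not a rectangle in any orthonormal frame. With $\vec{q}_2=(1,-1,0)/\sqrt2$ and $\vec{q}_3=(1,1,-2)/\sqrt6$, the hexagon has area $4/\sqrt3$, while $4\vec{\alpha}'_{\max,2}\vec{\alpha}'_{\max,3}=16/(3\sqrt3)$. Moreover, $\vec{q}_2^\T\vec{x}=\vec{\alpha}'_{\max,2}$ forces $x_1=-x_2=1/\sqrt3$, and then $\vec{q}_3^\T\vec{x}\le\sqrt2/3<2\sqrt2/3=\vec{\alpha}'_{\max,3}$. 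So neither your sign-pattern/convexity argument nor re-choosing $\vec{A}_\perp$ can work.

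The paper's proof has the same hole, unacknowledged. It simply asserts that the corners of $P_{\text{rot}}$ are $\vec{A}^\T(\vec{\alpha}'_{\max}[1:m]\odot\vec{s}_1)+\vec{A}_\perp^\T(\vec{\alpha}'_{\max}[m+1:d]\odot\vec{s}_2)$, i.e.\ that the rotated cube is a box in the $\vec{Q}$-frame, which is exactly your unproven claim. Its second half is sound and plays the role of your Step~2: it shows $P_{\text{rot}}\subseteq[-1,1]^d$ via $\norm{\vec{Q}^\T\vec{x}}_\infty\le1$ and then uses monotonicity of projected volume. But this only transports the false equality.

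\textbf{What can be salvaged.} The inequality is true, and it is all that Lemma~\ref{lemma:near_optimal_cells_bound} uses. However, it needs a genuine lower bound on a projected volume, which neither your Step~1 nor your scaling fallback supplies. The missing ingredient is the zonotope volume formula combined with Cauchy--Binet. Write $\vec{A}_\perp[-1,1]^d=\sum_{l=1}^d[-\vec{a}_l,\vec{a}_l]$, where $\vec{a}_l$ are the columns of $\vec{A}_\perp$. Then
\[
\operatorname{Vol}(\vec{A}_\perp[-1,1]^d)=2^{d-m}\sum_{|S|=d-m}\bigl|\det(\vec{A}_\perp)_{:,S}\bigr|\ \ge\ 2^{d-m}\sum_{|S|=d-m}\det\bigl((\vec{A}_\perp)_{:,S}\bigr)^2=2^{d-m}\det(\vec{A}_\perp\vec{A}_\perp^\T)=2^{d-m}.
\]
The inequality step holds because the squared minors sum to $1$, so each $|\det|\le1$. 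On the other side, $\vec{\alpha}'_{\max,i}=\norm{\vec{q}_i}_1/\sqrt d\le\norm{\vec{q}_i}_2=1$, so $\prod_{i=m+1}^d2\vec{\alpha}'_{\max,i}\le2^{d-m}$. This proves the inequality directly, without passing through the box; in the hexagon example it reads $4\sqrt3\ge4\ge16/(3\sqrt3)$.
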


\begin{proof}

The projection of $P_{\text{rot}}$ onto the subspace orthogonal to $\vec{A}$ is determined by:
\[
\operatorname{Proj}_{\vec{A}_\perp}(P_{\text{rot}}) = \left\{ \vec{A}_\perp^\T \vec{y} : \vec{y} \in P_{\text{rot}} \right\}.
\]
Let us denote $\vec{\alpha}'_{\max}[1:m]$ and $\vec{\alpha}'_{\max}[m+1:d]$ as first $m$ components and last $d-m$ components of the vector $\vec{\alpha}'_{\max}$.
Since $P_{\text{rot}} = \{\vec{Q}^\T \vec{x} : \vec{x} \in P\}$, the corners of $P_{\text{rot}}$ are given by:
\[
\mathcal{C}_{\text{rot}} = \left\{ \vec{A}^\T (\vec{\alpha}'_{\max}[1:m] \odot \vec{s}_1) + \vec{A}_\perp^\T (\vec{\alpha}'_{\max}[m+1:d] \odot \vec{s}_2) : \vec{s}_1 \in \{-1/\sqrt{d}, 1/\sqrt{d}\}^m, \vec{s}_2 \in \{-1/\sqrt{d}, 1/\sqrt{d}\}^{d-m} \right\}.
\]
The volume of the projection onto the subspace orthogonal to $\vec{A}$ is:
\[
\operatorname{Vol}(\operatorname{Proj}_{\vec{A}_\perp}(P_{\text{rot}})) = \prod_{i=m+1}^{d} 2 \vec{\alpha}'_{\max, i}.
\]

Let $\vec{y} \in P_{\text{rot}}$. By definition, $\vec{y} = \vec{Q}^\T \vec{x}$ for some $\vec{x} \in P$. Then:
\[
\|\vec{y}\|_\infty = \|\vec{Q}^\T \vec{x}\|_\infty.
\]

Using the properties of the norm and orthonormality of $\vec{Q}$:
\[
\norm{\vec{y}}_{\infty} \le \norm{\vec{Q}^\T}_{\infty} \norm{\vec{x}}_{\infty} \le \sqrt{d} \norm{\vec{Q}^\T}_2 \norm{\vec{x}}_{\infty} \le 1
\]
Since $\vec{Q}$ is orthonormal, $\|\vec{Q}^\T\|_\infty = 1$. Additionally, $\|\vec{x}\|_\infty \leq 1/\sqrt{d}$ because $\vec{x} \in [-1/\sqrt{d}, 1/\sqrt{d}]^d$. Thus:
\[
\|\vec{y}\|_\infty \leq 1,
\]
showing that $\vec{y} \in [-1, 1]^d$. Hence, $P_{\text{rot}} \subseteq [-1, 1]^d$.

Since $P_{\text{rot}} \subseteq [-1, 1]^d$, the volume of the projection satisfies:
\[
\operatorname{Vol}(\operatorname{Proj}_{\vec{A}_\perp}(P_{\text{rot}})) \leq \operatorname{Vol}(\operatorname{Proj}_{\vec{A}_\perp}([-1, 1]^d)).
\]
Thus:
\[
\operatorname{Vol}(\operatorname{Proj}_{\vec{A}_\perp}([-1, 1]^d)) \geq \prod_{i=m+1}^{d} 2 \vec{\alpha}'_{\max, i}.
\]

\end{proof}

\subsection{Proof of Lemma~\ref{lemma:near_optimal_cells_bound}}\label{proof:near_optimal_cells_bound}

\begin{proof}
Recollect the definition of the partitioning scheme $ \mathcal{T} $ from Definition~\ref{def:A aligned partitioning}. $ \mathcal{T} $ represents the equivalent partitioning scheme of $ \mathcal{A} $. For the partitioning scheme $\mathcal{T}$, at a specific depth $h$, each cell can be indexed by $i$, i.e, $\mathcal{T}_{h,i} \quad 1 \le i \le 3^h$, where, $3^h$ represents the total number of cells at depth $h$, and $i$ is the index of a specific cell within that depth. Similarly, For the partitioning scheme $\mathcal{P}$, at a depth $ h$, each cell can be indexed by $j$, i.e, $\mathcal{P}_{h,j} \quad 1 \le j \le 3^{h}$.

\paragraph{Definition of the $POpt$ Relation}
Consider the following definition of the $POpt$ relation:
\begin{align*}
    POpt = \{(\mathcal{T}_{h,i}, \mathcal{P}_{h, j}) : & \ \mathcal{T}_{h,i} \in \mathcal{N}_{\mathcal{T}}(\epsilon), \ \mathcal{P}_{h,j} \in \mathcal{N}_{\mathcal{P}}(\epsilon),
     \mathcal{T}_{h,i} \cap \Vec{A}\mathcal{P}_{h,j} \neq \emptyset \}
\end{align*}
The sets $\mathcal{N}_{\mathcal{T}}(\epsilon)$ and $\mathcal{N}_{\mathcal{P}}(\epsilon)$ denote near-optimal cells in the respective partitioning schemes.

\begin{figure}[htb]
    \centering
    \includegraphics[width=0.34\linewidth]{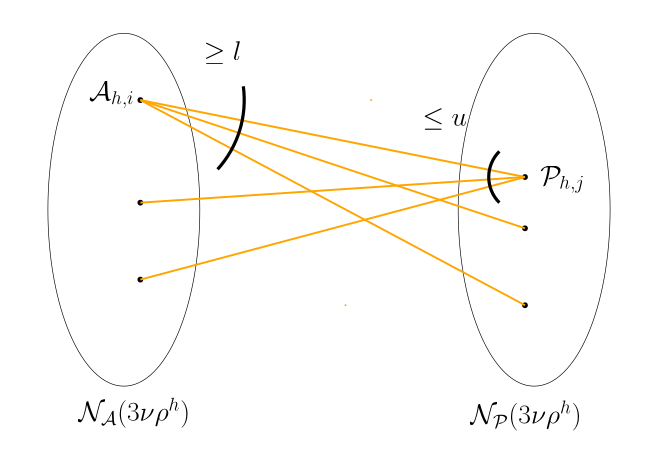}
    \caption{Caption}
    \label{fig:enter-label}
\end{figure}

For any $h,i$, consider the cell $\mathcal{T}_{h,i} \in \mathcal{N}_{\mathcal{T}}(\epsilon)$. Let $l$ be the lower bound on the number of elements in $POpt$ that are of the form $(\mathcal{T}_{h,i}, \cdot)$. Then, we have: $
|POpt| \geq |\mathcal{N}_{\mathcal{T}}(\epsilon)| \cdot l $. Similarly, for any $h,j$, consider the cell $\mathcal{P}_{h,j} \in \mathcal{N}_{\mathcal{P}}(3\nu\rho^{ h})$. Let $u$ be the upper bound on the number of elements in $POpt$ that are of the form $(\cdot, \mathcal{P}_{h,j})$. Then, we have: $|\mathcal{N}_{\mathcal{P}}(\epsilon)| \cdot u \geq |POpt|
$.
Combining these two inequalities, we get:
\begin{equation}\label{eq:int_near_optimal_cells}
|\mathcal{N}_{\mathcal{T}}(\epsilon)| \leq |\mathcal{N}_{\mathcal{P}}(\epsilon)| \cdot \frac{u}{l}
\end{equation}
This inequality provides a relationship between the near-optimal cells in the two partitioning schemes, taking into account the bounds on the number of elements in the $POpt$ relation.

\subsubsection{Estimating Upper Bound $u$}
If $\mathcal{P}_{h,j}$ is a near-optimal cell, then there is a $\vec{x} \in \mathcal{P}_{h,j}$ such that $f(\vec{x}) \geq f^* - \epsilon$. For a $\vec{y} = \mathbf{A}\vec{x}$ we get $g(\vec{y}) = f(\mathbf{A}^T\vec{y}) = f(\vec{x})$. If $\vec{y} \in \mathcal{T}_{h, i}$, then $\mathcal{T}_{h, i}$ is a near optimal cell and $y \in \mathcal{T}_{h, i} \cap \Vec{A}\mathcal{P}_{h,j}$. Thus the pair $(\mathcal{T}_{h, i}, \mathcal{P}_{h,j}) \in POpt$. To obtain the upper bound $u$ on the number of such pairs, suppose that all $\vec{x} \in \mathcal{P}_{h,j}$ satisfy $f(\vec{x}) \geq f^* - \epsilon$. Then the entire region $\Vec{A}\mathcal{P}_{h,j}$ is near-optimal, and we calculate the maximum number of distinct $\mathcal{T}_{h, i}$ that can be present in $POpt$.

We use Lemma~\ref{lemma:cells_tiling} to obtain an upper bound to $u.$ To apply the lemma, we first approximate the domain $\vec{A} \mathcal{P}_{h,j}$ with a hypercube, which we use as $\kappa$ and we obtain a lower bound on the side-lengths of $\mathcal{T}_{h,i}$ cells which will be used as a $\kappa'$. These approximations allows us to effectively utilize Lemma~\ref{lemma:cells_tiling} in our analysis.

Using  Proposition~\ref{prop:star_cells}, the cell $\mathcal{T}_{h,i}$ and $\mathcal{P}_{h,j}$ can be written as
\[
\mathcal{T}_{h,i} = \{\vec{\alpha} \in \mathbb{R}^m, \left\|\frac{\vec{\alpha} - \vec{\alpha}_{h,i}}{\vec{s}  }\right\|_{\infty} \le \alpha \quad \text{with} \quad \vec{s} = [3^{-\lfloor \frac{h+m-j}{m} \rfloor}]_{j=1}^m \}
\]
\[
\mathcal{P}_{h,i} = \{\Vec{x} \in \mathbb{R}^d, \left\|\frac{\Vec{x} - \Vec{x}_{h,i}}{\vec{c}}\right\|_{\infty} \le \kappa \quad \text{with} \quad \vec{c} = [3^{-\lfloor \frac{h+d-j}{d} \rfloor}]_{j=1}^d \}
\]
For all $\Vec{x}_1, \Vec{x}_2 \in \mathcal{P}_{h,i},$ consider $\norm{\Vec{A}\Vec{x}_1 - \Vec{A}\Vec{x}_2}_{\infty}$,  we have:
\begin{align*}
    & =\norm{\Vec{A}\Vec{x}_1 - \Vec{A}\Vec{x}_2}_{\infty}\\
    & \le \norm{\vec{A}}_{\infty} \norm{\vec{x}_1 - \vec{x}_2}_{\infty} \tag{from the Matrix operator norm definition} \\
    &\le \sqrt{m} \norm{\Vec{x}_1 - \Vec{x}_2}_{\infty} \tag{From Matrix Inequality \ref{matrix_inequalities2} and $\norm{\Vec{A}}_2 = 1$}  \\
    &= \sqrt{m} \norm{\Vec{x}_1 - \Vec{x}_2 + \Vec{x}_{h,i} - \Vec{x}_{h,i}}_{\infty} \\
    &\le \sqrt{m} \norm{\Vec{x}_1 - \Vec{x}_{h,i}}_{\infty} + \sqrt{m}  \norm{\Vec{x}_1 - \Vec{x}_{h,i}}_{\infty} \\
    &\le 2\sqrt{m}  3^{-k} \tag{Let $h = kd + i$}
\end{align*}
Now, we estimate the side lengths for the  $\mathcal{T}_{h,i}$ cell, by computing $\Vec{s}$
\begin{align*}
    \frac{h+m-j}{m} = \frac{kd +m + i -j}{m} \le \frac{kd + m +d}{m} \le  (k+1)\frac{d}{m} + 1
\end{align*}
Hence $3^{-\flr{\frac{h+m-j}{m}}}  \ge 3^{-(1 + (k+1)\frac{d}{m})}$.

We set $\kappa = 2\sqrt{m} 3^{-k}, \kappa' = \alpha 3^{-(1+(k+1)\frac{d}{m})}$ and invoke Lemma \ref{lemma:cells_tiling}, to derive the upper bound to $u$.
Thus,
\begin{align*}
    u &\le \left( 2 + \frac{2 \sqrt{m} 3^{-k}}{\alpha 3^{-1} 3^{-(k+1)\frac{d}{m}}} \right)^m =  \left( 2 + \frac{6 \sqrt{m} 3^{d/m} 3^{k(\frac{d-m}{m})}}{\alpha} \right)^m \\
    &\le \left( 2 + 6 \sqrt{m} 3^{d/m} 3^{k(\frac{d-m}{m})} \right)^m \le (12 \sqrt{m})^m 3^d 3^{k(d-m)}
\end{align*}

\subsubsection{Estimating Lower Bound $l$}

Recall the definition of $\vec{A}_{\perp}$ from Definition~\ref{def:subspace_distance_relations}. If $\mathcal{A}_{h, i}$ is a near-optimal cell, then there is a $\vec{x} \in \mathcal{A}_{h, i}$ s.t. $f(x) \geq f^* - 3 \nu \rho^h$. Consider $\vec{x'} = \vec{x} + \vec{A}_{\perp} \vec{y}$, where $\vec{y} \in \RR^{d-m}$. Then, $f(\vec{x'}) = g(\vec{Ax'})= g(\vec{Ax}) = f(\vec{x})$. Hence the entire domain $\{\vec{x} + \vec{A}_{\perp} \vec{y} \mid  \vec{y} \in \RR^{d-m}, \; \vec{x} + \vec{A}_{\perp} \vec{y} \in [-1,1]^d\}$ is a near optimal region. Suppose $\mathcal{P}_{h,j}$ has a not zero intersection with the above domain, then the cell $\mathcal{P}_{h,j}$ is a near-optimal cell. We thus count all these cells to get a lower bound $l$.

\begin{align*}
    n_2 &\ge \frac{\volume{\operatorname{Proj}_{\vec{A}_{\perp}}(\mathcal{P}_{0,0})}}{\volume{ \operatorname{Proj}_{\vec{A}_{\perp}} (\mathcal{P}_{h,0}})}  \\
    &\ge \frac{\prod_{i=m+1}^{d} 2(\vec{\alpha}'_{\max})_i }{\prod_{i=m+1}^d  2\max_{1\le j \le 2^d} \vec{q}_i^\T \vec{x}_j} \tag{Numerator is through Lemma~\ref{lm:proj_a_perp_vol}} \\
    &\ge \frac{\prod_{i=m+1}^d (1/\sqrt{d}) 2\max_{1\le j \le 2^d} \vec{q}_i^\T \vec{c}_j}{\prod_{i=m+1}^d  2\max_{1\le j \le 2^d} \vec{q}_i^\T \vec{x}_j} \tag{Here $\vec{c}_j$ are corners of hypercube $[-1,1]^d$} \\
    &\ge \frac{(1/\sqrt{d})^{d-m}\prod_{i=m+1}^d  \max_{1\le j \le 2^d} \vec{q}_i^\T \vec{c}_j}{\prod_{i=m+1}^d 3^{-k} \max_{1\le j \le 2^d} \vec{q}_i^\T \vec{c}_j} \\
    &\ge \frac{(1/\sqrt{d})^{d-m}}{3^{-k(d-m)}} \ge (1/d)^{d-m}3^{k(d-m)}.
\end{align*}
Hence, the lower bound $l = (1/d)^{d-m}3^{k(d-m)}$
Thus, using Equation~\ref{eq:int_near_optimal_cells}, we conclude that
\begin{equation}
    |\mathcal{N}_{\mathcal{A}}(\epsilon)| \leq |\mathcal{N}_{\mathcal{T}}(3\nu\rho^{ h})| \cdot 3^d d^{d-m}(12\sqrt{m})^m
\end{equation}

\end{proof}

\subsection{Proof of Lemma~\ref{lemma:near_optimality_dimension_pm_p2}}\label{apx:lemma:near_optimality_dimension_pm_p2}

\begin{proof}

To start, we recall the relationship between the parameters of the two partitioning schemes as established in Lemma \ref{lemma:nu_rho_relate_p1_pm}. Specifically, we have:
\[\rho_{\mathcal{A}} = \rho^{\beta}, \nu_{\mathcal{A}} = \max\{\nu, f^* - \inf_{\Vec{x} \in \kappa_1 \Sq^d} f(\Vec{x})\} \rho^{(1-\beta)(m-1) - \tilde{h}_1}\]
For brevity, we denote $\nu' =\max\{\nu, f^* - \inf_{\Vec{x} \in \kappa_1 \Sq^d} f(\Vec{x})\}$.

From the definiton of near-optimality dimension for the $\mathcal{P}$ partioning scheme, we have:
\begin{equation}
\label{eq:near_optimal_inequality2}
\mathcal{N}_{\mathcal{P}}(3\nu \rho^h) \le C \rho^{-\nearoptd_{\mathcal{P}} h}
\end{equation}

Now, consider $3 \nu_{\mathcal{A}} \rho^h_{\mathcal{A}}$:
\begin{align*}
    &= 3 \nu' \rho^{\beta h + (1-\beta)(m-1) - \tilde{h}_1}\\
    &= 3 \nu \rho^{\beta h + \log_{\rho}(\nu' \rho^{(1-\beta)(m-1)-\tilde{h}_1}) - \log_{\rho}(\nu)} \\
    &\le 3 \nu \rho^{h + \log_{\rho}(\nu' \rho^{(1-\beta)(m-1)-\tilde{h}_1}) - \log_{\rho}(\nu)} \\
    &\le 3 \nu \rho^{h + \left\lfloor\log_{\rho}(\nu' \rho^{(1-\beta)(m-1)-\tilde{h}_1}) - \log_{\rho}(\nu)\right\rfloor} \\
    &= 3 \nu \rho^{h - \tilde{h}_3} \tag{Denote $\tilde{h}_3 = -\left\lfloor\log_{\rho}(\nu' \rho^{(1-\beta)(m-1)-\tilde{h}_1}) - \log_{\rho}(\nu)\right\rfloor$}
\end{align*}
Now, consider
\begin{align*}
\label{eq:relating_near_opt_cells1}
\mathcal{N}_{\mathcal{A}}(3\nu_{\mathcal{A}} /\rho_{\mathcal{A}}^{-h}) &\le  \mathcal{N}_{\mathcal{A}}(3 \nu \rho^{h - \tilde{h}_3}) \tag{$3 \nu_{\mathcal{A}} \rho^h_{\mathcal{A}} \le 3 \nu \rho^{h - \tilde{h}_3}$} \\
& \le
    3^d d^{d-m} (12\sqrt{m})^m\mathcal{N}_{\mathcal{P}}(3 \nu \rho^{h - \tilde{h}_3}) \tag{Using Lemma \ref{lemma:near_optimal_cells_bound} } \\
    &\le 3^d d^{d-m} (12\sqrt{m})^m C \rho^{-\nearoptd_{\mathcal{P}} (h-\tilde{h}_3)} \tag{Using Inequality \ref{eq:near_optimal_inequality2}} \\
&= 3^d d^{d-m} (12\sqrt{m})^m C \rho^{\nearoptd_{\mathcal{P}} \tilde{h}_3}\rho^{-\nearoptd_{\mathcal{P}} h}
\end{align*}

Therefore, we have:
\[\mathcal{N}_{\mathcal{A}}(3\nu_{\mathcal{A}} /\rho_{\mathcal{A}}^{-h})  \le 3^d d^{d-m} (12\sqrt{m})^m C \rho^{\nearoptd_{\mathcal{P}} \tilde{h}_3}\rho^{-\nearoptd_{\mathcal{P}} h}  \quad \forall h \ge \tilde{h}_3  \]

For heights $h \in [0:\tilde{h}_3 - 1]$, since the right-hand side quantity is monotonically increasing, we can use the value of the right-hand side at depth $h = \tilde{h}_3$ in the above expression, which is $3^d d^{d-m} (12\sqrt{m})^m C$.

Therefore, we have
\begin{align*}
\mathcal{N}_{\mathcal{A}}(3\nu_{\mathcal{A}} /\rho_{\mathcal{A}}^{-h})  \le 3^d d^{d-m} (12\sqrt{m})^m C \rho^{-\nearoptd_{\mathcal{P}} \tilde{h}_3} \rho^{-\nearoptd_{\mathcal{P}} h} = 3^d d^{d-m} (12\sqrt{m})^m C \rho^{-\nearoptd_{\mathcal{P}} \tilde{h}_3} \rho_{\mathcal{A}}^{-\frac{\nearoptd_{\mathcal{P}}}{\beta} h}
\end{align*}

This implies that $\nearoptd_{\mathcal{A}} \le \nearoptd_{\mathcal{P}}/\beta$, and the constant  $C_{\mathcal{A}}$ is given by $3^d d^{d-m} (12\sqrt{m})^m C \rho^{-\nearoptd_{\mathcal{P}} \tilde{h}_3}$.
\end{proof}

\subsection{Proof of Lemma~\ref{lemma:a_hat_expansion_factor}}\label{apx:lemma:a_hat_expansion_factor}

\begin{proof}
 Consider $\Vec{z}^* = (\Vec{A}\hat{\Vec{A}}^\T)^{-1} \Vec{Ax^*}$, then
\begin{align*}
f(\hat{\Vec{A}}^\T \Vec{z}^*) &= g(\Vec{A}\hat{\Vec{A}}^\T \Vec{z}^*)
= g(\Vec{A}\hat{\Vec{A}}^\T (\Vec{A}\hat{\Vec{A}}^\T)^{-1} \Vec{Ax^*})  \\
&= g(\Vec{Ax^*})
= f(\Vec{x^*}).
\end{align*}
Now, we show that
\begin{align*}
 \norm{\Vec{z}^*}_{\infty} & =\norm{(\Vec{A}\hat{\Vec{A}}^\T)^{-1}\Vec{Ax^*}}_{\infty}   \\
 &\le \norm{(\Vec{A}\hat{\Vec{A}}^\T)^{-1}\Vec{Ax^*}}_{2}  \tag{Vector Norm Inequality}\\
 &\le \norm{(\Vec{A}\hat{\Vec{A}}^\T)^{-1}}_{2} \norm{\Vec{Ax^*}}_{2} \tag{Operator Norm Definition}\\
 & = \frac{\norm{(\Vec{A}\hat{\Vec{A}}^\T)^{-1}}_{2} \sigma_{\min}(\vec{A}\hat{\vec{A}}^\T)}{\sigma_{\min}(\vec{A}\hat{\vec{A}}^\T)} \norm{\Vec{Ax^*}}_{2} \tag{Since, $\sigma_{\min}(\vec{A}\hat{\vec{A}}^\T) >0$} \\
 &\le \frac{\norm{(\Vec{A}\hat{\Vec{A}}^\T)^{-1} \Vec{A}\hat{\Vec{A}}^\T}_2 }{\sigma_{\min}(\vec{A}\hat{\vec{A}}^\T)} \norm{\Vec{Ax^*}}_2  \tag{From Matrix Inequality \ref{matrix_inequalities}}\\
 &\le \frac{\norm{\Vec{Ax^*}}_2}{1-\dist^2(\Vec{A},\hat{\Vec{A}})}\tag{From the Equation \ref{eq:sima_min_A_hatA} and $\norm{\vec{I}}_2 = 1$}\\
 &\le \frac{\sqrt{m}\norm{\Vec{Ax^*}}_{\infty}}{1-\dist^2(\Vec{A},\hat{\Vec{A}})}  \tag{Vector Norm Inequality} \\
 &\le \frac{\sqrt{m}\alpha}{1-\dist^2(\Vec{A},\hat{\Vec{A}})}.
\end{align*}
\end{proof}

For some of the proofs, we use the following equivalence of partitioning schemes.
\subsection{Equivalence of partitioning schemes}\label{equ:partioning_schemes}
Consider the partitioning schemes $\mathcal{A}$ and $\mathcal{T}$ which are defined in Definition~\ref{def:A aligned partitioning}. For every $\Vec{\alpha} \in \mathcal{T}_{h,i}$, there exists a unique $\Vec{x} \in \mathcal{A}_{h,i}$ such that $\Vec{x} = \Vec{A}^\T \Vec{\alpha}$. This establishes an equivalence:

\begin{equation}
\forall \Vec{x} \in \mathcal{A}_{h,i}, \quad f(\vec{x}) = f(\vec{A}^\top \vec{\alpha}) = g(\vec{A} \vec{A}^\top \vec{\alpha}) = g(\vec{\alpha})
\end{equation}

where $f$ is optimized on $\mathcal{A}_{h,i}$ and $g$ on $\mathcal{T}_{h,i}$. Thus, optimizing $f$ over $\mathcal{A}$ is equivalent to optimizing $g$ over $\mathcal{T}$.

Similarly, for the partitioning schemes $\hat{\mathcal{A}}$ and $\hat{\mathcal{T}}$, we have: $\forall \Vec{\alpha} \in \hat{\mathcal{T}}_{h,i}, \exists \Vec{x} \in \hat{\mathcal{A}}_{h,i}$ such that $\vec{x} = \hat{\Vec{A}}^\top \Vec{\alpha}$. This leads to:
\begin{equation}
\forall \Vec{x} \in \hat{\mathcal{A}}_{h,i}, \quad f(\Vec{x}) = f(\hat{\Vec{A}}^\top \Vec{\alpha}) = g(\Vec{A} \hat{\Vec{A}}^\top \Vec{\alpha}) \stackrel{\text{def}}{=} \hat{g}(\Vec{\alpha})
\end{equation}
where $g$ is defined on $\mathcal{T}_{h,i}$ and $\hat{g}$ on $\hat{\mathcal{T}}_{h,i}$. Therefore, optimizing $f$ over $\hat{\mathcal{A}}$ is equivalent to optimizing $\hat{g}$ over $\hat{\mathcal{T}}$.

\subsection{Proof of Lemma~\ref{lemma:estimated_subspace_is_controlled}}
\begin{proof}

From the notation section~\ref{sec:notation}, we have $\Vec{W}_{\perp} = \Vec{W} - \Vec{WA^{\T}A}$.
Next, consider the singular value decomposition (SVD) of $\Vec{W}$,  given by
$\Vec{W} = \Vec{U}\Vec{S}\Vec{V}^\T$, where $\Vec{U} \in \mathbb{R}^{p \times p}$ and $\vec{V} \in \RR^{d \times d}$ are orthogonal matrices, satisfying
$\vec{U}\vec{U}^\T =\vec{U}^\T\vec{U}=\vec{I}_p, \vec{V}\vec{V}^\T =\vec{V}^\T\vec{V}=\vec{I}_d$
and $\vec{S} \in \RR^{p \times d}$ is a diagonal matrix.

We have this relation $\norm{\vec{W}}_2 = \sigma_1(\vec{W}) \ge \cdots \ge \sigma_r(\vec{W}) \ge 0, \quad r = \min\{p,d\}$.
From the Lemma assumption, we have $\mathrm{rank}(\Vec{W}) =r \ge m$. Thus, we collect the respective leading $m$ columns of $\vec{U}$ and $\vec{V}$, denoted as $\vec{U}_1 \in \RR^{p \times m}$ and $\vec{V}_1 \in \RR^{d \times m}$, respectively. The remaining columns are denote by $\vec{U}_2 \in \RR^{p \times p-m}$ and $\vec{V}_2 \in \RR^{d \times d-m}$. Since, columns of $\vec{U}_1$ and $\vec{U}_2$ forms a orthonormal basis for $\RR^{p}$, we have:
\begin{equation}\label{eq:U1_U2_relate}
\vec{U}_1^\T \vec{U}_1 = \vec{I}_m \quad \text{and} \quad \vec{U}_1^\T \vec{U}_2 = \vec{0}_{m \times p-m} .
\end{equation}
Then the SVD of $\vec{W}$ can be written as:
$\vec{W} = \vec{U}_1\vec{S}_1\vec{V}_1^\T + \vec{U}_2\vec{S}_2\vec{V}_2^\T .$
and from the lemma hypothesis $ \vec{\hat{A}} = \vec{V}_1^\T $. Now, consider the implications of this decomposition for the proof. Consider,

\begin{align*}
&=\norm{\vec{W}(\vec{I} -\vec{A}^\T \vec{A})}_F \\
&\ge \norm{\vec{W}(\vec{I} -\vec{A}^\T \vec{A})}_2 \tag{From Matrix Norm Inequality \ref{matrix_inequalities2}}\\
    &= \norm{\vec{U}_1\vec{S}_1\vec{V}_1^\T(\vec{I} - \vec{A}^\T\vec{A}) + \vec{U}_2\vec{S}_2\vec{V}_2^\T (\vec{I} - \vec{A}^\T\vec{A})}_2 \tag{SVD of $\vec{W}$} \\
    &= \norm{\Vec{U}^\T_1}_2\norm{\vec{U}_1\vec{S}_1\vec{V}_1^\T(\vec{I} - \vec{A}^\T\vec{A}) + \vec{U}_2\vec{S}_2\vec{V}_2^\T (\vec{I} - \vec{A}^\T\vec{A})}_2 \tag{$\norm{\Vec{U}^\T_1}_2 = 1$} \\
    &\ge \norm{\vec{U}_1^\T (\vec{U}_1\vec{S}_1\vec{V}_1^\T(\vec{I} - \vec{A}^\T\vec{A}) + \vec{U}_2\vec{S}_2\vec{V}_2^\T (\vec{I} - \vec{A}^\T\vec{A}))}_2 \tag{From Matrix Norm Inequality \ref{matrix_inequalities}} \\
    &= \norm{\vec{S}_1\vec{V}_1^\T(\vec{I} - \vec{A}^\T\vec{A})}_2  \tag{Using Equation \ref{eq:U1_U2_relate}}\\
    &\ge \sigma_{min}(\vec{S}_1)\norm{\vec{V}_1^\T(\vec{I} - \vec{A}^\T\vec{A})}_2  \tag{From Matrix Norm Inequality \ref{matrix_inequalities}}\\
    &= \sigma_{m}\norm{\vec{\hat{A}}(\vec{I} - \vec{A}^\T\vec{A})}_2 \\
    &=\sigma_{m}\norm{\vec{\hat{A}}\vec{A}^\T_{\perp}\vec{A}_{\perp}}_2  \tag{$\left[\Vec{A}^\T, \Vec{A}^\T_{\perp}\right]$ is a  orthonormal matrix, thus $\Vec{A}^\T \Vec{A} + \Vec{A}^\T_{\perp} \Vec{A}_{\perp} = \Vec{I}_d$ }\\
    &= \sigma_{m} \norm{\vec{\hat{A}}\vec{A}^\T_{\perp}\vec{A}_{\perp}}_2 \norm{\Vec{A}^\T_{\perp}}_2  \tag{$ \norm{\Vec{A}^\T_{\perp}}_2 =1$} \\
    &\ge\sigma_{m} \norm{\vec{\hat{A}}\vec{A}^\T_{\perp}\vec{A}_{\perp} \vec{A}^\T_{\perp}}_2 \tag{From Matrix Norm Inequality \ref{matrix_inequalities}} \\
    &= \sigma_{m} \norm{\vec{\hat{A}}\vec{A}^\T_{\perp}}_2  \tag{$\Vec{A}_{\perp}$ consists of orthonormal rows}\\
    &= \sigma_{m} \norm{\sin{\Theta}(\vec{A},\hat{\vec{A}})}_2 \tag{From Definition \ref{def:subspace_distance_relations}}
\end{align*}

Since $\sigma_m > 0,$ we have

\[
\norm{\sin{\Theta}(\vec{A},\hat{\vec{A}})}_2 \le \frac{\norm{\vec{W}(\vec{I} -\vec{A}^\T \vec{A})}_2 }{\sigma_m} \le \frac{\norm{\vec{W}(\vec{I} -\vec{A}^\T \vec{A})}_F }{\sigma_m}.
\]

\end{proof}

\subsection{Proof of Lemma~\ref{lemma:pm_1_pm_star_cell}}

\begin{proof}

Using the partitioning scheme equivalence from Section~\ref{equ:partioning_schemes}, we can work with partitioning schemes $\mathcal{T}$ and $\hat{\mathcal{T}}$ in place of the $\mathcal{A}$ and $\hat{\mathcal{A}}$ partitioning schemes.

Let $\vec{z}_h, \hat{\vec{z}}_h$ be the representatives of $\mathcal{T}^*_{h}$ and $\hat{\mathcal{T}}^*_{h}$ cells respectively. Then
\[\mathcal{T}^*_{h}   = \left\{\vec{z} \in \RR^m, \norm{\frac{\vec{z}-\vec{z}_h}{\vec{s}}}_{\infty} \le \alpha \quad \text{with} \quad \vec{s} = [3^{-\flr{\frac{h+m-i}{m}}}]_{i=1}^m\right\}\]
and \[
\hat{\mathcal{T}}^*_{h} = \left\{\vec{z} \in \RR^m, \norm{\frac{\vec{z}-\hat{\vec{z}}_h}{\vec{s}}}_{\infty} \le  \frac{ \sqrt{m}\alpha}{\sqrt{1-\dist^2(\vec{A}, \hat{\vec{A}})}}  \quad \text{with} \quad \vec{s} = [3^{-\flr{\frac{h+m-i}{m}}}]_{i=1}^m\right\}\]

For brevity, denote $\kappa_2 = \frac{ \sqrt{m}}{\sqrt{1-\dist^2(\vec{A}, \hat{\vec{A}})}}$.
Let $\Vec{z}^*$ denote a maximizer of the function $g$, i.e., $\vec{z}^*  \in \argmax_{\vec{z} \in \alpha \Sq^m} g(\vec{z})  $. Given that $\hat{g}(\Vec{z}) = g(\Vec{A}\hat{\vec{A}}^\T \Vec{z})$ and that $\Vec{A}\hat{\Vec{A}}^\T$ an invertible matrix, we can identify a corresponding maximizer $\vec{z}^*_{\hat{g}}$ for $\hat{g}$ such that: $ \vec{z}^*_{g} = \vec{A}\hat{\vec{A}}^\T \vec{z}^*_{\hat{g}}   $.
Under this transformation, $\hat{g}(\vec{z}^*_{\hat{g}})$ will be a maximizer of the function $\hat{g}$.

From lemma assumption, $\mathcal{T}$ satisfies Assumption \ref{asp:sequool}, therefore we have:
\begin{align}
\label{eq:pm1_pm_1}
\forall h \in \mathbb{N}_0,  \sup_{\Vec{z} \in \mathcal{T}^*_{h}} (g^* - g(\Vec{z})) \le  \nu_{\mathcal{A}}\rho_{\mathcal{A}}^h
\end{align}
We aim to control
\begin{align}
\label{eq:pm1_pm_2}
\forall h \in \mathbb{N}_0, \sup_{\Vec{z} \in \hat{\mathcal{T}}^*_{h}} (g^* - \hat{g}(\Vec{z}) )
 = \sup_{\Vec{z} \in \hat{\mathcal{T}}^*_{h}} (g^* - g(\Vec{A}\hat{\Vec{A}}^\T\Vec{z}) )
\end{align}

We choose some height $h$ and represent it as $h = km + i$. Let us denote:
\[
R_h = \{\vec{A}\hat{\vec{A}}^\T \vec{z} : \vec{z} \in \hat{\mathcal{T}}^*_{h} \}.
\]
Now, we show that:
\[
R_{km} \subseteq \istar{T}{km - mk'},
\]
where $k' = \cel{\log_3{\frac{2 \sqrt{m} \kappa_2}{\kappa_2-1}}}$.

Consider a point $\vec{A}\hat{\vec{A}}^\T \vec{z}$ from the set $R_{km}$, where $\vec{z} \in \hat{\mathcal{T}}^*_{km}$. By definition, $\vec{z}$ satisfies the following condition:
\begin{equation}\label{inequality:lemma24}
    \|\vec{z} - \hat{\vec{z}}_{km}\|_\infty \leq 3^{-k} \kappa_2 \alpha.
\end{equation}

Now, we show that $\vec{A}\hat{\vec{A}}^\T \vec{z} \in \istar{T}{km - mk'}$. We begin with the following inequality:
\begin{align*}
 &=\norm{\vec{z}_{(k-k')m} - \vec{A}\hat{\vec{A}}^\T \Vec{z}}_{\infty} \\
 &= \norm{\Vec{z}_{(k-k')m} - \Vec{z}_g^* + \Vec{A} \hat{\Vec{A}}^\T \Vec{z}^*_{\hat{g}} - \vec{A}\hat{\vec{A}}^\T \Vec{z}}_{\infty} \tag{Using $\vec{z}^*_{g} = \vec{A}\hat{\vec{A}}^\T \vec{z}^*_{\hat{g}}   $ } \\
    &\le \norm{\Vec{z}_{(k-k')m} - \Vec{z}^*_{g}}_{\infty} + \norm{\Vec{A}\hat{\Vec{A}}^\T (\Vec{z}^*_{\hat{g}} - \Vec{z})}_{\infty} \tag{Triangle Inequality of Norms} \\
    &\le 3^{-(k-k')} \alpha + \norm{\Vec{A}\hat{\Vec{A}}^\T (\Vec{z}^*_{\hat{g}} - \Vec{z})}_{\infty}   \tag{$\Vec{z}^*_{g} \in \mathcal{T}^*_{km}$ cell. Thus $\norm{ \vec{z}_{km} -  \vec{z}^*}_{\infty} \le 3^{-k} \alpha$} \\
        &\le 3^{-(k-k')} \alpha + \norm{\Vec{A}\hat{\Vec{A}}^\T}_{\infty}\norm{ \Vec{z}^*_{\hat{g}} - \Vec{z}}_{\infty}   \tag{Operator Norm Definition} \\
        &\le 3^{-(k-k')} \alpha + \sqrt{m}\norm{ \Vec{z}^*_{\hat{g}} - \Vec{z}}_{\infty}   \tag{From Matrix Inequality \ref{matrix_inequalities2}, \ref{matrix_inequalities} and $\norm{\Vec{A}}_2 = \norm{\hat{\Vec{A}}}_2 = 1$} \\
    &\le 3^{-(k-k')} \alpha + \sqrt{m} \norm{-\hat{\Vec{z}}_{km} + \Vec{z}^*_{\hat{g}}}_{\infty} + \sqrt{m} \norm{\hat{\Vec{z}}_{km} - \Vec{z}}_{\infty}  \\
    &\le 3^{-(k-k')} \alpha + \sqrt{m} 3^{-k} 2 \kappa_2  \alpha\tag{using Inequality \ref{inequality:lemma24} and $\Vec{z}^*_{\hat{g}} \in \hat{\mathcal{T}^*}_{km}$}\\
    & \le 3^{-(k-k')} \alpha +  3^{-(k-k')} \alpha (\kappa_2-1)  \tag{Suppose $k'$ is chosen such that $2\sqrt{m} \kappa_2 \le (\kappa_2-1)3^{k'}$} \\
    &= \alpha \kappa_2 3^{-(k-k')}
\end{align*}

From the above inequality, we can conclude:
\begin{align*}
 R_{km} \subseteq \istar{T}{km-m\cel{\log_3{\frac{2 \sqrt{m} \kappa_2}{\kappa_2-1}}}}
\end{align*}

Using the above set containment, for any height $h= km+i$, we have
\begin{align*}
    R_{km+i} \subseteq R_{km} \subseteq \istar{T}{km-m\cel{\log_3{\frac{2 \sqrt{m} \kappa_2}{\kappa_2-1}}}} \subseteq \istar{T}{km-m\cel{\log_3{\frac{2 \sqrt{m} \kappa_2}{\kappa_2-1}}}+ i-m}
\end{align*}
First and the last set containment are valid from the round robin paritioning scheme.
Substituting, $h = km+i$, we get:\[
R_h \subseteq \istar{T}{h-m\cel{\log_3{\frac{2 \sqrt{m} \kappa_2}{\kappa_2-1}}}-m} \]

Define: $
\tilde{h}_2 = m\cel{\log_3{\frac{2 \sqrt{m} \kappa_2}{\kappa_2-1}}}+m  $. Using the Inequalities~\ref{eq:pm1_pm_1}, \ref{eq:pm1_pm_2} and the above set containment, we conclude:
\begin{equation}
\label{eq:pm1_pm_result1}
\sup_{\vec{z} \in \hat{\mathcal{T}}^*_{h}} (g^* - \hat{g}(\vec{z})) \le \nu_{\mathcal{A}} \rho_{\mathcal{A}}^{h - \tilde{h}_2} \quad \forall h \ge \tilde{h}_2
\end{equation}

For height $h \in [0:\tilde{h}_2-1]$, we know that: \[\inf_{\Vec{z} \in \hat{\mathcal{T}}^*_{h}} \hat{g}(\Vec{z}) \ge \inf_{\Vec{z} \in \hat{\mathcal{T}}^*_{0}} \hat{g}(\Vec{z})\]
which implies:
\begin{align*}
    \sup_{\Vec{z} \in \hat{\mathcal{T}}^*_{h} } (g^* - \hat{g}(\Vec{z})) &\le g^* - \inf_{\Vec{z} \in \hat{\mathcal{T}}^*_{0}} \hat{g}(\Vec{z})
  \le (g^* - \inf_{\Vec{z} \in \hat{\mathcal{T}}^*_{0}} \hat{g}(\Vec{z}) )\rho_{\mathcal{A}}^{-\tilde{h}_2} \rho_{\mathcal{A}}^h
\end{align*}

Combining \ref{eq:pm1_pm_result1} and the above inequality, we conclude that $\hat{\mathcal{T}}$ satisfies Assumption~\ref{asp:sequool} with parameters:
\[ (\rho_{\mathcal{A}}, \max\{\nu_{\mathcal{A}}, g^* - \inf_{\Vec{z} \in \kappa_2 \alpha \Sq^m } g(\vec{A}\hat{\Vec{A}}\Vec{z})\} \rho_{\mathcal{A}}^{-\tilde{h}_2})\]
\end{proof}

\subsection{Proof of Proposition~\ref{propn:ridge_fun_projection}}
\begin{proof}

Let $
\vec{A} = [\vec{w}_1, \vec{w}_2, \vec{w}_3, \ldots, \vec{w}_p] \in \RR^{d \times p}$, where we assume without loss of generality that the vectors $\{\Vec{w}_1, \Vec{w}_2, \cdots, \Vec{w}_p\}$ are linearly independent. If the vectors were dependent, we could consider only the independent vectors without changing the $\spn\{\vec{w}_1, \vec{w}_2, \ldots, \vec{w}_p\}$. The orthogonal projection matrix $\Vec{P}$ onto $\spn\{\vec{w}_1, \vec{w}_2, \ldots, \vec{w}_p\}$ is given by $\Vec{P} = \Vec{A} (\Vec{A}^\T \Vec{A})^{-1} \Vec{A}^\T $.

For all $\Vec{x} \in \RR^d$, consider $(\Vec{w}^T_i \Vec{P}) \Vec{x} = (\Vec{P}^\T \Vec{w}_i)^\T \Vec{x} = (\Vec{P} \Vec{w}_i)^\T \Vec{x} = \Vec{w}_i^\T \Vec{x}$, where the equalities hold due to the following: First, $\Vec{P}^\T = \Vec{P}$ because $\vec{P}$ is a projection matrix and thus symmetric. Second, $\Vec{P}\Vec{w}_i = \Vec{w}_i$ since $\Vec{w}_i$ is in the column span of $\Vec{A}$, and $\Vec{P}$ projects onto this span.
Therefore, we can express $f(\vec{x})$ as
\begin{equation}\label{eq:ridge_perp_direction}
f(\vec{x}) = \sum_{i=1}^p v_i\sigma(\vec{w}_i^\T \vec{x} + b_i) = \sum_{i=1}^p v_i\sigma(\vec{w}_i^\T\vec{P}\vec{x} + b_i).
\end{equation}
Since $(\vec{x}-\vec{x'}) \perp \spn\{\vec{w}_1, \vec{w}_2, \ldots, \vec{w}_p\}, \vec{Px} = \vec{Px'}$ and therefore using \cref{eq:ridge_perp_direction}, $f(\vec{x}) = f(\vec{x'}).$
\end{proof}

\subsection{Proof of Lemma~\ref{lemma:a_a_hear_near_optimal_cells}}\label{apx:lemma:a_a_hear_near_optimal_cells}

\begin{proof}

Using the partitioning scheme equivalence from Section~\ref{equ:partioning_schemes}, we can work with partitioning schemes $\mathcal{T}$ and $\hat{\mathcal{T}}$ in place of the $\mathcal{A}$ and $\hat{\mathcal{A}}$ partitioning schemes.

An arbitrary cell $\mathcal{T}_{h,i}$ in $\mathcal{T}$ is defined as:
\begin{equation}
\mathcal{T}_{h,i}   = \left\{\vec{z}: \vec{z} \in \RR^m, \norm{\frac{\vec{z}-\vec{z}_{h,i}}{\vec{s}}}_{\infty} \le \alpha \quad \text{with} \quad \vec{s} = [3^{-\flr{\frac{h+m-i}{m}}}]_{i=1}^m\right\},
\end{equation}
and an arbitrary cell $\hat{\mathcal{T}}_{h,i}$ in $\hat{\mathcal{T}}$ is defined as:
\begin{equation}
\hat{\mathcal{T}}_{h,i} = \left\{\vec{z}: \vec{z} \in \RR^m, \norm{\frac{\vec{z}-\tilde{\vec{z}}_{h,i}}{\vec{s}}}_{\infty} \le  \frac{ \alpha\sqrt{m}}{\sqrt{1-\dist^2(\vec{A}, \hat{\vec{A}})}}  \quad \text{with} \quad \vec{s} = [3^{-\flr{\frac{h+m-i}{m}}}]_{i=1}^m\right \}.
\end{equation}

We optimize the function $\hat{g}(\vec{z}) = g(\vec{A}\hat{\vec{A}}^\top \vec{z})$ over $\hat{\mathcal{T}}$, while optimizing $g(\vec{z})$ over $\mathcal{T}$. For a near-optimal cell $\mathcal{T}_{h,i} \in \mathcal{N}_{\mathcal{T}}(3 \nu_{\mathcal{T}} \rho_{\mathcal{T}}^h)$, the following holds:
\begin{align}\label{near-optimal-t_partioning-scheme}
    \sup_{\vec{z} \in \mathcal{T}_{h,i}} g(\vec{z}) = \sup_{\vec{z} \in (\vec{A}\hat{\vec{A}}^\top)^{-1} \mathcal{T}_{h,i}} g(\vec{A}\hat{\vec{A}}^\top \vec{z}) \geq g^* - 3 \nu_{\mathcal{T}} \rho_{\mathcal{T}}^h,
\end{align}
where the inequality holds because $\mathcal{T}_{h,i}$ is a near-optimal cell by assumption.

We aim to count the number of cells in $\hat{\mathcal{T}}$ that satisfy the following near-optimality cell condition:
\begin{align}
    \sup_{\vec{z} \in \hat{\mathcal{T}}_{h,i}} g(\vec{A}\hat{\vec{A}}^\top \vec{z}) \geq g^* - 3 \nu_{\hat{\mathcal{T}}} \rho_{\hat{\mathcal{T}}}^h.
\end{align}
Since, $\nu_{\hat{\mathcal{T}}} \rho_{\hat{\mathcal{T}}}^h > \nu_{{\mathcal{T}}} \rho_{{\mathcal{T}}}^h$, the above condition is satisfied by

\begin{align}\label{near-optimal-that_partioning-scheme}
    \sup_{\vec{z} \in \hat{\mathcal{T}}_{h,i}} g(\vec{A}\hat{\vec{A}}^\top \vec{z}) \geq g^* - 3 \nu_{{\mathcal{T}}} \rho_{{\mathcal{T}}}^h.
\end{align}
Using relations \eqref{near-optimal-t_partioning-scheme} and \eqref{near-optimal-that_partioning-scheme}, we observe that since the function $g(\vec{A}\hat{\vec{A}}^\top \vec{z})$ is identical in both cases, it suffices to work with the domain of the cells. Define:
\[
B = \{(\vec{A}\hat{\vec{A}}^\top)^{-1} \vec{z} : \vec{z} \in \mathcal{T}_{h,i}\}.
\]

To obtain an upper bound to the number of near-optimal cells in the partitioning scheme $\hat{\mathcal{T}}$, we consider every cell in $\hat{\mathcal{T}}$ ($\hat{\mathcal{T}}_{h,i}$) that intersects with $B$ as potentially near-optimal. For simplicity, we enlarge the domain $B$ into a hypercube in $\mathbb{R}^m$.

The set $B$ is the transformation of the cell $\mathcal{T}_{h,i}$ under $(\vec{A}\hat{\vec{A}}^\top)^{-1}$. Since the transformation is linear and invertible, $B$ remains a bounded region. However, to simplify analysis, we enclose $B$ in a hypercube. By ensuring that the hypercube fully contains $B$, any cell $\hat{\mathcal{T}}_{h,i}$ that intersects this hypercube is considered a candidate near-optimal cell.

$\forall \Vec{z}_1, \vec{z}_2 \in \mathcal{T}_{h,i}$, consider,
\begin{align*}
    & \norm{(\vec{A}\hat{\vec{A}}^\T)^{-1}\vec{z}_1 - (\vec{A}\hat{\vec{A}}^\T)^{-1}\vec{z}_1}_{\infty} \\
    &\le \norm{(\vec{A}\hat{\vec{A}}^\T)^{-1}}_{\infty} \norm{\vec{z}_1 - \vec{z}_2}_{\infty}  \tag{From the Matrix Operator Norm definition}\\
    &\le \sqrt{m} \norm{(\vec{A}\hat{\vec{A}}^\T)^{-1}}_{2} \norm{\vec{z}_1 - \vec{z}_2}_{\infty} \tag{From the Matrix Inequality $\ref{matrix_inequalities2}$}\\
    &\le \frac{\sqrt{m} \norm{(\vec{A}\hat{\vec{A}}^\T)^{-1}}_{2} \norm{\vec{z}_1 - \vec{z}_2}_{\infty} \sigma_{\min}(\Vec{A}\hat{\vec{A}}^\T)}{\sigma_{\min}(\Vec{A}\hat{\vec{A}}^\T)} \tag{Since, $\sigma_{\min}(\Vec{A}\hat{\Vec{A}}) >0$}\\
    & \le \sqrt{m} \frac{\norm{(\vec{A}\hat{\vec{A}}^\T)^{-1} \vec{A}\hat{\vec{A}}^\T}_2 \norm{\vec{z}_1 - \vec{z}_2}_{\infty}}{\sigma_{\min}(\Vec{A}\hat{\vec{A}}^\T)}  \tag{From the Matrix Inequality \ref{matrix_inequalities}}\\
    &= \frac{\sqrt{m} \norm{\vec{z}_1 - \vec{z}_2}_{\infty}}{\sigma_{\min}(\Vec{A}\hat{\vec{A}}^\T)} \tag{$\norm{\Vec{I}}_2 = 1$}\\
    &\stackrel{(1)}{\le} \frac{\sqrt{m}}{\sigma_{\min}(\vec{A}\hat{\vec{A}}^\T)} 2\alpha3^{-k} \\
    &= 2\alpha3^{-k} \frac{\sqrt{m}}{\sqrt{1-\dist^2(\vec{A},\hat{\vec{A}})}} \tag{From the Equation \ref{eq:sima_min_A_hatA}}
\end{align*}
$(1)$ is true from the following inequality,

\begin{align*}
\norm{\Vec{z}_1 - \Vec{z}_2}_{\infty} &= \norm{\Vec{s}\left(\frac{\Vec{z}_1 - \Vec{z}_{h,i} + \Vec{z}_{h,i} - \Vec{z}_2 }{\vec{s}}\right)}_{\infty} = \norm{\diag{(\Vec{s})}\left(\frac{\Vec{z}_1 - \Vec{z}_{h,i} + \Vec{z}_{h,i} - \Vec{z}_2 }{\vec{s}}\right)}_{\infty} \\
&\le \norm{\diag{(\vec{s})}}_{\infty} \norm{\left(\frac{\Vec{z}_1 - \Vec{z}_{h,i} +  \Vec{z}_{h,i} - \Vec{z}_2 }{\vec{s}}\right)}_{\infty} \le 3^{-k} 2 \alpha\\
\end{align*}

Hence: \[B \subseteq 2\alpha3^{-k} \frac{\sqrt{m}}{\sqrt{1-\dist^2(\vec{A},\hat{\vec{A}})}} \Sq^m.\]
To establish an upper bound for the near-optimal cells of $\hat{\mathcal{T}}$ partitioning scheme, we consider the hypercube $2\alpha3^{-k} \frac{\sqrt{m}}{\sqrt{1-\dist^2(\vec{A},\hat{\vec{A}})}} \Sq^m $ as potentially near-optimal. We can then count the maximum number of cells in the $\hat{\mathcal{T}}$ partitioning scheme at height $h$ that can intersect with this region.

To invoke the Lemma \ref{lemma:cells_tiling}, we have $\kappa = 2\alpha3^{-k} \frac{\sqrt{m}}{\sqrt{1-\dist^2(\vec{A},\hat{\vec{A}})}}$ and we need the side-length of the cell $\hat{\mathcal{T}}_{h,i}$ cell or for simplicity, lower-bound to side-length will be $\kappa'$. And the side-length of $\hat{\mathcal{T}}_{h,i}$ cell is greater than $2\cdot 3^{-(k+1)} \frac{\alpha\sqrt{m}}{\sqrt{1-\dist^2(\vec{A},\hat{\vec{A}})}}$. Hence, the maximum number of cells of $\hat{\mathcal{T}}_{h,i}$ that can be tiled inside the hypercube with side length = $2\alpha3^{-k} \frac{\sqrt{m}}{\sqrt{1-\dist^2(\vec{A},\hat{\vec{A}})}}$ are

\begin{align*}
\left(1 + \cel{\frac{2\alpha3^{-k} \frac{\sqrt{m}}{\sqrt{1-\dist^2(\vec{A},\hat{\vec{A}})}}}{2\cdot 3^{-(k+1)} \frac{\alpha\sqrt{m}}{\sqrt{1-\dist^2(\vec{A},\hat{\vec{A}})}}}}  \right)^m   = 4^m
\end{align*}
Hence, upper-bound to the near-optimal calls of $\hat{\mathcal{T}}$ partitioning scheme is
\begin{align}\label{upper-bound-that}
\forall h \ge 0, \quad\mathcal{N}_{\hat{\mathcal{T}}}(3\nu_{\hat{\mathcal{T}}}\rho_{\hat{\mathcal{T}}}^h) \le 4^m \mathcal{N}_{\mathcal{T}}(3\nu_{\mathcal{T}}\rho_{\mathcal{T}}^h)
\end{align}
\end{proof}

\subsection{Proof of Lemma~\ref{lemma:pm_1_pm_near_opt_dim}}\label{apx:lemma:pm_1_pm_near_opt_dim}

\begin{proof}
From the Lemma~\ref{lemma:a_a_hear_near_optimal_cells}, we have
\begin{align}\label{upper-bound-that}
\forall h \ge 0, \quad\mathcal{N}_{\hat{\mathcal{T}}}(3\nu_{\hat{\mathcal{T}}}\rho_{\hat{\mathcal{T}}}^h) \le 4^m \mathcal{N}_{\mathcal{T}}(3\nu_{\mathcal{T}}\rho_{\mathcal{T}}^h)
\end{align}
Using the above relation, we relate the near-optimality dimension. Suppose, $\eta_{\mathcal{T}}(\nu_{\mathcal{T}}, \rho_{\mathcal{T}}, C_{\mathcal{T}})$ is the near optimality dimension of ${\mathcal{T}}$
then,
\begin{equation}\label{upper-bound-t}
    \forall h \ge 0, \quad \mathcal{N}_{\mathcal{T}}(3\nu_{\mathcal{T}}\rho_{\mathcal{T}}^h) \le C_{\mathcal{T}} \rho_{\mathcal{T}}^{-\eta_{\mathcal{T}} h}
\end{equation}

From Lemma~\ref{lemma:pm_1_pm_star_cell}, we have the following SequOOL parameter relations:
\begin{equation*}
   \nu_{\hat{\mathcal{T}}} = \max\{\nu_{\mathcal{T}}, g^* - \inf_{\Vec{z} \in \kappa_2 \alpha \Sq^m } g(\vec{A}\hat{\Vec{A}}^\T\Vec{z})\} /\rho_{\mathcal{T}}^{\tilde{h}_2}, \rho_{\hat{\mathcal{T}}} = \rho_{\mathcal{T}} \text{ where } \tilde{h}_2 = \heighttwo \text{and} \quad \kappa_2 = \kappatwo.
\end{equation*}
For brevity, denote $\nu'_{\mathcal{T}} = \max\{\nu_{\mathcal{T}}, g^* - \inf_{\Vec{z} \in \kappa_2 \alpha \Sq^m } g(\vec{A}\hat{\Vec{A}}^\T\Vec{z})\}$. Now, consider $3\nu_{\hat{\mathcal{T}}}\rho_{\hat{\mathcal{T}}}^h$:
\begin{align*}   &=3\nu'_{\mathcal{T}}\rho_{\mathcal{T}}^{h-\tilde{h}_2} \\
&= 3 \nu_{\mathcal{T}} \rho_{\mathcal{T}}^{\log_{\rho_{\mathcal{T}}}(\nu'_{\mathcal{T}}\rho_{\mathcal{T}}^{-\tilde{h}_2} ) - \log_{\rho_{\mathcal{T}}} (\nu_{\mathcal{T}})} \rho^h_{\mathcal{T}}\\
&\le  3 \nu_{\mathcal{T}} \rho_{\mathcal{T}}^{\flr{\log_{\rho_{\mathcal{T}}}(\nu'_{\mathcal{T}}\rho_{\mathcal{T}}^{-\tilde{h}_2} ) - \log_{\rho_{\mathcal{T}}} (\nu_{\mathcal{T}})}} \rho^h_{\mathcal{T}}\\
&=  3 \nu_{\mathcal{T}} \rho_{\mathcal{T}}^{-\tilde{h}_4} \rho^h_{\mathcal{T}} \tag{Denote $\tilde{h}_4 = -\flr{\log_{\rho_{\mathcal{T}}}(\nu'_{\mathcal{T}}\rho_{\mathcal{T}}^{-\tilde{h}_2} ) - \log_{\rho_{\mathcal{T}}} (\nu_{\mathcal{T}})} $}
\end{align*}

Next, we examine $\mathcal{N}_{\hat{\mathcal{T}}}(3\nu_{\hat{\mathcal{T}}}\rho_{\hat{\mathcal{T}}}^h)$:
\begin{align*}
   &\le \mathcal{N}_{\hat{\mathcal{T}}}(3 \nu_{\mathcal{T}} \rho_{\mathcal{T}}^{h-\tilde{h}_4} )  \\
   &\le 4^m \mathcal{N}_{{\mathcal{T}}} (3\nu_{\mathcal{T}}\rho_{\mathcal{T}}^{h-\tilde{h}_4}) \tag{Using Inequality \ref{upper-bound-that} and $\mathcal{N}_{\hat{\mathcal{T}}}(3 \nu_{\mathcal{T}} \rho_{\mathcal{T}}^{h}) \le \mathcal{N}_{\hat{\mathcal{T}}}(3 \nu_{\hat{\mathcal{T}}} \rho_{\hat{\mathcal{T}}}^{h}) $} \\
    &\le 4^m C_{\mathcal{T}} \rho_{\mathcal{T}}^{-\eta_{\mathcal{T}}(h-\tilde{h}_4)}  \tag{Using Inequality \ref{upper-bound-t} and $\forall h\ge \tilde{h}_4$}
\end{align*}
Therefore, we have:
\begin{equation}
    \mathcal{N}_{\hat{\mathcal{T}}}(3\nu_{\hat{\mathcal{T}}}\rho_{\hat{\mathcal{T}}}^h) \le 4^m C_{\mathcal{T}} \rho_{\mathcal{T}}^{\eta_{\mathcal{T}}\tilde{h}_4}  \rho_{\mathcal{T}}^{-\eta_{\mathcal{T}} h} \quad \forall h \ge \tilde{h}_4
\end{equation}

For heights $h \in [0: \tilde{h}_4 - 1]$, we can use the value of the right-hand side at depth $h = \tilde{h}_4$, which is $4^m C_{\mathcal{T}}$. Hence, we have:
\begin{align*}
     \mathcal{N}_{\hat{\mathcal{T}}}(3\nu_{\hat{\mathcal{T}}}\rho_{\hat{\mathcal{T}}}^h)
     &\le 4^m C_{\mathcal{T}} \rho_{\mathcal{T}}^{\eta_{\mathcal{T}}\tilde{h}_4}  \rho_{\mathcal{T}}^{-\eta_{\mathcal{T}} h} \quad \forall h \ge 0 \\
     &= 4^m C_{\mathcal{T}} \rho_{\mathcal{T}}^{\eta_{\mathcal{T}}\tilde{h}_4}  \rho_{\hat{\mathcal{T}}}^{-\eta_{\mathcal{T}} h} \quad \forall h \ge 0
\end{align*}

Therefore, we conclude that $\eta_{\hat{\mathcal{T}}} \le \eta_{\mathcal{T}}$ and $C_{\hat{\mathcal{T}}} = C_{\mathcal{T}} 4^m \rho_{\mathcal{T}}^{\eta_{\mathcal{T}} \tilde{h}_4} $.
\end{proof}

\subsection{Proof of Proposition ~\ref{prop:h_max_calcluation}}

\begin{proof}
Let $h_{\max} = \flr{\frac{n^2}{n \overline{\log} n + \frac{Tn}{3c}}}$ as given in the lemma statement. SequOOL opens $\flr{\frac{h_{\max}}{h}}$ cells at depth $h$ for all $h \in [1,h_{\max}]$. Additionally, we utilize $T$ samples for every $c$ heights to learn $\hat{f}$. The total number of samples used to learn $\hat{f}$ up to height $h_{\max}$ is thus $\flr{\frac{T h_{\max}}{c}}$. Each cell opening in SequOOL requires 3 samples. Therefore, the total number of openings performed by Algorithm 3 is given by  $\sum_{i=1}^{h_{\max}} \flr{\frac{h_{\max}}{i}} + \flr{\frac{Th_{\max}}{3c}}$. According to the proposition, we need to show that this quantity is $\le n$.
Consider, total number of openings:
\begin{align*}
   &=   \sum_{i=1}^{n} \flr{\frac{h_{\max}}{i}} + \flr{\frac{h_{\max} T  }{3c}}  \\
    &\le \flr{\frac{n^2}{n \overline{\log}{n} + \frac{Tn}{3c}} }(\sum_{i=1}^{n} \frac{1}{i} + \frac{T}{3c}) \tag{Definition of $h_{\max}$} \\
    &\le \frac{n^2}{n \overline{\log}{n} + \frac{Tn}{3c}} (\overline{\log}{n} + \frac{T}{3c}) \tag{Recall the definition: $\overline{\log} n \triangleq \sum_{t=1}^{n} \frac{1}{t}$} \\
    &= n
\end{align*}
Hence, the number of openings made in Algorithm 3 does not exceed $n$.

\subsection{Proof of Theorem~\ref{thm:main_theorem_1}}\label{pf:main_theorem_1_proof}

We start with the Theorem 5 of \citep{bartlett2019simple}. We restate the theorem, adapting it to our notation and incorporating the dependency of the parameters on the partitioning scheme $\mathcal{P}$:
\begin{theorem}[\citep{bartlett2019simple}, Theorem 5]\label{thm:bartlett_thm}
Let $W$ be the standard Lambert $W$ function. Suppose $f$ along the partitioning scheme $\mathcal{P}$ satisfies Assumption~\ref{asp:sequool} with associated $(\nu_{\mathcal{P}}, \rho_{\mathcal{P}})$, $C_{\mathcal{P}} >1,$ and near-optimality dimension $\eta_{\mathcal{P}} = \eta_{\mathcal{P}}(\nu_{\mathcal{P}}, C_{\mathcal{P}}, \rho_{\mathcal{P}})$ parameters. Then, after $n$ rounds, the simple regret of SequOOL is bounded as follows:
For $\eta_{\mathcal{P}} >0$, we use Corollary 6 of \citep{bartlett2019simple}. Let $\tilde{n} = \flr{n/\overline{\log} n}\eta_{\mathcal{P}} \log(1/\rho_{\mathcal{P}})/(C_{\mathcal{P}})$.
\[
\begin{minipage}{0.45\textwidth}
    \centering
    $\bullet$
    If $\eta_{\mathcal{P}} = 0, r_n
\le \nu_{\mathcal{P}}\rho_{\mathcal{P}}^{\frac{1}{C_{\mathcal{P}}}\lfloor\frac{n}{\overline{\log n}}\rfloor}$
\end{minipage}
\begin{minipage}{0.45\textwidth}
    \centering
    $\bullet$
    If $\eta_{\mathcal{P}} > 0, r_n \le \nu_{\mathcal{P}}\left(\frac{\tilde{n}}{\log{\tilde{n}}}\right)^{-\frac{1}{\eta_{\mathcal{P}}}}$
\end{minipage}
\]
\end{theorem}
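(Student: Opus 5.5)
This statement is Theorem~5 and Corollary~6 of \citet{bartlett2019simple}, only rewritten in our notation, so the intended proof is to check that the hypotheses match theirs and then invoke their result. For completeness I would reconstruct their argument as follows. Fix the partitioning $\mathcal{P}$ with parameters $(\nu_{\mathcal{P}},\rho_{\mathcal{P}})$ from Assumption~\ref{asp:sequool}, and near-optimality dimension $\eta_{\mathcal{P}}$ with constant $C_{\mathcal{P}}$ from Definition~\ref{def:sequool_near_optimality_dimension}. Set $h_{\max}=\flr{n/\overline{\log} n}$. SequOOL opens $\flr{h_{\max}/h}$ cells at each depth $h$, and the total number of openings is at most $h_{\max}\overline{\log} n\le n$. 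This is the same counting as in Proposition~\ref{prop:h_max_calcluation} with $T=0$.

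\textbf{Key induction.} Let $\perp_h$ be the deepest depth, up to $h$, at which the star cell $\mathcal{P}^*_{\perp_h}$ has been opened. I would show by induction on $h$ that if $\flr{h_{\max}/h}\ge C_{\mathcal{P}}\rho_{\mathcal{P}}^{-\eta_{\mathcal{P}} h}$, then $\perp_h=h$.

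Suppose $\mathcal{P}^*_{h-1}$ was opened. Then the child $\mathcal{P}^*_h$ is present at depth $h$, and by Assumption~\ref{asp:sequool} its representative has value at least $f^*-\nu_{\mathcal{P}}\rho_{\mathcal{P}}^h$. Every cell at depth $h$ whose representative value is at least this large is therefore in $\mathcal{N}_{\mathcal{P}}(3\nu_{\mathcal{P}}\rho_{\mathcal{P}}^h)$. There are at most $C_{\mathcal{P}}\rho_{\mathcal{P}}^{-\eta_{\mathcal{P}} h}$ such cells. So if SequOOL opens at least that many top-valued cells at depth $h$, the star cell must be among them.

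\textbf{Regret bound.} The output has value at least that of the representative of $\mathcal{P}^*_{\perp_{h_{\max}}+1}$, hence $r_n\le\nu_{\mathcal{P}}\rho_{\mathcal{P}}^{\bar h}$. Here $\bar h$ is the largest $h$ satisfying $h_{\max}/h\ge C_{\mathcal{P}}\rho_{\mathcal{P}}^{-\eta_{\mathcal{P}}h}$, with the floors absorbed by a standard rounding argument.

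\textbf{Solving for $\bar h$.} When $\eta_{\mathcal{P}}=0$ the condition reads $h\le h_{\max}/C_{\mathcal{P}}$. This gives $r_n\le\nu_{\mathcal{P}}\rho_{\mathcal{P}}^{\flr{n/\overline{\log} n}/C_{\mathcal{P}}}$.

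When $\eta_{\mathcal{P}}>0$, I would solve $h\rho_{\mathcal{P}}^{-\eta_{\mathcal{P}}h}=h_{\max}/C_{\mathcal{P}}$. Multiplying by $\eta_{\mathcal{P}}\log(1/\rho_{\mathcal{P}})$ turns it into $ye^{y}=\tilde n$, where $y=\eta_{\mathcal{P}}\log(1/\rho_{\mathcal{P}})h$. Hence $y=W(\tilde n)$ and
\[\rho_{\mathcal{P}}^{\bar h}=e^{-W(\tilde n)/\eta_{\mathcal{P}}}=\left(\frac{W(\tilde n)}{\tilde n}\right)^{1/\eta_{\mathcal{P}}}.\]
Finally, the bound $W(\tilde n)\le\log\tilde n$ converts this into $\nu_{\mathcal{P}}(\tilde n/\log\tilde n)^{-1/\eta_{\mathcal{P}}}$, which is Corollary~6.

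\textbf{Main obstacle.} The step needing the most care is the induction. The star cell may be outranked at depth $h$ by non-optimal cells, and one must confirm that every cell outranking it is near-optimal at the scale $3\nu_{\mathcal{P}}\rho_{\mathcal{P}}^h$ used in Definition~\ref{def:sequool_near_optimality_dimension}. The handling of floors when converting $\bar h$ into the stated rate is the secondary difficulty.
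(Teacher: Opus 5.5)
Your proposal is correct and takes the same route as the paper, which gives no argument of its own and simply imports Theorem~5 and Corollary~6 of \citet{bartlett2019simple}. Your reconstruction is exactly their argument: an induction showing the star cell is opened at each depth, followed by Lambert-$W$ inversion of $h\rho_{\mathcal{P}}^{-\eta_{\mathcal{P}}h}=h_{\max}/C_{\mathcal{P}}$. The induction works because the opening condition is monotone in $h$, and because the near-optimal cell count is an integer, so the condition is implied by the real inequality $h_{\max}/h\ge C_{\mathcal{P}}\rho_{\mathcal{P}}^{-\eta_{\mathcal{P}}h}$. One point to make explicit: the final step $W(\tilde n)\le\log\tilde n$ requires $\tilde n\ge e$, an implicit large-$n$ condition that the restated theorem also omits.
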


To invoke this Theorem for our proof, first we apply the theorem for the partitioning scheme $\hat{\mathcal{A}}$. \ref{lemma:pm_1_pm_near_opt_dim} shows that $\hat{\mathcal{A}}$ is a valid partioning scheme, i.e., it satisfies \ref{asp:sequool}, hence we can invoke Theorem~\ref{thm:bartlett_thm}.

Thus, for our partitioning scheme $\hat{\mathcal{A}}$, denoting $\tilde{n} = \flr{n/\overline{\log} n}\eta_{\hat{\mathcal{A}} }\log(1/\rho_{\hat{\mathcal{A}}})/(C_{\hat{\mathcal{A}}})$, the regret is bounded by

\[
\begin{minipage}{0.45\textwidth}
    \centering
    $\bullet$
    If $\eta_{\hat{\mathcal{A}}} = 0, r_n \le \nu_{\hat{\mathcal{A}}}\rho_{\hat{\mathcal{A}}}^{\frac{1}{C_{\hat{\mathcal{A}}}}{\lfloor\frac{n}{\overline{\log n}}\rfloor}}$
\end{minipage}
\begin{minipage}{0.45\textwidth}
    \centering
    $\bullet$
    If $\eta_{\hat{\mathcal{A}}} > 0, r_n \le \nu_{\hat{\mathcal{A}}}\left(\frac{\tilde{n}}{\log{\tilde{n}}}\right)^{-\frac{1}{\eta_{\hat{\mathcal{A}}}}}$
\end{minipage}
\]

Corollary~\ref{corollary:relate-pm1-p2} relates SequOOL parameters and gives,
\begin{align*}
    &\rho_{\hat{\mathcal{A}}} = \rho^{\beta}, \nu_{\hat{\mathcal{A}}} = \max\{\max\{\nu, l_f\}\rho^{(1-\beta)(m-1)-\tilde{h}_1}, l_g\}\rho^{-\tilde{h}_2 }_\mathcal{A}, \\
&\nearoptd_{\hat{\mathcal{A}}}(\nu_{\hat{\mathcal{A}}}, \rho_{\hat{\mathcal{A}}}, C_{\hat{\mathcal{A}}}) \le
    \frac{\nearoptd_{\mathcal{P}}(\nu, \rho, C)}{\beta}, C_{\hat{\mathcal{A}}} =  3^d d^{d-m} (12 \sqrt{m})^m C \rho^{-\nearoptd_{\mathcal{P}}\tilde{h}_3} 4^m \rho^{\eta_{\mathcal{P}} \tilde{h}_4}.
\end{align*}
Now, we substitute these relations in our regret bound to get the upper bound in terms of default partitioning scheme $\mathcal{P}$ parameters.
\begin{equation*}
    r_n \leq
    \begin{cases}
       \max\{\max\{\nu, l_f\}\rho^{(1-\beta)(m-1)-\tilde{h}_1}, l_g\}\rho^{-\beta\tilde{h}_2 } \rho^{\frac{\beta}{C_1} {\lfloor\frac{n}{\overline{\log n}}\rfloor}}
        & \text{ if } \eta_{\mathcal{P}} = 0, \\
        \max\{\max\{\nu, l_f\}\rho^{(1-\beta)(m-1)-\tilde{h}_1}, l_g\} \rho^{-\beta\tilde{h}_2 }  \left(\frac{\tilde{n}}{\log{\tilde{n}}}\right)^{-\frac{\beta}{\eta_{\mathcal{P}}}}
        & \text{ if } \eta_{\mathcal{P}} > 0,
    \end{cases}
\end{equation*}

Where $C_1 = 3^d d^{d-m} (12 \sqrt{m})^m C \rho^{-\eta_{\mathcal{P}} \tilde{h}_3} 4^m \rho^{\eta_{\mathcal{P}} \tilde{h}_4}$
and $\tilde{n} = \flr{n/\overline{\log} n}\eta_{\mathcal{P}} \log(1/\rho)/C_1$

with $\tilde{h}_2 = \heighttwo$ and $\tilde{h}_1 = d\cel{\log_3{\kappaone}}$ with $\kappa_2 = \kappatwo$

\begin{align*}
    \tilde{h}_3 = -\flr{\log_{\rho}(\max\{\nu, l_f\} \rho^{(1-\beta)(m-1) - \tilde{h}_1}) - \log_{\rho}(\nu)}, \tilde{h}_4 = \heightfour
\end{align*}

\end{proof}

\section{Additional Experiment Details \& Results}

\subsection{Test Functions Experiments}\label{sub:test_fun_exp}

We implemented SequOOL, SOO, and RESOO ourselves due to the absence of publicly available open-source code for these algorithms. For DiRect and Dual Annealing, we utilized the implementations provided in the SciPy library's optimize module. The CMA-ES algorithm was sourced from its dedicated project repository\footnote{\url{https://github.com/CyberAgentAILab/cmaes}}. REMBO and HesBO implementations were derived from the original HesBO repository\footnote{\url{https://github.com/aminnayebi/HesBO}}. Bayesian Optimization was implemented using the repository \footnote{\url{https://github.com/bayesian-optimization/BayesianOptimization}}.

\subsection{Multi-Index Functions Results}\label{sec:low-dim-multi-index-results}

We present additional experimental results to further demonstrate the effectiveness of our approach. Figure \ref{fig:5d-2m} showcases the performance of various algorithms on low-dimensional multi-index functions with $d=5$ and $m=2$. Our algorithm consistently achieves lower regret across different test functions, including Sphere, Branin, Ellipsoid, and Rastrigin, often reaching zero regret with fewer samples compared to competing methods.

\begin{figure}
    \centering
    \includegraphics[width=\linewidth]{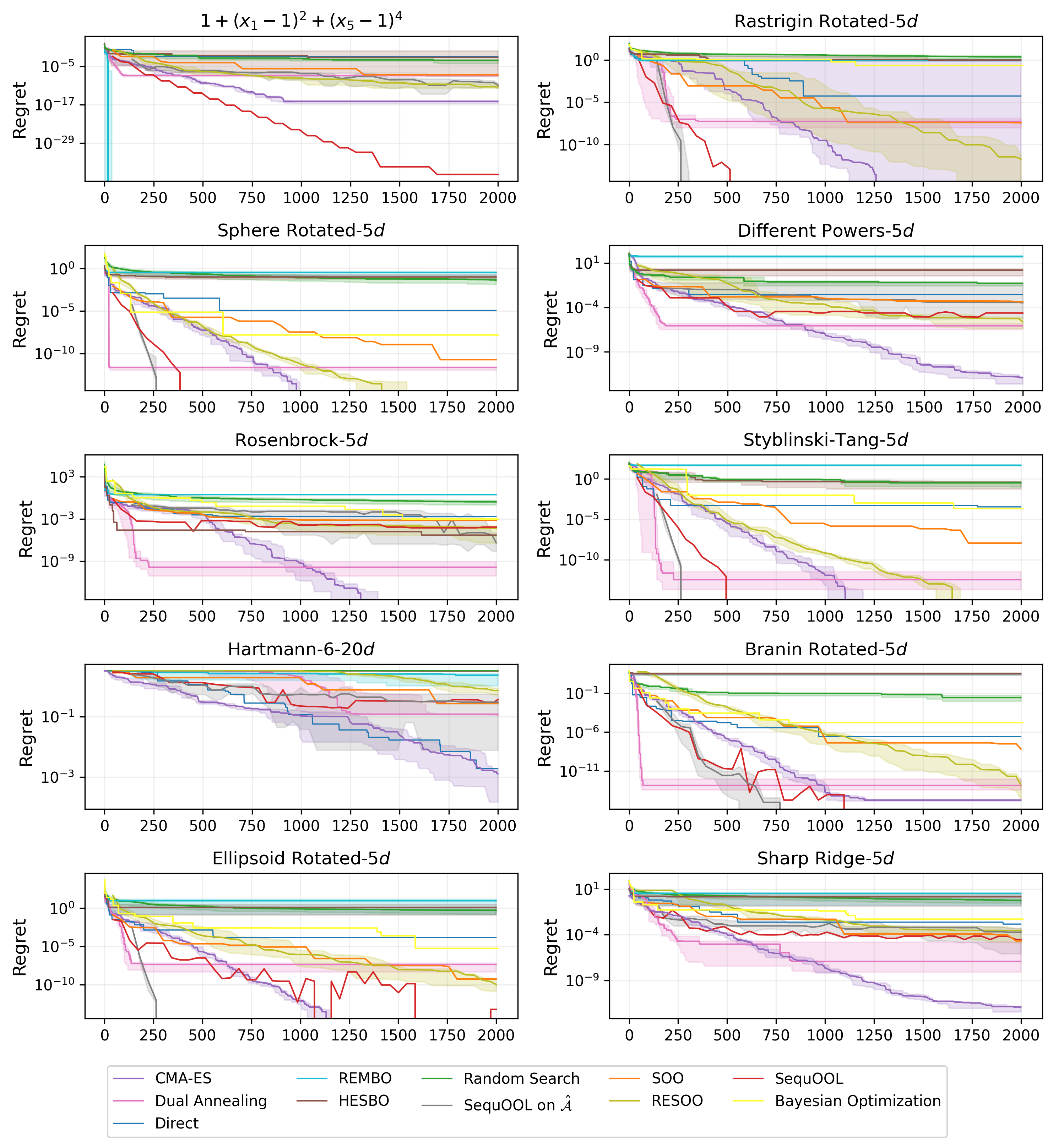}
    \caption{Regret Plots: Legend for the plots are arranged in the order of their performance. Algorithm~\ref{alg:svd_partitioning} (SequOOL on $\hat{\mathcal{A}}$) uses 100 additional samples to learn the subspace through the \citet{fornasier2012learning} approach. Our Algorithm, SOO, RESOO, SequOOL are budget algorithms, so we run these algorithms using 100 equally spaced budget values between 1 and 2000 and plot the regret at the end of each run. For the randomized algorithms, we took 10 trials and plotted the median curve (thick line) and 0 and 95 percentile curves. Random Search is run on $\hat{\mathcal{A}}$.}
    \label{fig:5d-2m}
\end{figure}

\subsection{Training Details of LLM Quantization Experiment}\label{sub:llm_experiment_details}

We implemented our Large Language Model (LLM) code on hardware equipped with one Quadro RTX 5000 GPU having 16GB VRAM. For comparison, we ran AWQ baselines using the original authors' code, which also served as a foundation for developing our proposed method.

To optimize the neural network used in Algorithm \ref{alg:lookahead} for our LLM Quantization objective function, we employed the Ray package for hyper-parameter tuning  \footnote{\url{https://github.com/ray-project/ray}}. We used Adam optimizer and our search space included hidden layer sizes (500, 1000, 2000, 3000), learning rates (log-uniform from $1 \times 10^{-4}$ to $1 \times 10^{-1}$), weight decay (log-uniform from $1 \times 10^{-2}$ to $1 \times 10^{-1}$), and learning rate Step Decay with gamma values (uniform from 0.9 to 0.99), and step sizes (500, 1000, 2000). We utilized early stopping to prevent overfitting.

The neural network was retrained on SequOOL-collected samples every 5 heights, with the look-ahead strategy applied up to a height of 60 and performed round-robin direction selection after this height.

\vfill

\end{document}